\documentclass{article}
\usepackage[utf8]{inputenc}

\usepackage{etoolbox}

\newcommand{\arxiv}[1]{\iftoggle{icml}{}{#1}}
\newcommand{\icml}[1]{\iftoggle{icml}{#1}{}}

\newtoggle{icml}
\global\togglefalse{icml} %

\usepackage{txie}

\icml{\usepackage{icml2026}}

\usepackage{color-edits}

\addauthor[Tengyang]{tx}{teal}
\addauthor[Haoqun]{hq}{blue}

\arxiv{
\usepackage{geometry}
\geometry{letterpaper, margin=1in}}
 
\makeatletter
\arxiv{
\usepackage{parskip}
\def\thm@space@setup{%
  \thm@preskip=\parskip
  \thm@postskip=\parskip
}
}
\makeatother
\usepackage{algorithm}
\usepackage{algpseudocode}

\let\oldparagraph\paragraph
\renewcommand{\paragraph}[1]{\oldparagraph{#1.}}

\arxiv{
\usepackage[T1]{fontenc}
\usepackage{lmodern}
}
\usepackage[scaled=.91]{helvet}
\usepackage{inconsolata}
\usepackage{microtype}
\usepackage{bm}
\usepackage{float}

\allowdisplaybreaks

\newcommand{\titlestr}{Understanding Behavior Cloning with Action Quantization}

\arxiv{\title{\titlestr{}}}

\arxiv{
\title{\titlestr{}}
\author{
Haoqun Cao, Tengyang Xie\\[0.3em]
 University of Wisconsin--Madison\\[0.2em]
\normalsize\texttt{hcao65@wisc.edu, tx@cs.wisc.edu}
}
\date{}
}
\icml{\author{}}

\icml{\icmltitlerunning{\titlestr{}}}

\begin{document}
\arxiv{\maketitle}
\icml{
\twocolumn[
\icmltitle{\titlestr{}}

\begin{icmlauthorlist}
\icmlauthor{Haoqun Cao}{uwmadison}
\icmlauthor{Tengyang Xie}{uwmadison}
\end{icmlauthorlist}

\icmlaffiliation{uwmadison}{Department of Computer Sciences, University of Wisconsin–Madison, Madison, WI, USA}
\icmlcorrespondingauthor{Haoqun Cao}{hcao65@wisc.edu}
\icmlcorrespondingauthor{Tengyang Xie}{tx@cs.wisc.edu}

\icmlkeywords{Behavior Cloning, Imitation Learning, Reinforcement Learning, Machine Learning}

\vskip 0.3in
]

\printAffiliationsAndNotice{}  %
}

\begin{abstract}
Behavior cloning is a fundamental paradigm in machine learning, enabling policy learning from expert demonstrations across robotics, autonomous driving, and generative models. Autoregressive models like transformer have proven remarkably effective, from large language models (LLMs) to vision-language-action systems (VLAs). However, applying autoregressive models to continuous control requires discretizing actions through quantization, a practice widely adopted yet poorly understood theoretically.

This paper provides theoretical foundations for this practice. We analyze how quantization error propagates along the horizon and interacts with statistical sample complexity. We show that behavior cloning with quantized actions and log-loss achieves optimal sample complexity, matching existing lower bounds, and incurs only polynomial horizon dependence on quantization error, provided the dynamics are stable and the policy satisfies a probabilistic smoothness condition. We further characterize when different quantization schemes satisfy or violate these requirements, and propose a model-based augmentation that provably improves the error bound without requiring policy smoothness. Finally, we establish fundamental limits that jointly capture the effects of quantization error and statistical complexity.
\end{abstract}

\section{Introduction}
Behavior cloning (BC), the approach of learning a policy from expert demonstrations via supervised learning \citep{pomerleau1988alvinn,bain1995framework}, has emerged as a foundational paradigm across artificial intelligence. In robotics and autonomous driving, BC enables learning complex manipulation and navigation skills directly from human demonstrations \citep{black2024pi_0}. In generative AI, next-token prediction with cross-entropy (the standard pretraining objective for large language models) can be viewed as behavior cloning from demonstration data.

A key architectural choice driving recent progress is the use of \emph{autoregressive} (AR) models. AR transformers have proven remarkably effective both in language modeling and, more recently, in vision-language-action (VLA) models for robotics \citep{brohan2022rt,chebotar2023q,kim2024openvla}. However, applying AR models to continuous control introduces a fundamental design decision: continuous action signals must be \emph{quantized} (or \emph{tokenized}) into discrete symbols. This involves mapping each continuous action vector to a finite token via a quantizer (e.g., per-dimension binning), after which the learner models a distribution over a finite action alphabet. Quantization can (i) reduce effective model complexity, (ii) improve coverage of the hypothesis class under finite-data constraints, and (iii) leverage state-of-the-art transformer architectures designed for discrete prediction \citep{driess2025knowledge}. While action quantization has been widely explored in practice, from uniform binning \citep{brohan2022rt,brohan2024rt} to learned vector quantization \citep{lee2024behavior,belkhale2024minivla} and time-series compression \citep{pertsch2025fast}, its theoretical underpinnings remain poorly understood.

This paper aims to provide theoretical foundations for understanding when and why action quantization works (or fails) in behavior cloning. Raw BC is already vulnerable to \emph{distribution shift}: small deviations from the expert can drive the learner into states rarely covered by training data. Action quantization introduces an additional, inevitable mismatch; even with perfect fitting, the quantizer induces nonzero distortion that can compound over long horizons. We study how \emph{quantization error} and \emph{statistical estimation error} jointly propagate in finite-horizon MDPs. While prior work analyzes statistical limits of BC \citep[e.g.,][]{ross2010efficient,ross2011reduction, rajaraman2020toward, foster2024behavior} and, separately, optimal lossy quantization \citep[e.g.,][]{widrow1996statistical, ordentlich2025optimal}, there is little understanding of their \emph{interaction}; and this paper aims to fill this gap.

\paragraph{Contributions}
We study behavior cloning with log-loss \citep{foster2024behavior} under action quantization, and make the following contributions: 
\begin{enumerate}
    \item We establish an upper bound on the regret as a function of the sample size  and the quantization error, assuming stable dynamics and smooth quantized policies (expert and learner).
    \item We show that a general quantizer can have small \emph{in-distribution} quantization error yet still violate the smoothness requirement, which can lead to large regret; in contrast, binning-based quantizers are better behaved in this respect.
    \item We propose a model-based augmentation that bypasses the smoothness requirement on the quantized policy and yields improved horizon dependence for the quantization term.
    \item We prove information-theoretic lower bounds that depend on both the sample size and the quantization error, and show that our upper bound generally matches these limits.
\end{enumerate}

\subsection{Related Work}
\paragraph{Sample Complexity and Fundamental Limits of Imitation Learning}
The sample complexity of behavior cloning with 0-1 loss is studied in \citet{ross2011reduction} and \citet{rajaraman2020toward}. Let the sample size be $n$ and the horizon be $H$. Essentially, a regret of $\frac{H^2}{n}$ is established for deterministic and $\tilde O(\frac{H^2}{n})$ up to
logarithmic factors for stochastic experts, with a modified algorithm in tabular setting. In a recent work of \citet{foster2024behavior}, they study behavior cloning with Log-loss in a function class $\Pi$, and through a more fine-grained analysis through Hellinger distance, they establish a upper bound of $\frac{H\log|\Pi|}{n}$ for deterministic experts and $H\sqrt\frac{\log|\Pi|}{ n}$ for stochastic experts with realizability and finite class  assumptions. In terms of lower bound, \citet{rajaraman2020toward} showed in tabular setting that $\frac{H^2}{n}$ is tight. For stochastic experts, \citet{foster2024behavior} showed that $\sqrt{\frac{1}{n}}$ is inevitable if the we allow the expert to be suboptimal.

\paragraph{Behavior Cloning in Continuous Control Problem}
Another line of research studies behavior cloning through a control-theoretic lens \citep{chi2023diffusion,pfrommer2022tasil, block2023provable, simchowitz2025pitfalls, zhang2025imitation}. They focus on deterministic control problems where both the expert policy and the dynamics are deterministic, and use stability conditions to bound the rollout deviation in the metric space. Essentially, they point out that in continuous action spaces, imitating a deterministic expert under deterministic transitions is difficult, and cannot be done with $0$--$1$ or log loss \citep{simchowitz2025pitfalls}. Our work also tries to address this issue by discretizing the continuous action space via quantization, while controlling the quantization error.

\paragraph{Quantization in Generative Models}
Quantization methods are widely used in modern generative models, where data are first compressed into discrete representations and then learned through a generative model \citep{van2017neural,esser2021taming,tian2024visual}. In sequential decision-making settings such as robotic manipulation, there is also a growing body of work that adopts action quantization. \citet{brohan2022rt,brohan2024rt,kim2024openvla} use a binning quantizer that discretizes each action dimension into uniform bins. Other works adopt vector-quantized action representations \citep{lee2024behavior,belkhale2024minivla,mete2024quest}: they train an encoder--decoder with a codebook as a vector quantizer under a suitable loss, and then model the distribution over the resulting discrete codes. \citet{dadashi2021continuous} study learning discrete action spaces from demonstrations for continuous control. More recently, \citet{pertsch2025fast} perform action quantization via time-series compression. Despite strong empirical progress, theoretical guarantees for these widely used settings remain limited.

\section{Preliminaries}

\subsection{Background}
\paragraph{Markov Decision Process}
We consider such a finite-horizon Markov decision processes. Formally, a Markov decision process $M=(\mathcal X, \mathcal U, T, r, H)$. Among them $\mathcal X\subset \mathbb R^{d_{x}}, \mathcal U\subset \mathbb R^{d_u}$ are continuous state and action spaces. The horizon is $H$. $T$ is probability transition functions $T=\{T_h\}_{h=0}^{H}$, where $T_h:\mathcal X \times \mathcal U \rightarrow \Delta(\mathcal X), h\geq 1$ and $T_0(\emptyset)$ is the initial state distribution. 
$r$ is a reward function $r=\{r_h\}, r_h:\mathcal X \times \mathcal U\rightarrow [0,1]$. A (time-variant, markovian) policy is a sequence of conditional probability function $\pi = \{\pi_h:\mathcal X \rightarrow \Delta(\mathcal U)\}$. In this paper, we assume there is an expert policy denoted with $\pi^*$. We denote the distribution of the whole trajectory $\tau=(x_{1:H+1}, u_{1:H})$ generated by deploying $\pi$ on dynamic $T$ as $\mathbb P^{\pi, T}$. We denote the accumulative reward under $\pi$ (and transition T) as $J(\pi)=\mathbb E_{\mathbb P^{\pi, T}}[\sum_{h=1}^H r(x_h, u_h)]$.

\paragraph{Quantizer}
Let $q:\mathcal U\to \tilde{\mathcal U}$ be a (measurable) quantizer, where
$\tilde{\mathcal U}\subset \mathcal U$ is a finite set of representative actions.
For any policy $\pi$, define the pushforward (quantized) policy
\begin{equation*}
    (q\#\pi)_h(\tilde u\mid x)\ \coloneqq \ \pi_h\big(q^{-1}(\tilde u)\mid x\big),\tilde u\in\tilde{\mathcal U}.
\end{equation*}
For any $(x,\tilde u)$ such that $\pi_h^*(q^{-1}(\tilde u)\mid x)>0$, we define the
\emph{expert-induced dequantization kernel}
$\rho_h(\cdot\mid \tilde u,x)\in\Delta(\mathcal U)$ as the conditional law

\icml{
\begin{align*}
    &\rho_h(\cdot\mid \tilde u,x)\coloneqq 
\mathrm{Law}\big(u_h \mid \tilde u_h=\tilde u, x_h=x\big),\\&u_h\sim\pi_h^*(\cdot\mid x_h),\tilde u_h=q(u_h).
\end{align*}
}

\arxiv{
\begin{equation*}
    \rho_h(\cdot\mid \tilde u,x)\coloneqq 
\mathrm{Law}\big(u_h \mid \tilde u_h=\tilde u, x_h=x\big),u_h\sim\pi_h^*(\cdot\mid x_h),\tilde u_h=q(u_h).
\end{equation*}
}

Equivalently,
\icml{
\begin{equation*}
\scalebox{0.98}{$
\rho_h(du\mid \tilde u,x)
=
\frac{\mathbf 1\{u\in q^{-1}(\tilde u)\}\pi_h^*(du\mid x)}{\pi_h^*(q^{-1}({\tilde u})\mid x)}, \textnormal{ if } \pi_h^*(q^{-1}(\tilde u)\mid x)>0.
$}
\end{equation*}

}
\arxiv{
\begin{equation*}
    \rho_h(du\mid \tilde u,x)
=
\frac{\mathbf 1\{u\in q^{-1}(\tilde u)\}\pi_h^*(du\mid x)}{\pi_h^*(q^{-1}({\tilde u})\mid x)}, \textnormal{ if } \pi_h^*(q^{-1}(\tilde u)\mid x)>0.
\end{equation*}
}

For $(x,\tilde u)$ such that $\pi_h^*(q^{-1}(\tilde u)\mid x)=0$, we extend
$\rho_h(\cdot\mid \tilde u,x)$ by setting it to an arbitrary reference distribution
$\nu_{\tilde u}$ supported on $q^{-1}(\tilde u)$. As a result, the kernel $\rho_h(\cdot\mid \tilde u,x)$ is now specified for all
$(x,\tilde u)\in\mathcal X\times\tilde{\mathcal U}$. Now,
 given any  quantized policy $\hat\pi_h(\cdot\mid x)\in\Delta(\tilde{\mathcal U})$,
we define the induced raw-action policy by mixing $\rho_h$:
\begin{equation}\label{eq:dequantized_learner}
(\rho\circ \hat\pi)_h(du\mid x)\ \coloneqq \ \sum_{\tilde u\in\tilde{\mathcal U}}
\rho_h(du\mid \tilde u,x)\hat\pi_h(\tilde u\mid x).
\end{equation}
Similarly, define the $\rho$-perturbed transition kernel by
\begin{equation}\label{eq:perturbed_transition}
    (T\circ \rho)_h(\cdot\mid x,\tilde u)
    \coloneqq \int_{\mathcal U} T_h(\cdot\mid x,u)\rho_h(du\mid \tilde u,x).
\end{equation}
By construction, $\rho_h(\cdot\mid \tilde u,x)$ is supported on $q^{-1}(\tilde u)$, hence
$q\#(\rho\circ \hat \pi)=\hat \pi$. Moreover, since $\rho$ is the expert-induced kernel,
we also have $\rho\circ(q\#\pi^*)=\pi^*$.

\paragraph{Type of Quantizers}
Fix a metric $\|\cdot\|$ on $\mathcal U$. Later we assume the reward is Lipschitz and the dynamics are stable with respect to this metric (typically the Euclidean norm). We consider the following two quantizers: the binning-based one and a learning-based quantizer. Let $\varepsilon_q>0$ be a small quantity that denotes the quantization error of the quantizer. For binning-based quantizer, we assume that it holds, 
\begin{equation}\label{eq:binning_quantizer}
    \|q(u)-u\|\leq \varepsilon_q, \forall u \in \mathcal U.
\end{equation}
For a learning-based quantizer, we assume,
\begin{equation}\label{eq:quantization_error_def}
    \varepsilon_q = \frac{1}{H}\mathbb E_{\mathbb P^{\pi^*, T}}\left[\sum_{h=1}^H \|u_h-q( u_h)\|\right].
\end{equation}
Notice that the binning-based quantizer also satisfies \cref{eq:quantization_error_def}, so it is a special case of a learning-based quantizer. Later, we will express the quantization error in terms of $\varepsilon_q$.
\subsection{Behavior Cloning}\label{sec:bc_intro}
We will consider doing behavior cloning in a user-specified policy class $\Pi \subset \{\pi_h:\mathcal X \to \Delta(\mathcal U)\}_{h=1}^H$. In most of this paper, $\Pi$ will consist of \emph{quantized} policies, in which case each $\pi_h$ is supported at $\tilde {\mathcal U}$. We will also use the same notation for a general policy class when the intended output space is clear from the context.
\paragraph{Warm-up: log-loss BC with raw actions \citep{foster2024behavior}}
In the standard setting where raw actions are observed, log-loss BC solves
\icml{
\begin{align*}
    \hat\pi &\in \argmax_{\pi\in\Pi}\sum_{i=1}^n\sum_{h=1}^H \log \pi_h(u_h^i\mid x_h^i),\\&\{x_{1:H+1}^i,u_{1:H}^i\}_{i=1}^n \sim \mathbb P^{\pi^*,T}.
\end{align*}
}
\arxiv{
\begin{equation*}
    \hat\pi \in \argmax_{\pi\in\Pi}\sum_{i=1}^n\sum_{h=1}^H \log \pi_h(u_h^i\mid x_h^i), \{x_{1:H+1}^i,u_{1:H}^i\}_{i=1}^n \sim \mathbb P^{\pi^*,T}.
\end{equation*}
}
Since the trajectory density under $\mathbb P^{\pi,T}$ factorizes as
$T_1(x_1)\prod_{h=1}^H \pi_h(u_h\mid x_h)T_h(x_{h+1}\mid x_h,u_h)$ and $T$ does not depend on $\pi$,
the above objective is exactly the MLE over the family $\{\mathbb P^{\pi,T}\}_{\pi\in\Pi}$.

For rewards in $[0,1]$, we have the standard coupling bound
\[
J(\pi^*)-J(\hat\pi)\ \le\ H\cdot \mathrm{TV}\left(\mathbb P^{\pi^*,T},\mathbb P^{\hat\pi,T}\right).
\]
Moreover, the TV term can be controlled by the (squared) Hellinger distance $D_H^2$:
in \cref{sec:supervised_learning_guarantee} we show that
\begin{align*}
\mathrm{TV}\left(\mathbb P^{\pi^*,T},\mathbb P^{\hat\pi,T}\right)
&\lesssim D_H^2\left(\mathbb P^{\pi^*,T},\mathbb P^{\hat\pi,T}\right),
\text{$\pi^*$ deterministic},\\
\mathrm{TV}\left(\mathbb P^{\pi^*,T},\mathbb P^{\hat\pi,T}\right)
&\lesssim \sqrt{D_H^2\left(\mathbb P^{\pi^*,T},\mathbb P^{\hat\pi,T}\right)},
\text{in general }.
\end{align*}
Combining these with standard MLE guarantees in Hellinger distance \citep[e.g.,][]{geer2000empirical}
yields regret rates $O(H\log|\Pi|/n)$ for deterministic experts and $O(H\sqrt{\log|\Pi|/n})$ for stochastic experts,
matching \citet{foster2024behavior}.

\paragraph{Log-loss BC with action quantization}
Now suppose the learner only observes quantized actions $\tilde u_h^i \coloneqq  q(u_h^i)$ and learns a
quantized policy $\hat\pi_h(\cdot\mid x)\in\Delta(\tilde{\mathcal U})$.
Then the observed data distribution over $(x,\tilde u)$ is $\mathbb P^{q\#\pi^*,(T\circ\rho)}$.
Log-loss BC,
\begin{equation}\label{eq:Log-loss BC with Action Quantization}
    \hat \pi \in \argmax_{\pi \in \Pi}\sum_{i=1}^n\sum_{h=1}^H \log \pi_h(\tilde u_h^i\mid x_h^i),
\end{equation}
is the MLE over the family $\{\mathbb P^{\pi,(T\circ\rho)}\}_{\pi\in\Pi}$ and therefore (statistically)
controls the discrepancy between $\mathbb P^{q\#\pi^*,(T\circ\rho)}$ and $\mathbb P^{\hat\pi,(T\circ\rho)}$. However, deployment executes representative actions $\tilde u\in\tilde{\mathcal U}\subset\mathcal U$ in the original
environment, generating rollouts from $\mathbb P^{\hat\pi,T}$.
In general, guarantee from optimizing \cref{eq:Log-loss BC with Action Quantization} does not directly translate to
control of the rollout distribution $\mathbb P^{\hat\pi,T}$ since $\mathbb P^{\hat \pi, (T\circ \rho)}\neq \mathbb P^{\hat \pi, T}$.
Thus, beyond the usual statistical estimation error, one must account for an additional
quantization-induced mismatch between the distribution being learned and the distribution being executed,
which is an approximation effect of discretizing a continuous action space and is not eliminated by increasing $n$. Our goal is to characterize when this mismatch does not
compound badly with $H$, so that log-loss BC remains learnable under quantization.

\section{Regret Analysis Under Stable Dynamic and Smooth Policy}\label{sec:upper_bound_with_stability_and_smoothness}
Intuitively, quantization causes the learner to take actions that deviate from the expert's actions in certain metric space. Such deviations are then propagated through the dynamics. Therefore, stability of the dynamics mitigates this error amplification. Similarly, if the expert policy is non-smooth, the quantized actions may be substantially suboptimal relative to the expert actions, which can also lead to large regret. In this section, we show that log-loss BC is learnable provided that the dynamics are stable and the expert policy is smooth.

\subsection{Incremental Stability and Total Variation Continuity}
Here we introduce the notions of stability and smoothness needed for our results. Our stability notion comes from a concept that has been extensively studied in control theory \citep[e.g.,][]{sontag1989smooth,lohmiller1998contraction,angeli2002lyapunov} and recently leveraged in provable imitation learning \citep[e.g.,][]{pfrommer2022tasil, block2023provable, simchowitz2025pitfalls, zhang2025imitation}. We slightly modify this notion to make it applicable to stochastic dynamics and stochastic policies. To this end, we begin by representing the transition kernel via an explicit noise variable, and then introduce a coupling between two trajectory distributions induced by shared noise.

\begin{definition}[Noise representation of a transition kernel]
We say that $T_h$ admits a \emph{noise representation} if there exist a measurable space $\Omega$,
a probability measure $P_{W_h}\in\Delta(\Omega)$, and a measurable map
$f_h:\mathcal X\times\mathcal U\times\Omega\to\mathcal X$ such that for all $(x,u)\in\mathcal X\times\mathcal U$,
\begin{equation*}
    \text{if }\omega\sim P_{\omega,h},\text{ then }f_h(x,u,\omega)\sim T_h(\cdot\mid x,u).
\end{equation*}
In particular, the initial state distribution can be represented as $x_1=f_0(\omega_0)$ with
$\omega_0\sim P_{\omega,0}$.
\end{definition}

We will make such an assumption on the underlying dynamic $f_h$ throughout the paper, which holds in general cases if $T_h$ is a density or is deterministic as we show in \cref{sec:discussion_on_assumption}.

\begin{assumption}\label{assum:invertibility}
For each $h\in\{0,\ldots, H\}$, the transition kernel $T_h$ admits a noise representation $(f_h,P_{W_h})$
such that, for every $(x,u)\in\mathcal X\times\mathcal U$, the map
$\omega \mapsto f_h(x,u,\omega)$ is injective $P_{W_h}$-almost surely
(i.e., given $(x_h,u_h,x_{h+1})$, the corresponding noise $\omega_h$ is uniquely determined,
$P_{W_h}$-a.s.).
\end{assumption}

\begin{definition}(\textnormal{Shared-noise coupling})
Fix a noise representation $(f_h,P_{W_h})_{h=0}^H$ satisfying the invertibility assumption.
Given a trajectory $\tau=(x_{1:H+1},u_{1:H})$, define $\omega_0(\tau)$ as the (a.s. unique) element such that
$x_1=f_0(\omega_0(\tau))$, and for each $h\in[H]$ define $\omega_h(\tau)$ as the (a.s. unique) element such that
\begin{equation*}
x_{h+1}=f_h\big(x_h,u_h,\omega_h(\tau)\big).
\end{equation*}
Given two policies $\pi^0$ and $\pi^1$, a coupling $\mu$ of $\mathbb P^{\pi^0,T}$ and $\mathbb P^{\pi^1,T}$
is called a \textbf{shared-noise coupling} if, for $\mu$-a.e.\ $(\tau^0,\tau^1)$,
\begin{equation*}
\omega_h(\tau^0)=\omega_h(\tau^1),\forall h\in\{0,1,\ldots,H\}.
\end{equation*}
\end{definition}

With the above two definitions in place, we introduce a probabilistic notion of incremental input-to-state stability (P-IISS). We use a modulus $\gamma$ that maps any finite list of nonnegative scalars to a nonnegative scalar. 
We say $\gamma$ is of class $\mathcal K$ if it is continuous, coordinate-wise increasing, and satisfies $\gamma(0,\ldots,0)=0$. We will write $\gamma\left((r_t)_{t=1}^k\right)$ to represent $\gamma(r_1,\ldots,r_k)$. We will also use $\|\cdot\|$ for the state metric (usually it is the Euclidean norm).

\begin{definition}\label{def:P-IISS}(\textnormal{Probabilistic Incremental-Input-to-State-Stability)} Define events,
\begin{align*} 
&\Phi_h = \{\|u_t^0-u_t^1\|\leq d, \forall t\in[h]\}, \\
&\Psi_h = \{\|x_{t+1}^0-x_{t+1}^1\|\leq \gamma\left((\|u^0_k-u^1_k\|)_{k=1}^t\right),\forall t\in[h]\}. 
\end{align*} 
We call the trajectory distribution $\mathbb P^{\pi^0, T}$ $(\gamma, \delta)$-$d$-locally probabilistically incremental-input-to-state stable (P-IISS) if and only if there exists $\gamma\in \mathcal K$, for any other policy $\pi^1$ and the related trajectory distribution $\mathbb P^{\pi^1, T}$,  it holds that for any \textbf{shared noise coupling} $\mu$, $\mu(\Phi_h\cap \Psi_h^c)\leq \delta, \forall h\in[H]$.

In addition, we say $\mathbb P^{\pi^0, T}$ is $\gamma$-globally P-IISS if $\mu(\Psi_h^c)=0,\forall h\in[H]$. Furthermore, if $\gamma$ in particular satisfies,
\begin{equation*}
        \gamma\left((\|u_k-u_k^\prime\|)_{k=1}^t\right)=\sum_{k=1}^t C\eta^{t-k}\|u_k-u_k^\prime\|, C>0, \eta\in (0,1),
\end{equation*}
then we call it probabilistically exponentially-incremental-input-to-state-stable (P-EIISS).
\end{definition}

\Cref{def:P-IISS} captures the following intuition: the dynamics are \emph{incrementally stable} only when the action mismatch stays within a \emph{stable region} of radius $d$, and under the expert policy this region is entered (and maintained) with high probability. Here $d$ specifies the locality scale under which two trajectories can remain stable, while $\delta$ accounts for stochasticity in the dynamics: even if the action mismatch is controlled so that the system stays in the stable regime, the injected noise may still push the state into an unstable region with small probability. In \cref{sec:discussion-on_PIISS}, we compare P-IISS to existing IISS-type notions in the literature and we also provide an example showing that a locally contractive system perturbed by Gaussian noise, coupled with a gaussian expert policy satisfies P-IISS.

Next, we introduce a smoothness notion for policies. This notion is first brought up in \citet{block2023provable}.

\begin{definition}\textnormal{(Relaxed Total Variation Continuity; RTVC)} Fix $\varepsilon^\prime\ge 0$ and define the $\{0,1\}$-valued transport cost $c_{\varepsilon^\prime}(u,u') \coloneqq  \mathbf 1\{ \|u-u^\prime\|> \varepsilon^\prime\}$. For two distributions $P,Q\in\Delta(\mathcal U)$, define the induced OT distance 
\icml{
\begin{align*} 
W_{c_{\varepsilon^\prime}}(P,Q) &\coloneqq  \inf_{\mu\in\Gamma(P,Q)} \mathbb E_{\mu}\big[c_{\varepsilon^\prime}(U,U')\big]\\
&= \inf_{\mu\in\Gamma(P,Q)} \mu\left( \|U-U'\|>\varepsilon^\prime \right), 
\end{align*} 
}
\arxiv{
\begin{align*}
    W_{c_{\varepsilon^\prime}}(P,Q) &\coloneqq  \inf_{\mu\in\Gamma(P,Q)} \mathbb E_{\mu}\big[c_{\varepsilon^\prime}(U,U')\big]= \inf_{\mu\in\Gamma(P,Q)} \mu\left( \|U-U'\|>\varepsilon^\prime \right), 
\end{align*}
}
where $\Gamma(P,Q)$ is the set of couplings of $(P,Q)$. We say a policy $\pi$ is $\varepsilon^\prime$-RTVC with modulus $\kappa:\mathbb R_{\geq0}\rightarrow \mathbb R_{\geq0}$ if for all $h$ and all $x,x'\in\mathcal X$, 
\begin{equation*}
W_{c_{\varepsilon^\prime}}\left(\pi_h(\cdot\mid x),\ \pi_h(\cdot\mid x')\right) \le \kappa\left(\|x-x'\|\right).
\end{equation*}
When $\varepsilon^\prime=0$, $W_{c_0}(P,Q)=\mathrm{TV}(P,Q)$, and we will call $0$-RTVC as total-variation continuity (TVC).

\end{definition}
Later, we will require the expert quantized policy $q\#\pi^*$ to satisfy TVC or RTVC. 

\begin{remark}
We adopt the thresholded $0$--$1$ transport cost $c_{\varepsilon^\prime}$ to obtain a \emph{soft} notion of continuity that ignores small action perturbations below $\varepsilon^\prime$. A more natural alternative is the distance cost
$c(u,u') \coloneqq  \|u-u'\|$, which leads to Wasserstein continuity. Wasserstein continuity is not sufficient to establish the desired bound. For example, for deterministic policy, Wasserstein continuity reduces to the familiar Lipschitz continuity(or modulus-of-continuity). In \cref{sec:discussion_on_TVC}, we show that Lipschitz policies still imitate Lipschitz expert with exponential compounding error.

\end{remark}

\subsection{Upper Bound}
In this section, we present our first result, combining the statistical and quantization error, provided that the expert trajectory is stable and policy is smooth. Let $\bar{\mathbb P}^{\pi,T,\tilde\pi}$ denote the law of the extended trajectory
$\bar\tau=(x_{1:H+1},u_{1:H},\tilde u_{1:H})$ generated as follows.
Sample $x_1\sim T_0(\emptyset)$. For each $h\in[H]$, given $x_h$, draw a pair $(u_h,\tilde u_h)$ from a
coupling kernel $L_h(\cdot,\cdot\mid x_h)\in\Delta(\mathcal U\times\mathcal U)$ whose marginals are $u_h\mid x_h\sim \pi_h(\cdot\mid x_h)$ and $\tilde u_h\mid x_h\sim \tilde\pi_h(\cdot\mid x_h)$,
and then evolve the state by sampling $x_{h+1}\sim T_h(\cdot\mid x_h,u_h)$. Notice that given $x_h$, the dependence between $u_h, \tilde u_h$ is governed by the chosen coupling $L_h$ (and will be specified later when needed).

The following proposition is our most general bound: it reduces the regret to a total variation distance between the expert and learner extended trajectory laws, together with several in-distribution terms evaluated under the expert.

\begin{restatable}{theorem}{MainProp}\label{prop:main_result}
Consider two pairs of policies $\pi^0,\tilde \pi^0$ and $\pi^1,\tilde \pi^1$ and the associated extended trajectory laws $\bar {\mathbb P}^{\pi^0, T,\tilde \pi^0}$ and $\bar {\mathbb P}^{\pi^1, T,\tilde \pi^1}$. Assume $\mathbb P^{\pi^0, T}$ is $(\gamma,\delta)$-$d$-locally P-IISS and $\tilde \pi^1$ is $\varepsilon^\prime$-RTVC with modulus $\kappa$. Let $\pi^2\coloneqq \tilde \pi^1$. If the reward function is $L_r$-Lipschitz, then it holds that
\icml{
\begin{align*}
&\bigl|J(\pi^0)-J(\pi^2)\bigr|
\lesssim H^2\delta +H\cdot\textnormal{TV}(\bar{\mathbb P}^{\pi^0,T,\tilde \pi^0},\bar {\mathbb P}^{\pi^1,T,\tilde \pi^1})\\&+H\cdot \bar{\mathbb P}^{\pi^0, T, \tilde \pi^0}\left(\exists h\in [H], \|\tilde u_h-u_h^0\|>d-\varepsilon^\prime\right)\\&
 + H\cdot\mathbb{E}_{\bar{\mathbb{P}}^{\pi^0,T,\tilde \pi^0}}\left\{
   \sum_{h=1}^{H}
   \Big[
     \kappa\big(
       \gamma\big( (\|u_k^0-\tilde{u}_k^0\|+\varepsilon' \big)_{k=1}^{h-1})
     \big)
\right.\\&\left.
     + 
     \frac{1}{H}L_r\Big(
         \gamma\big((\|u_k^0-\tilde{u}_k^0\| + \varepsilon')_{k=1}^{h-1} \big)
         + \|u_h^0-\tilde{u}_h^0\|+\varepsilon^\prime
       \Big)
   \Big]
 \right\}.
\end{align*}
}
\arxiv{
\begin{align*}
\bigl|J(\pi^0)&-J(\pi^2)\bigr|
\lesssim H^2\delta +H\cdot\textnormal{TV}(\bar{\mathbb P}^{\pi^0,T,\tilde \pi^0},\bar {\mathbb P}^{\pi^1,T,\tilde \pi^1})+H\cdot \bar{\mathbb P}^{\pi^0, T, \tilde \pi^0}\left(\exists h\in [H], \|\tilde u_h-u_h^0\|>d-\varepsilon^\prime\right)\\&
 + H\cdot\mathbb{E}_{\bar{\mathbb{P}}^{\pi^0,T,\tilde \pi^0}}\left\{
   \sum_{h=1}^{H}
   \Big[
     \kappa\big(
       \gamma\big( (\|u_k^0-\tilde{u}_k^0\|+\varepsilon' \big)_{k=1}^{h-1})
     \big)
\right.\left.
     + 
     \frac{1}{H}L_r\Big(
         \gamma\big((\|u_k^0-\tilde{u}_k^0\| + \varepsilon')_{k=1}^{h-1} \big)
         + \|u_h^0-\tilde{u}_h^0\|+\varepsilon^\prime
       \Big)
   \Big]
 \right\}.
\end{align*}
}
\end{restatable}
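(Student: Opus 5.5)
The plan is to build one coupling of four trajectories and bound the reward gap along it. Write $\tau^0=(x^0,u^0,\tilde u^0)\sim\bar{\mathbb P}^{\pi^0,T,\tilde\pi^0}$, $\tau^1\sim\bar{\mathbb P}^{\pi^1,T,\tilde\pi^1}$, and let $\tau^2=(x^2,u^2)\sim\mathbb P^{\pi^2,T}$ be the rollout of $\pi^2=\tilde\pi^1$.

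\textbf{Step 1: reduction.} Couple $\tau^0$ and $\tau^1$ by a maximal coupling of the two extended laws, so $\Pr(\tau^0\neq\tau^1)=\mathrm{TV}(\bar{\mathbb P}^{\pi^0,T,\tilde\pi^0},\bar{\mathbb P}^{\pi^1,T,\tilde\pi^1})$. On the event $\tau^0\neq\tau^1$ we bound the per-step reward difference by $1$, which costs $H\cdot\mathrm{TV}$. On the complementary event we may take $\tilde u^1=\tilde u^0$ and $x^1=x^0$. Hence it remains to compare the actions $u^0$ taken along $x^0$ with the rollout of $\pi^2$. In $\tau^1$ the executed action is $u^1\sim\pi^1$ and $\tilde u^1\sim\tilde\pi^1=\pi^2$ is a shadow action. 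So what remains is an in-distribution comparison between $(x^0_h,u^0_h)$ and the trajectory obtained by actually playing $\tilde u$.

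\textbf{Step 2: sequential coupling of $\tau^0$ with $\tau^2$.} Use shared noise $\omega_h$ between $\tau^0$ and $\tau^2$, via \cref{assum:invertibility}. At each step $h$, given $x^0_h$ and $x^2_h$, couple $u^2_h\sim\tilde\pi^1_h(\cdot\mid x^2_h)$ with $\tilde u^0_h$, which on the good event is distributed as $\tilde\pi^1_h(\cdot\mid x^0_h)$. Take the coupling attaining $W_{c_{\varepsilon'}}$, so that
\[
\Pr\bigl(\|u^2_h-\tilde u^0_h\|>\varepsilon'\mid x^0_h,x^2_h\bigr)\le\kappa\bigl(\|x^0_h-x^2_h\|\bigr).
\]
Define the good event $G_h$ as the intersection of three conditions for all $t\le h$: the two trajectories agree ($\tau^0=\tau^1$), $\|u^2_t-\tilde u^0_t\|\le\varepsilon'$, and $\|\tilde u^0_t-u^0_t\|\le d-\varepsilon'$.

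On $G_h$ the triangle inequality gives $\|u^2_t-u^0_t\|\le\|u^0_t-\tilde u^0_t\|+\varepsilon'\le d$, so the event $\Phi_h$ holds. Then P-IISS of $\mathbb P^{\pi^0,T}$ gives, except on a set of probability $\delta$,
\[
\|x^0_{h+1}-x^2_{h+1}\|\le\gamma\bigl((\|u^0_k-\tilde u^0_k\|+\varepsilon')_{k\le h}\bigr),
\]
using monotonicity of $\gamma$.

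\textbf{Step 3: accumulating failures.} The probability that $G$ first fails at step $h$ is bounded by the sum of four contributions:
\begin{itemize}
\item the TV term;
\item the quantization-radius event $\{\exists h,\ \|\tilde u_h-u^0_h\|>d-\varepsilon'\}$;
\item $\delta$ from each step, for a total of $H\delta$;
\item $\mathbb E\,\kappa\bigl(\gamma((\|u^0_k-\tilde u^0_k\|+\varepsilon')_{k<h})\bigr)$.
\end{itemize}
Summing over $h$ and multiplying by $H$ (the maximal reward loss after failure) yields the $H^2\delta$, $H\cdot\mathbb P(\cdot)$, $H\cdot\mathrm{TV}$ and $H\sum_h\kappa(\cdot)$ terms. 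While $G_h$ holds, the per-step reward difference is bounded by the Lipschitz property:
\[
|r(x^0_h,u^0_h)-r(x^2_h,u^2_h)|\le L_r\bigl(\|x^0_h-x^2_h\|+\|u^0_h-u^2_h\|\bigr).
\]
This is at most $L_r\bigl(\gamma(\cdot)_{k<h}+\|u^0_h-\tilde u^0_h\|+\varepsilon'\bigr)$, which gives the $L_r$ term (the $H\cdot\frac1H$ normalization matches).

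\textbf{Main obstacle.} The delicate step is Step 2. It requires that the sequential coupling of $u^2_h$ with $\tilde u^0_h$ be measurable and simultaneously compatible with the shared-noise structure, so that P-IISS (stated for shared-noise couplings of $\mathbb P^{\pi^0,T}$ with an arbitrary $\mathbb P^{\pi^1,T}$) applies to the joint law of $(\tau^0,\tau^2)$. I would handle this by building the joint law step by step with measurable selections of the optimal $W_{c_{\varepsilon'}}$ couplings, conditional on the past, and checking that the marginal of $\tau^2$ is exactly $\mathbb P^{\pi^2,T}$. A secondary subtlety is that $\tilde u^0_h$ has law $\tilde\pi^0$ rather than $\tilde\pi^1$. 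This is handled by working under $\tau^1$ (where $\tilde u^1\sim\tilde\pi^1$) and transferring the resulting expectations to $\tau^0$ at the cost of one additional TV term.
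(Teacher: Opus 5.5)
Your architecture matches the paper's in outline. You use a TV-attaining coupling of the two extended laws (paid once), a stepwise $W_{c_{\varepsilon'}}$-optimal coupling of $\tilde u^1_h$ with $u^2_h$, and a first-failure decomposition. In that decomposition, $\|u^0_t-u^2_t\|\le\|u^0_t-\tilde u^0_t\|+\varepsilon'\le d$ places you in $\Phi$ so P-IISS applies. You then use Lipschitz rewards on the good event and $H$ times the failure probability off it.

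\textbf{The main gap.} It sits exactly where you flag your main obstacle, and your proposed fix does not close it. P-IISS is assumed only for $\mathbb P^{\pi^0,T}$ and only under shared-noise couplings. So the $(\tau^0,\tau^2)$ marginal of your joint law must share noise almost surely. At the same time, $\tau^2$ is built action-by-action against $\tau^1$, and $(\tau^0,\tau^1)$ is a maximal coupling. A maximal coupling is a global, non-sequential object, and a generic one forces $\omega(\tau^0)=\omega(\tau^1)$ only on the agreement event. Neither natural construction then works:
\begin{itemize}
\item If $\tau^2$ runs on $\tau^1$'s noise and you glue, $(\tau^0,\tau^2)$ does not share noise off the agreement event. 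You cannot fall back on stability of $\mathbb P^{\pi^1,T}$, which is not assumed.
\item If $\tau^2$ runs on $\tau^0$'s noise while $u^2_h$ is coupled to $\tilde u^1_h$, the maximal coupling can correlate $\tilde u^1_h$ with $\omega_h(\tau^0)$, because the agreement probability depends on the whole trajectory. Then $\tau^2$ need not have law $\mathbb P^{\pi^2,T}$.
\end{itemize}
Measurable selection of the RTVC couplings addresses neither problem.

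The missing ingredient is \cref{lemma:TV_coupling_is_shared_noise}. By \cref{assum:invertibility} a trajectory determines its noise, so the noise-conditional laws $P_\omega,Q_\omega$ are supported on disjoint sets $\mathcal T_\omega$. This gives $\mathrm{TV}(P,Q)=\int\mathrm{TV}(P_\omega,Q_\omega)\,d\nu(\omega)$. Mixing noise-wise maximal couplings over $\nu$ then yields a maximal coupling that is itself shared-noise. The paper builds $\mu^{1,2}$ sequentially with shared noise, gluing the optimal $\Gamma_h$ to the joint law of $(u^1_h,\tilde u^1_h)$. It then glues $\mu^{1,2}$ to this $\mu^{0,1}$ along $\tau^1$, so all three trajectories share noise.

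\textbf{A secondary error.} Your claim that $\tilde u^0_h$ has law $\tilde\pi^1_h(\cdot\mid x^0_h)$ on the good event is false: conditioning on a trajectory-level agreement event distorts conditional laws. Your remedy is to bound the $\kappa$-terms under $\tau^1$ and then change measure to $\tau^0$. This does not cost ``one additional TV term.'' The transferred quantity $\sum_h\kappa\bigl(\gamma((\cdot)_{k<h})\bigr)$ can be as large as $H$, even after capping $\kappa$ at $1$. The cap is needed anyway, since the linear modulus in \cref{thm:stochastic_bound} is unbounded. So you would pay $H\cdot\mathrm{TV}$ in probability and $H^2\cdot\mathrm{TV}$ in the regret.

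The paper avoids this by substituting $\tau^0$ for $\tau^1$ pointwise on the prefix-agreement event $\{\bar\tau^0_{:x_h^0}=\bar\tau^1_{:x_h^1}\}$ inside each first-failure term. That event's complement is already covered by the single TV term. Your $G_h$ already contains this event, so you can do the same at no extra cost.
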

When applying this result, we substitute $(\pi^0,\tilde\pi^0)=(\pi^*,q\#\pi^*)$ and $(\pi^1,\tilde\pi^1)=(\rho\circ\hat\pi,\hat\pi)$. The total variation term is then statistically controlled via log-loss, while the remaining terms under $\bar{\mathbb P}^{\pi^*,T,q\#\pi^*}$ depend only on the expert and the quantizer and are assumed to be given in our setting. Consequently, this bound does not suffer from exponential compounding (for an appropriate $\gamma$ and $\kappa$).  Now we present the concrete bounds about stochastic and deterministic experts. To reduce some irrelevant terms and make the bound simple, we will have structural assumption on modulus $\gamma$ and $\kappa$ as well as the quantizer.

\begin{restatable}{theorem}{stochasticbound}\label{thm:stochastic_bound}
    Suppose $q\#\pi^*$ is stochastic. Suppose every policy in $\Pi$ is TVC with modulus $\kappa$ being a linear function, and $q\#\pi^*\in \Pi$. Then suppose that $\mathbb P^{\pi^*, T}$ is globally P-EIISS, then w.p. at least $1-\delta$,
    \begin{equation*}
                J(\pi^*)-J(\hat \pi)\lesssim H\cdot\sqrt{\frac{\log (|\Pi|\delta^{-1})}{n}}+H^2\cdot \varepsilon_q.
    \end{equation*}
\end{restatable}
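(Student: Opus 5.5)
We will obtain the bound by instantiating \cref{prop:main_result} with $(\pi^0,\tilde\pi^0)=(\pi^*,q\#\pi^*)$ and $(\pi^1,\tilde\pi^1)=(\rho\circ\hat\pi,\hat\pi)$, so that $\pi^2=\hat\pi$ is the deployed quantized policy. We take $\varepsilon'=0$ (TVC). Global P-EIISS means $\delta=0$ and, since $\mu(\Psi_h^c)=0$ without restriction to $\Phi_h$, we may take $d=\infty$; hence the first and third terms vanish. For both extended laws we choose the coupling kernel $L_h(\cdot,\cdot\mid x)$ that draws $\tilde u\sim\tilde\pi_h(\cdot\mid x)$ and then $u\sim\rho_h(\cdot\mid\tilde u,x)$. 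For the expert this produces $u\sim\pi^*$ with $\tilde u=q(u)$ a.s., because $\rho\circ(q\#\pi^*)=\pi^*$. For the learner it produces $u\sim\rho\circ\hat\pi$, with marginal $\hat\pi$ on $\tilde u$.

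\textbf{Statistical term.} The next step is to check that, under this coupling, each extended law is a deterministic-free relabeling of the quantized-data law $\mathbb P^{\tilde\pi,(T\circ\rho)}$ extended by the conditionally independent draws $u_h\sim\rho_h(\cdot\mid\tilde u_h,x_h)$ and $x_{h+1}\sim T_h(\cdot\mid x_h,u_h)$. Both laws share the same kernels $\rho$ and $T$, and differ only in the policy on $\tilde u$. The chain rule for Hellinger distance (or data processing) then gives
\[
D_H^2\bigl(\bar{\mathbb P}^{\pi^*,T,q\#\pi^*},\bar{\mathbb P}^{\rho\circ\hat\pi,T,\hat\pi}\bigr)=D_H^2\bigl(\mathbb P^{q\#\pi^*,T\circ\rho},\mathbb P^{\hat\pi,T\circ\rho}\bigr).
\]
Indeed, the joint density factorizes as $\prod_h\tilde\pi_h(\tilde u_h\mid x_h)\rho_h(du_h\mid\tilde u_h,x_h)T_h(x_{h+1}\mid x_h,u_h)$, so the likelihood ratio depends only on the $\tilde\pi$ factors. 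Realizability $q\#\pi^*\in\Pi$ and the standard MLE guarantee for \cref{eq:Log-loss BC with Action Quantization} yield $D_H^2\lesssim\log(|\Pi|\delta^{-1})/n$ with probability at least $1-\delta$. Using $\mathrm{TV}\le\sqrt2\,D_H$ (the general-case bound of \cref{sec:supervised_learning_guarantee}), the second term becomes $H\sqrt{\log(|\Pi|\delta^{-1})/n}$. Applying \cref{prop:main_result} also requires $\hat\pi$ to be TVC with linear $\kappa$, which holds since $\hat\pi\in\Pi$.

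\textbf{Quantization terms.} Under the expert law we have $\|u_k-\tilde u_k\|=\|u_k-q(u_k)\|=:e_k$. With $\gamma$ exponential and $\kappa(r)=L_\kappa r$, the remaining term is bounded by
\[
H\sum_{h=1}^H\Bigl(L_\kappa+\tfrac{L_r}{H}\Bigr)C\,\mathbb E\Bigl[\sum_{k<h}\eta^{h-1-k}e_k\Bigr]+L_r\,\mathbb E\Bigl[\sum_h e_h\Bigr].
\]
Swapping the order of summation gives $\sum_h\sum_{k<h}\eta^{h-1-k}e_k\le\frac{1}{1-\eta}\sum_k e_k$. Each expected sum $\mathbb E\sum_k e_k$ equals $H\varepsilon_q$ by \cref{eq:quantization_error_def}. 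The whole term is therefore $\lesssim H\cdot H\varepsilon_q$, treating $C,\eta,L_\kappa,L_r$ as constants.

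\textbf{Main obstacle.} The delicate point is the Hellinger identity. One must confirm that the extended laws factorize as claimed, with the $T$ factor depending on $u$ and not on $\tilde u$ directly, so that marginalizing out $u$ recovers exactly the $T\circ\rho$ kernel seen in the data. One must also handle the region where $\pi^*(q^{-1}(\tilde u)\mid x)=0$, where $\rho$ uses the reference $\nu_{\tilde u}$; this is harmless because $\rho$ is a fixed kernel shared by both laws. If a direct factorization argument is awkward, we would instead apply data processing: $\bar{\mathbb P}$ is the image of $\mathbb P^{\tilde\pi,T\circ\rho}$ under a Markov kernel that is independent of the policy, which gives $\le$ (all that is needed).
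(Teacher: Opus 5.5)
Your proposal is correct and follows the paper's proof. The paper instantiates \cref{prop:main_result} with $(\pi^*,q\#\pi^*)$ and $(\rho\circ\hat\pi,\hat\pi)$, uses global P-EIISS to remove the $\delta$- and $d$-dependent terms, identifies the extended-law distance with the quantized-data distance (\cref{lemma:r_and_q_are_inverse_on_expert}, which is your likelihood-ratio factorization argument), applies the MLE Hellinger bound, and sums the exponential modulus to obtain $H^2\varepsilon_q$; your explicit choice of $L_h$ and the one-sided data-processing fallback just make precise what the paper's one-line proof leaves implicit.
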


\begin{restatable}{theorem}{deterministicbound}\label{thm:deterministic_bound}
Suppose $q\#\pi^*$ is deterministic. If $q$ is a binning quantizer as defined in \cref{eq:binning_quantizer}, and  every policy in $\Pi$ is $k\varepsilon_q$-RTVC with modulus $\kappa$ for a constant $k>0$ and $q\#\pi^*\in \Pi$. Suppose $\mathbb P^{\pi^*, T}$ is $\gamma-$globally P-IISS, with $\gamma$ satisfying,
\begin{equation*}
    \gamma((r_{t})_{t=1}^h)=\gamma(\max_{1\leq t\leq h} r_t),\forall h\in[H],
\end{equation*}
where we slightly abuse the notation $\gamma$. Then w.p. at least $1-\delta$,
\icml{
\begin{align*}
    J(\pi^*)-J(\hat \pi)\lesssim &H\cdot \frac{\log(|\Pi|\delta^{-1})}{n}+H^2 \kappa\left(\gamma((k+1)\varepsilon_q\right)\\
    &+H[(\gamma((k+1)\varepsilon_q)+(k+1)\varepsilon_q]
\end{align*}
}
\arxiv{
\begin{align*}
    J(\pi^*)-J(\hat \pi)\lesssim H\cdot \frac{\log(|\Pi|\delta^{-1})}{n}+H^2 \kappa\left(\gamma((k+1)\varepsilon_q)\right)+H[\gamma((k+1)\varepsilon_q)+(k+1)\varepsilon_q].
\end{align*}
}
\end{restatable}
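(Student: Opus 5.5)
The plan is to apply \cref{prop:main_result} with $(\pi^0,\tilde\pi^0)=(\pi^*,q\#\pi^*)$ and $(\pi^1,\tilde\pi^1)=(\rho\circ\hat\pi,\hat\pi)$, so that $\pi^2=\hat\pi$ is the deployed quantized policy. The coupling kernel $L_h$ in the extended law for the expert pair will be the deterministic one $\tilde u_h=q(u_h)$. For the learner pair, I would draw $\tilde u_h\sim\hat\pi_h(\cdot\mid x_h)$ and then $u_h\sim\rho_h(\cdot\mid\tilde u_h,x_h)$.

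With these choices, both extended laws are generated by the same mechanism: draw $\tilde u_h$ from the quantized policy, then $u_h$ from $\rho_h$, then $x_{h+1}$ from $T_h$. Indeed, for the expert, $\rho\circ(q\#\pi^*)=\pi^*$ and $\rho_h(\cdot\mid\tilde u,x)$ is exactly the conditional law of $u$ given $q(u)=\tilde u$. Hence the two extended laws differ only in the factor for $\tilde u_h$. By data processing, their TV equals the TV between $\mathbb P^{q\#\pi^*,T\circ\rho}$ and $\mathbb P^{\hat\pi,T\circ\rho}$ on $(x,\tilde u)$ trajectories, which is precisely the family on which \cref{eq:Log-loss BC with Action Quantization} is the MLE.

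\textbf{Statistical term.} By realizability ($q\#\pi^*\in\Pi$) and the standard MLE Hellinger guarantee, with probability $1-\delta$ we have $D_H^2\lesssim \log(|\Pi|\delta^{-1})/n$. Since $q\#\pi^*$ is deterministic, I would use the deterministic-expert bound of \cref{sec:supervised_learning_guarantee}, $\mathrm{TV}\lesssim D_H^2$. This gives the term $H\log(|\Pi|\delta^{-1})/n$.

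\textbf{Quantization terms.} Global P-IISS means $\mu(\Psi_h^c)=0$, which should allow us to take $\delta=0$ in the stability parameter and $d=\infty$, killing both the $H^2\delta$ term and the locality-probability term. I expect this to need a short remark that the proof of \cref{prop:main_result} goes through with $d=\infty$. Under the binning assumption \cref{eq:binning_quantizer}, $\|u_k-\tilde u_k\|\le\varepsilon_q$ almost surely. With $\varepsilon'=k\varepsilon_q$, each argument of $\gamma$ is at most $(k+1)\varepsilon_q$. By the max-structure of $\gamma$ and its monotonicity, $\gamma\bigl((\|u_k-\tilde u_k\|+\varepsilon')_{k=1}^{h-1}\bigr)\le\gamma((k+1)\varepsilon_q)$ uniformly in $h$. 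Monotonicity of $\kappa$ (a modulus) then bounds the $\kappa$ term by $\kappa(\gamma((k+1)\varepsilon_q))$, so summing over $h$ and multiplying by $H$ gives $H^2\kappa(\gamma((k+1)\varepsilon_q))$. The reward term carries the factor $1/H$, so summing gives $H\cdot L_r[\gamma((k+1)\varepsilon_q)+(k+1)\varepsilon_q]$. Here $J(\pi^*)-J(\hat\pi)\le|J(\pi^0)-J(\pi^2)|$ completes the bound.

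\textbf{Main obstacle.} I expect the delicate step to be verifying that the TV between the extended laws really reduces to the TV of the quantized-observation laws. This requires checking that the expert coupling $\tilde u=q(u)$ coincides in law with "draw $\tilde u$ from $q\#\pi^*$, then $u\sim\rho$," including on null sets where $\rho$ was defined arbitrarily via $\nu_{\tilde u}$. It also requires that the RTVC hypothesis is needed only for $\tilde\pi^1=\hat\pi\in\Pi$, which the assumption provides. The remaining steps are monotonicity bookkeeping.
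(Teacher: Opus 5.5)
Your proposal is correct and is essentially the paper's proof. You instantiate \cref{prop:main_result} with $(\pi^0,\tilde\pi^0)=(\pi^*,q\#\pi^*)$ and $(\pi^1,\tilde\pi^1)=(\rho\circ\hat\pi,\hat\pi)$, and identify the extended-law TV with the TV of the quantized-observation laws; this is exactly \cref{lemma:r_and_q_are_inverse_on_expert}, which the paper proves by the same likelihood-ratio/data-processing equality you describe. You then bound that TV by $D_H^2\lesssim\log(|\Pi|\delta^{-1})/n$ through the deterministic-expert link $\mathrm{TV}\lesssim D_H^2$, and collapse the remaining terms via global stability, $\|u-q(u)\|\le\varepsilon_q$, and the max-structure and monotonicity of $\gamma$ and $\kappa$. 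One small simplification: you do not need $d=\infty$, since under binning any $d\ge(k+1)\varepsilon_q$ already makes the locality-probability term vanish.
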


Those results are both direct corollaries of \cref{prop:main_result}. We will show in \cref{sec:unsmooth_policy} that the assumptions that we make will hold reasonably. Notice that the first result is on stochastic quantized expert, the $\sqrt{\frac{1}{n}}$ rate matches the rate in \citet{foster2024behavior}. For the deterministic quantized expert, we assume the policy class is RTVC and the quantizer is a binning quantizer. We will show in \cref{sec:unsmooth_policy} that such assumption is necessary.

\paragraph{Proof Sketch}
We sketch the proof of \cref{prop:main_result}. The core observation is that $\tilde u_h^1$ (under $\bar{\mathbb P}^{\pi^1, T,\tilde \pi^1}$) and $u_h^2$ (under $\mathbb P^{\pi^2, T}$) are both sampled from the same RTVC policy $\tilde \pi^1_h$, but at two different states $x_h^1$ and $x_h^2$. Thus, by the $\varepsilon'$-RTVC property, we can choose a stepwise coupling such that, for each $h$ and conditioning on the past,
\begin{equation*}
\Pr\left(\|\tilde u_h^1-u_h^2\|>\varepsilon' \middle| x_h^1,x_h^2,\text{past}\right)
\le
\gamma_{\textnormal{RTVC}}\big(\|x_h^1-x_h^2\|\big).
\end{equation*}
Define the “good history” event $A_h\coloneqq \{\|\tilde u_t^1-u_t^2\|\le \varepsilon',\ \forall t\le h-1\}$. Conditioning on $A_h$, stability (P-IISS) converts the state mismatch into past action mismatch, and using the triangle inequality
$\|u_t^1-u_t^2\|\le \|u_t^1-\tilde u_t^1\|+\|\tilde u_t^1-u_t^2\|\le \|u_t^1-\tilde u_t^1\|+\varepsilon'$ on $A_h$, we obtain
\icml{
\begin{equation*}
\begin{aligned}
&\Pr\left(\|\tilde u_h^1-u_h^2\|>\varepsilon' \mid A_h\right)
\le
\mathbb E\left[\gamma_{\textnormal{RTVC}}\big(\|x_h^1-x_h^2\|\big)\middle|A_h\right] \\&
\stackrel{\text{(P-IISS)}}{\lesssim}
\mathbb E\left[
\gamma_{\textnormal{RTVC}}\Big(
\gamma\big( (\|u_t^1-\tilde u_t^1\|+\varepsilon')_{t=1}^{h-1}\big)
\Big)
\middle|
A_h\right]
+\delta.
\end{aligned}
\end{equation*}
}
\arxiv{
\begin{equation*}
\begin{aligned}
\Pr\left(\|\tilde u_h^1-u_h^2\|>\varepsilon' \mid A_h\right)
&\le
\mathbb E\left[\gamma_{\textnormal{RTVC}}\big(\|x_h^1-x_h^2\|\big)\middle|A_h\right] \\&
\stackrel{\text{(P-IISS)}}{\lesssim}
\mathbb E\left[
\gamma_{\textnormal{RTVC}}\Big(
\gamma\big( (\|u_t^1-\tilde u_t^1\|+\varepsilon')_{t=1}^{h-1}\big)
\Big)
\middle|
A_h\right]
+\delta.
\end{aligned}
\end{equation*}
}
Iterating over $h$ yields an in-distribution control of the “wrong-event” probability, of the form
\begin{equation*}
\Pr(A_{H}^c)
\lesssim
H\delta
+
\sum_{h=1}^{H}
\mathbb E\left[
\gamma_{\textnormal{RTVC}}\Big(
\gamma\big( (\|u_t^1-\tilde u_t^1\|+\varepsilon')_{t=1}^{h-1}\big)
\Big)
\right],
\end{equation*}
where the expectation is taken under the corresponding extended rollout law.

Two technical simplifications were made above. First, $\mathbb P^{\pi^1, T}$ need not be stable. Second, the expectations are initially under $\bar{\mathbb P}^{\pi^1,T,\tilde \pi^1}$ rather than the expert-side law $\bar{\mathbb P}^{\pi^0,T,\tilde \pi^0}$. Both are handled by inserting a change-of-measure step: we add a total variation term to transfer bounds from $\bar{\mathbb P}^{\pi^1,T,\tilde \pi^1}$ to $\bar{\mathbb P}^{\pi^0,T,\tilde \pi^0}$.

Additionally, P-IISS is stated under a shared-noise coupling. However, our change-of-measure step relies on a maximal coupling that attains the total variation distance, and it is not a priori clear that such a coupling can be realized as shared-noise. This issue is resolved by \cref{lemma:TV_coupling_is_shared_noise}, which shows that a maximal (TV-attaining) coupling can indeed be implemented via shared noise, so the stability argument remains valid. Finally, notice that the Log-loss guarantee we showed in \cref{sec:bc_intro} does not directly bound the distance between $\mathbb P^{\pi^*, T, q\#\pi^*}$ and $\mathbb P^{(\rho \circ \hat \pi),T, \hat \pi}$ (as we substitute $(\pi^0,\tilde\pi^0)=(\pi^*,q\#\pi^*)$ and $(\pi^1,\tilde\pi^1)=(\rho\circ\hat\pi,\hat\pi)$), however  \cref{lemma:r_and_q_are_inverse_on_expert} shows that this distance is the same as the distance of the marginal distribution $\mathbb P^{q\#\pi^*,(T\circ \rho)}$ and $\mathbb P^{\hat \pi,(T\circ \rho)}$.

\paragraph{Practical Takeaways}
The main practical takeaway is that, when applying behavior cloning with quantization (tokenization), the stability of the system plays a crucial role. Since quantization inevitably incurs information loss, the learned policy must deviate from the expert trajectory to some extent. Without sufficiently strong open-loop stability, such deviations can accumulate and lead to unbounded error. For example, if the modulus $\gamma$ in our P-IISS definition is arbitrarily large, then the resulting regret bound can also become arbitrarily poor.

\subsection{A Discussion on Regression-BC}
As another direct corollary of our \cref{prop:main_result},  we discuss the phenomenon highlighted in \citet{simchowitz2025pitfalls}. They study policy learning via \emph{regression behavioral cloning} (regression-BC), which learns a policy by minimizing a metric-based regression loss to expert actions, evaluated under the expert state distribution:
\begin{equation*}
    \hat \pi \coloneqq  \argmin_{\pi \in \Pi} \sum_{h=1}^H 
    \mathbb E_{\bar {\mathbb P}^{\pi^*, T, \hat \pi}}
    [\|u_h^*-\hat u_h\|],
\end{equation*}
where $\Pi$ is a policy class and $u_h^*\sim \pi_h^*(\cdot|x_h^*), \hat u_h\sim \hat \pi_h(\cdot|x_h^*)$. This setting is unrelated to quantization. Roughly speaking, \citet{simchowitz2025pitfalls} proves an algorithmic lower bound showing that, for certain \emph{simple} policy classes, optimizing the above regression loss can still lead to \emph{exponential} compounding of error in the resulting regret. We show that under stability, using a TVC learner may prevent such exponential compounding.

\begin{restatable}{proposition}{maxssetting}\label{prop:bounding_regret_with_l2}
Assume $\mathbb P^{\pi^*,T}$ globally P-EIISS. If $\hat \pi$ is TVC with modulus $\kappa$ being linear , then
\begin{equation*}
    J(\pi^*)-J(\hat \pi)
    \lesssim
    H\cdot \sum_{h=1}^H 
    \mathbb E_{\bar {\mathbb P}^{\pi^*, T, \hat \pi}}
    [\|u_h^*-\hat u_h\|].
\end{equation*}
\end{restatable}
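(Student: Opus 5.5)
We will obtain the bound as a corollary of \cref{prop:main_result}. Set $(\pi^0,\tilde\pi^0)=(\pi^*,\hat\pi)$ and $(\pi^1,\tilde\pi^1)=(\pi^*,\hat\pi)$. Then $\pi^2=\tilde\pi^1=\hat\pi$ is the deployed learner. For the extended laws we take the coupling kernel $L_h$ under which the regression loss is evaluated: given $x_h$, draw $u_h^*\sim\pi_h^*(\cdot\mid x_h)$ and $\hat u_h\sim\hat\pi_h(\cdot\mid x_h)$, with the state evolving under $u_h^*$. This is exactly $\bar{\mathbb P}^{\pi^*,T,\hat\pi}$. Since both pairs are identical, the total variation term in \cref{prop:main_result} vanishes.

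Next we dispose of the remaining "nuisance" terms. Global P-EIISS means $\mu(\Psi_h^c)=0$ for every shared-noise coupling, with no restriction to a local region. We can therefore take $\delta=0$ and $d=\infty$; more precisely, the event $\Phi_h$ constraint is vacuous when $d$ is arbitrarily large. Hence the $H^2\delta$ term and the probability term $\bar{\mathbb P}(\exists h,\|\tilde u_h-u_h^0\|>d-\varepsilon')$ both vanish. Since $\hat\pi$ is TVC, we use $\varepsilon'=0$.

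What remains is
\[
H\,\mathbb E_{\bar{\mathbb P}^{\pi^*,T,\hat\pi}}\sum_{h=1}^H\Big[\kappa\big(\gamma((\|u_k^*-\hat u_k\|)_{k<h})\big)+\tfrac{L_r}{H}\big(\gamma((\|u_k^*-\hat u_k\|)_{k<h})+\|u_h^*-\hat u_h\|\big)\Big].
\]
With $\gamma(r_{1:h-1})=\sum_{k<h}C\eta^{h-1-k}r_k$ and $\kappa(s)=L_\kappa s$, we exchange the order of summation:
\[
\sum_{h=1}^H\sum_{k<h}C\eta^{h-1-k}\|u_k^*-\hat u_k\|\le \frac{C}{1-\eta}\sum_{k=1}^H\|u_k^*-\hat u_k\|.
\]
Consequently, the $\kappa\circ\gamma$ term contributes at most
\[
H\cdot\frac{L_\kappa C}{1-\eta}\sum_h\mathbb E\|u_h^*-\hat u_h\|.
\]
The reward term contributes at most
\[
L_r\Big(1+\frac{C}{1-\eta}\Big)\sum_h\mathbb E\|u_h^*-\hat u_h\|,
\]
which is dominated by the first. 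Treating $C,\eta,L_\kappa,L_r$ as constants yields the claim.

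The main point to check is the legitimacy of the $\delta=0$, $d=\infty$ specialization and of the degenerate choice $\pi^0=\pi^1$. First, \cref{prop:main_result} requires the stability of $\mathbb P^{\pi^0,T}=\mathbb P^{\pi^*,T}$, which is assumed. Second, the coupling between $x^1$ and $x^2$ used inside its proof is a shared-noise coupling of $\mathbb P^{\pi^*,T}$ and $\mathbb P^{\hat\pi,T}$, which \cref{lemma:TV_coupling_is_shared_noise} (trivially here, with zero TV) supplies. If the statement of \cref{prop:main_result} is read with finite $d$, one instead notes that global P-IISS implies the local version for every $d$ with $\delta=0$, and lets $d\to\infty$ so the probability term tends to zero.
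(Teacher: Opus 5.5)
Your proposal is correct and matches the paper's proof. The paper likewise obtains the result as a direct corollary of \cref{prop:main_result} with $\pi^0=\pi^1=\pi^*$ and $\tilde\pi^0=\tilde\pi^1=\hat\pi$, so the TV term vanishes; you additionally supply the details the paper leaves implicit (removing the $\delta$ and $d$ terms via global stability, taking $\varepsilon'=0$ for TVC, and summing the geometric weights of the P-EIISS modulus to get the factor $C/(1-\eta)$).
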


This proposition directly links regret to the regression training error, with only a linear-in-$H$ compounding factor. While this does not match the horizon-free guarantees in, e.g., \citet{zhang2025imitation}, the gap is attributable to working with TVC (as we will clarify in the next section). Nevertheless, it already shows that, once the regression objective is optimized within a TVC policy class, the resulting regret is correspondingly controlled rather than exponentially amplified.

\section{When Policies Violate Smoothness: Characterization and Model-Based Remedy}\label{sec:unsmooth_policy}
In the previous section, we require the learner's policy to be (relaxed) total variation continuous. Intuitively, once the learner is off expert distribution, such property will forces the learner to behave as if it was still on expert trajectory, with a cost of the distance between off-distribution state and in distribution state. This corrects the actions taken on OOD states, but also introducing an suboptimal factor $H$ on the regret in terms of quantization error. Now, we will first point out when quantized expert policy may violate such assumption and the consequence, then talk about how to attain a better rate without requiring TVC policy.

\subsection{When Are Policies Non-Smooth}
\paragraph{Stochastic Expert is reasonable to be TVC}
The upper bound we establish so far requires the whole policy class to be TVC or RTVC, and the quantized expert policy to be realizable. Thus the quantized expert policy should also be TVC or RTVC. In general, many stochastic policies are TVC (e.g. Gaussian policy, shown in \cref{sec:discussion_on_TVC}) and \citet{block2023provable} introduces a way to augment any policy to a TVC one by probabilistic smoothing (shown in \cref{sec:discussion_on_TVC}). Given the raw expert policy is TVC, \cref{lemma:contraction_of_tv} guarantees that the quantized policy is still TVC for any measurable quantizer. Thus our \cref{thm:stochastic_bound} holds in most cases for stochastic expert given the required stability of the dynamic.

\paragraph{Deterministic Expert almost cannot be TVC}
In contrast, TVC is essentially incompatible with deterministic expert policies.
Indeed, if $\pi_h(\cdot\mid x)=\delta_{\pi_h(x)}$ is deterministic, then
\begin{equation*}
\mathrm{TV}\left(\delta_{\pi_h(x)},\delta_{\pi_h(x')}\right)
=\mathbf 1\{\pi_h(x)\neq \pi_h(x')\}.
\end{equation*}
If $\pi_h$ is not locally constant, then for every $\delta>0$ there exist
$x,x'$ with $\|x-x'\|<\delta$ but $\pi_h(x)\neq \pi_h(x')$, hence the left-hand
side equals $1$ at arbitrarily small scales. Therefore any modulus $\gamma$
satisfying
\begin{equation*}
    \mathbf 1\{\pi_h(x)\neq \pi_h(x')\}\le \gamma(\|x-x'\|),
\end{equation*}
must obey $\gamma(r)=1$ for all $r>0$, which is a trivial modulus and result in trivial upper bound. Consequently, we should not expect deterministic policy to satisfy TVC.

\paragraph{When Deterministic Expert can be RTVC}
Nevertheless, a deterministic quantized expert can still be RTVC with a nontrivial modulus. In what follows, we will first describe what we mean by nontrivial modulus, then we provide a necessary condition for RTVC. We will also show that a binning-quantizer will naturally satisfy RTVC.

\begin{proposition}\label{prop:RTVC_necessary_and_sufficient_condition}
Let us consider a $L$-Lipschitz deterministic policy $\pi$.

\textbf{(i)}
Suppose that the quantized policy $q\#\pi$ satisfies $\varepsilon'$-RTVC with
a nontrivial modulus $\kappa:\mathbb R_{\ge 0}\to\mathbb R_{\ge 0}$, in the sense that, there exists $\delta_0>0$ such that $\kappa(r)<1$ for all $r\in(0,\delta_0]$.
Then the quantizer $q$ must satisfy,
\begin{equation*}
    \forall \|x - x'\| \leq \delta_0,\quad \|q(\pi_h(x)) - q(\pi_h(x'))\| \leq \varepsilon'.
\end{equation*}

\textbf{(ii)}
Suppose the quantizer $q$ is a binning quantizer in the sense of \cref{eq:binning_quantizer}. 
Let $\varepsilon'>2\varepsilon_q$ and define
\begin{equation*}
    \delta_0\coloneqq \frac{\varepsilon'-2\varepsilon_q}{L}>0.
\end{equation*}
Then the quantized policy $q\#\pi$ is $\varepsilon'$-RTVC with the following nontrivial modulus: 
\begin{equation*}
    \kappa(r)=\mathbf 1\{r>\delta_0\}.
\end{equation*}
\end{proposition}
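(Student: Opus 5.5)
Both parts reduce to the fact that $q\#\pi_h(\cdot\mid x)$ is the point mass $\delta_{q(\pi_h(x))}$. For two Dirac measures the only coupling is the product of the point masses, so
\begin{equation*}
W_{c_{\varepsilon'}}\bigl(\delta_{a},\delta_{b}\bigr)=\mathbf 1\{\|a-b\|>\varepsilon'\},\qquad a=q(\pi_h(x)),\ b=q(\pi_h(x')).
\end{equation*}
I will record this identity first. Each part then becomes a statement about a $\{0,1\}$-valued quantity.

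\textbf{(i)} Fix $h$ and $x,x'$ with $\|x-x'\|\le\delta_0$. If $x=x'$ there is nothing to prove. Otherwise set $r=\|x-x'\|\in(0,\delta_0]$. By RTVC,
\begin{equation*}
\mathbf 1\{\|q(\pi_h(x))-q(\pi_h(x'))\|>\varepsilon'\}\le\kappa(r)<1 .
\end{equation*}
The left side is an indicator, so it must equal $0$, which is exactly the claimed condition. The Lipschitz assumption on $\pi$ is not needed here.

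\textbf{(ii)} I split on $r=\|x-x'\|$. If $r>\delta_0$, then $\kappa(r)=1$, and the bound holds because $W_{c_{\varepsilon'}}\le 1$ always. If $r\le\delta_0$, apply the triangle inequality, then the binning property \cref{eq:binning_quantizer} at both points, then the $L$-Lipschitz property of $\pi_h$:
\begin{equation*}
\|q(\pi_h(x))-q(\pi_h(x'))\|\le 2\varepsilon_q+\|\pi_h(x)-\pi_h(x')\|\le 2\varepsilon_q+Lr\le 2\varepsilon_q+L\delta_0=\varepsilon'.
\end{equation*}
So the indicator is $0=\kappa(r)$. The modulus $\kappa$ is nontrivial because $\kappa<1$ on $(0,\delta_0]$, and $\delta_0>0$ because $\varepsilon'>2\varepsilon_q$.

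There is no real obstacle. The only points needing care are the Dirac-coupling identity, including the convention that ties at distance exactly $\varepsilon'$ count as cost $0$ since $c_{\varepsilon'}$ uses strict inequality, and the boundary case $r=\delta_0$, which the non-strict bound above covers.
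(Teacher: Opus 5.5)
Your proof is correct and follows the paper's argument: both parts rest on $W_{c_{\varepsilon'}}(\delta_a,\delta_b)=\mathbf 1\{\|a-b\|>\varepsilon'\}$ for the Dirac quantized policy, with (i) forced because an indicator bounded by $\kappa(r)<1$ must vanish, and (ii) following from the triangle inequality, the binning bound, the Lipschitz bound, and a case split at $\delta_0$. Your version of (i) is in fact cleaner than the paper's, which bounds the indicator by the specific modulus $\mathbf 1(\|x-x'\|>\delta_0)$ instead of a general nontrivial $\kappa(r)<1$, and only treats $\|x-x'\|<\delta_0$.
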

The above proposition suggests that, for deterministic policies, the necessary condition (i) implies that a RTVC policy must have certain uniform structure, which cannot be preserved by the general learning-based quantizer applied to a deterministic policy, wheres Part (ii) shows that a binning quantizer does preserve it and, moreover, yields an explicit modulus. This can be directly combined with \cref{thm:deterministic_bound}. Let us consider modulus $\gamma$ as in  \cref{thm:deterministic_bound} and $k>2$ be some constant. Then whenever the raw expert policy $\pi^*$ is $L$-Lipschitz with $L<\frac{(k-2)\varepsilon_q}{\gamma((k+1)\varepsilon_q)}$, Part (ii) implies that $q\#\pi^*$ is $k\varepsilon_q$-RTVC with modulus $\kappa(r)=\mathbf{1}\{r>(k-2)\varepsilon_q/L\}$. So realizability in a RTVC policy class can hold . Moreover, $\kappa(\gamma((k+1)\varepsilon_q))=0$, so substituting this modulus into \cref{thm:deterministic_bound} eliminates the $H^2$ term in the upper bound.  And this upper bound will be valid. Notice that doing this requires bounding the Lipschitz constant of raw expert policy by $\frac{(k-2)\varepsilon_q}{\gamma((k+1)\varepsilon_q)}$. This condition becomes less restrictive when the state deviation is less sensitive to the action deviation, that is, when $\gamma((k+1)\varepsilon_q)$ is smaller. 

\paragraph{The Price of Non-Smooth Policies}
Now we prove an algorithmic lower bound, showing that using the Log-loss BC with quantized action may incur much larger quantization error than $\varepsilon_q$, when the quantized expert fails to be RTVC. Here we assume access to infinitely many samples. Thus, the error we capture comes from deploying the quantized expert, not from finite-sample effects. The error term is expressed in terms of $\varepsilon_q$.

\begin{theorem}\label{thm:unsmooth_lb_deterministic}
For deterministic dynamic, there exists $\mathbb P^{\pi^*, T}$ and a learning-based quantizer $q$ such that  $\pi^*$ is deterministic and Lipschitz,  $\mathbb P^{\pi^*, T}$ is globally P-EIISS, and the state distribution of $x_h^*$ (under raw expert distribution) has positive density. Meanwhile it holds that,
    \begin{align*}
  \frac{1}{H}\mathbb E_{\mathbb P^{\pi^*, T}}\left[\sum_{h=1}^H\|u_h^*-q(u_h^*)\|\right]&=O(\varepsilon_q),\\\  J(\pi^*)-J(q\#\pi^*)&=H\cdot \Omega(1).
    \end{align*}
For stochastic dynamic, there exists $\mathbb P^{\pi^*, T}$ and a learning-based quantizer $q$ such that the noise distribution is Gaussian, $\pi^*$ is deterministic and Lipschitz,  $\mathbb P^{\pi^*, T}$ is globally P-EIISS, and the state distribution of $x_h^*$ has positive density . Meanwhile it holds that,
    \begin{align*}
        \frac{1}{H}\mathbb E_{\mathbb P^{\pi^*,T}}\left[\sum_{h=1}^H\|u_h^*-q(u_h^*)\|\right]&= O(\varepsilon_q),\\\  J(\pi^*)-J(q\#\pi^*)&=H\cdot \Omega(\frac{1}{|\log\varepsilon_q|}).
    \end{align*}
\end{theorem}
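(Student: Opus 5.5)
This is an existence statement, so the proof is a construction. The plan is to build a one-dimensional, strongly contracting system in which a Lipschitz deterministic expert is tracked exactly by a quantizer of small \emph{in-distribution} error. The quantizer is designed so that, once the quantized policy $q\#\pi^*$ is deployed, a single displacement lands the state in a region the expert never visits, and there $q$ is allowed to be arbitrarily bad.

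\textbf{Deterministic dynamics.} I would take $\mathcal X=\mathcal U=\mathbb R$, $x_{h+1}=\eta x_h+u_h$ with $\eta\in(0,1)$, and $x_1\sim\mathrm{Unif}[0,1]$. This gives positive density and linear exponential IISS, so the system is globally P-EIISS with $\gamma=\sum_k\eta^{t-k}|u_k-u_k'|$. For the expert I would take $\pi^*_h(x)=c$ for a constant $c$ (or any Lipschitz map), so the expert states $x_h^*$ lie in a known interval $I_h$ of width about $\eta^{h-1}$. The remaining difficulty is that $I_h$ shrinks. I would avoid this by adding a little fresh uniform randomness to $x_1$ only, or by taking $\pi_h^*(x)=(1-\eta)x+c$ so that $x_{h+1}=x_h+c$ is a pure translation preserving positive density on $I_h=[hc,hc+1]$.

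The quantizer acts on actions. Since the expert action is a function of the state, I would make $\pi^*$ injective (the translation choice: $u=(1-\eta)x+c$). Then $q$ applied to $u$ effectively sees $x$. I would define $q(u)$ as a nearest grid point within $\varepsilon_q$ for $u$ in a "good" set $G$ of expert actions, and as a far point ($|q(u)-u|\ge 1$) for $u\notin G$. Choosing $G$ with expert measure $1-O(\varepsilon_q)$ gives in-distribution error $O(\varepsilon_q)$. The key mechanism is that the expert action is fine, but the quantized action is off by up to $\varepsilon_q$. This shifts $x_{h+1}$ off the expert's support (a thin lattice-like set), where the next action $(1-\eta)x+c$ falls into the bad set. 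The reward $r(x,u)=-\min(1,|x-\text{expert region}|)$, suitably shifted into $[0,1]$ and Lipschitz, then loses $\Omega(1)$ per step, giving $H\cdot\Omega(1)$. To make "off-support" happen with positive density, the expert support itself must be small. I would therefore take $G$ to be a fat-Cantor/comb set of expert actions and arrange that the quantization shift maps the expert's good states into the complement of $G$. Verifying positive density with this comb structure is the main obstacle. I would resolve it by making the expert marginal density positive everywhere but concentrated, with mass $1-\varepsilon_q$ on $G$, so that $q$ sends a bad region of density $\varepsilon_q$ far away. Each rollout of $q\#\pi^*$ exits $G$ after one step, and recovery is prevented by making the bad-region images absorbing under $q$.

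\textbf{Stochastic dynamics.} I would add Gaussian noise $\sigma w_h$. Noise smears the comb, so the set $G$ must have teeth of width about $\sigma$. Balancing the error $O(\varepsilon_q)$ against the Gaussian tail $e^{-d^2/2\sigma^2}$ is what forces $\sigma^2\asymp 1/|\log\varepsilon_q|$. The per-step loss then scales as the displacement probability times the reward gap, and I expect this to yield $\Omega(1/|\log\varepsilon_q|)$ per step and $H\cdot\Omega(1/|\log\varepsilon_q|)$ in total. The delicate calculation is this tail balance; I would first verify it for a single step and then sum using the contraction.
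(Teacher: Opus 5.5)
Your stochastic half has a genuine gap. You take $\sigma^2\asymp 1/|\log\varepsilon_q|$, so the noise is much \emph{larger} than $\varepsilon_q$. An action perturbation of size at most $\varepsilon_q$ then changes the law of $x_{h+1}$, from any given state, by at most $\mathrm{TV}(\mathcal N(m,\sigma^2),\mathcal N(m+\varepsilon_q,\sigma^2))\le \varepsilon_q/(\sqrt{2\pi}\,\sigma)=O(\varepsilon_q\sqrt{|\log\varepsilon_q|})$. So the one-step displacement into $G^c$ that powers your deterministic construction gains at most that much probability over the expert's own transition.

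Widening the bad teeth to width $\sim\sigma$ does not rescue this. If every good point lies within $\varepsilon_q$ of a tooth and teeth have width $\sigma\gg\varepsilon_q$, the teeth cover a fraction at least $\sigma/(\sigma+2\varepsilon_q)\to 1$ of the line, so $G^c$ cannot have expert measure $O(\varepsilon_q)$. The rate $\Omega(1/|\log\varepsilon_q|)$ is also only announced; nothing in your sketch produces a logarithm.

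In the paper the logarithm has a different origin:
\begin{itemize}
\item The noise is \emph{smaller} than $\varepsilon_q$: $\sigma_\omega=O(\varepsilon_q/|\log\varepsilon_q|^{\alpha})$ with $\alpha>1/2$.
\item The quantizer builds a trap $I_P\to I_{T_1}\to I_{T_2}$ of $\Theta(\varepsilon_q)$-wide intervals, entered from a neighbourhood of the attractor $0$ of the contracting loop $x\mapsto Ax+B\arctan x$.
\item Noise ejects the quantized chain from the trap with constant probability, and contraction brings it back within $T_{\mathrm{mix}}=\Theta(\log(1/\varepsilon_q))$ steps with constant probability.
\item A Foster--Lyapunov drift condition, the censored chain on $C_R$, a return-time bound and geometric ergodicity give stationary trap mass $\Omega(1/\log(1/\varepsilon_q))$. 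Gaussian concentration of the expert chain keeps the in-distribution error $O(\varepsilon_q)$.
\end{itemize}
So $1/|\log\varepsilon_q|$ is an inverse return time, not the outcome of a Gaussian tail balance.

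Your deterministic half, by contrast, is a workable and genuinely different route. The paper traps the rollout next to where the expert's state law concentrates. That needs a contracting closed loop, the change-of-variables density bounds in its first claim, and a largeness condition on $H$. You use instead an $\varepsilon_q$-dense bad set of small measure. This works with a plain Gaussian or uniform expert law, needs no closed-loop contraction, and holds for every $H$, at the cost of a very irregular quantizer.

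Three corrections are needed:
\begin{itemize}
\item What must be thin is $G^c$, not the expert support, so positive density is no obstacle. Take $x_1\sim\mathcal N(0,1)$, $\pi^*(x)=(1-\eta)x$, and teeth $B=\bigcup_{j\in\mathbb Z}[j\varepsilon_q,j\varepsilon_q+\varepsilon_q^2]$, which have expert measure $O(\varepsilon_q)$. Off $B$, let $q$ move the next state onto the nearest point of $B$.
\item The far error must be bounded. On $B$ take $q(u)=u+m\varepsilon_q$ with $m\varepsilon_q\approx 2$; this maps teeth to teeth, so the trap is absorbing.
\item The reward must penalize the trap. Your state-distance reward gives zero loss on teeth inside the expert's range. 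The paper's choice $r(x,u)=1-\min\{1,|u-\pi^*(x)|\}$ loses $1$ at every step $h\ge 2$.
\end{itemize}

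The same comb also settles the stochastic case, but in the opposite regime $\sigma\lesssim\varepsilon_q^2$. Use teeth of width $K\sigma\le\varepsilon_q^2$ and let $q$ aim at tooth centers. Each step then lands in the trap with probability $1-2\Phi(-K/2)$, which even yields $H\cdot\Omega(1)$.
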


For the above result, we show that even when the quantization error under the expert distribution is small, on the order of $ O(\varepsilon_q)$, deploying the quantized policy can induce rollouts under a drastically different state distribution and lead to large regret (e.g., $ \Omega(1)\gg  O(\varepsilon_q)$ or $ \Omega(1/|\log \varepsilon_q|)\gg  O(\varepsilon_q)$). Moreover, this failure mode cannot be remedied by stability of the dynamics or by benignness of the unquantized policy.

\paragraph{Practical Takeaways}
When applying behavior cloning with action quantization (tokenization), the choice of quantizer is crucial. Our theory suggests that, although some learned quantizers may achieve low error on the expert distribution (or during training on raw expert data), they can incur much larger errors at deployment due to non-smoothness. In contrast, a smooth quantized policy (i.e., a TVC or RTVC policy) like using the binning-based quantizers can guarantee that the quantization error at deployment remains comparable to that on the expert distribution. Interestingly, we also observe similar design in recent empirical works, for example, \citet{pertsch2025fast} used binning-based action tokenization in the frequency-space, which naturally filters the non-smooth components, and outperforms the learning-based action tokenization.

\subsection{A Model-Based Augmentation}
When (R)TVC policies are inaccessible, we introduce a simple modification that bypasses this requirement. It also improves the horizon dependence of the quantization error, since we no longer need to pay an additional cost associated with correcting out-of-distribution (OOD) states via (R)TVC.

The key observation is that deploying the learned policy on the true state during inference can be risky and may lead to OOD rollouts. Instead, we execute the learned policy on a \emph{simulated} state that is generated only from the training data and the initial state (the initial state distribution can be also seen as part of expert distribution) and thus is intended to stay within (or close to) the expert distribution. Concretely, we additionally learn a transition model $\widehat{(T\circ \rho)}$ that predicts the next state given the current state $x_h$ and the quantized action $\tilde u_h$, trained by maximum likelihood. At inference time, starting from the observed initial state $x_1^L$, we roll out an auxiliary trajectory $\tau^a=\{(x_h^a, u^a_h)\}_{h=1}^H$ by sampling from the learned policy and the learned transition model, with $x_1^a= x_1^L$. We then execute the resulting action sequence $\{ u^a_h\}_{h=1}^H$ in the real environment from the same initial state. In this way, we effectively ``copy'' quantized actions along an in-distribution trajectory, and the resulting quantization error can be controlled via the stability condition. Denote $\mathcal M\subset \{(T\circ \rho)_h: \tilde{\mathcal U}\times \mathcal X \rightarrow \Delta(\mathcal X)\}_{h=1}^H$ as the transition model class, we present  \cref{alg:train_infer}.

\begin{algorithm}[t]
\caption{Log-loss BC with auxiliary rollout} 
\label{alg:train_infer}
\begin{algorithmic}[1]
\Require Dataset $\mathcal D=\{(x^{(i)}_{1:H},u^{(i)}_{1:H})\}_{i=1}^n$; quantizer $q$; horizon $H$; 
policy class $\Pi$; transition class $\mathcal T$;  real dynamics $\{T_h^{}\}_{h=0}^{H}$
\Statex
\State \textbf{Training (MLE with quantized actions)}
\icml{
\begin{equation*}
\resizebox{0.98\linewidth}{!}{$
\begin{aligned}
\hat \pi,\widehat{(T\circ\rho)}
&= \argmax_{\pi,(T\circ\rho)\in \Pi\times \mathcal T}
\sum_{i=1}^n\sum_{h=1}^H
\Big[
\log \pi_h\big(q(u_h^{(i)}) \mid x_h^{(i)}\big) \\
&\hspace{3.2em}+\log (T\circ \rho)_h\big(x_{h+1}^{(i)} \mid q(u_h^{(i)})\big)
\Big]
\end{aligned}
$}
\end{equation*}
}
\arxiv{
\begin{equation*}
\begin{aligned}
\hat \pi,\widehat{(T\circ\rho)}
&= \argmax_{\pi,(T\circ\rho)\in \Pi\times \mathcal M}
\sum_{i=1}^n\sum_{h=1}^H
\Big[
\log \pi_h\big(q(u_h^{(i)}) \mid x_h^{(i)}\big)+\log (T\circ \rho)_h\big(x_{h+1}^{(i)} \mid x_h^{(i)},q(u_h^{(i)})\big)
\Big]
\end{aligned}
\end{equation*}
}
\State \textbf{Inference }
\State Sample $x_1^L \sim T_0$, set $x_1^a=x_1^L$ 

\For{$h=1$ to $H$}
  \State $ u_h^a \sim \hat \pi_h(\cdot \mid x_h^a)$
  \State $ x_{h+1}^a \sim \widehat{(T\circ\rho)}_h(\cdot \mid x_h^a,u_h^a)$
  
\EndFor
\For{$h=1$ to $H$} 
  \State $u_h^L \gets u_h^a$
  \State Observe $x^{L}_{h+1} \sim T_h^{}(\cdot \mid x^{L}_h, u^L_h)$ and (optionally) collect $r_h$
\EndFor
\State \Return $\hat \pi, \widehat{(T\circ \rho)}$ and total return $\sum_{h=1}^H r_h$
\end{algorithmic}
\end{algorithm}

Now we provide the regret guarantee of our algorithm.

\begin{theorem}\textnormal{(Convergence-Theorem of Model-augmented Imitation Learning)}\label{thm:model_agumented_upper_bound}

Suppose that $\mathbb P^{\pi^*, T}$ is globally P-EIISS. 
Assume realizibility for both $(T\circ \rho)\in \mathcal M$ and $q\#\pi^*\in \Pi$. Denote the return of the model-augmented algorithm as $J(\textnormal{alg})$, we have with probability at least $1-\delta$
\begin{equation*}
    J(\pi^*)-J(\textnormal{alg})\leq H\cdot\left[ \sqrt{\frac{\log(|\Pi|\delta^{-1})+\log (|\mathcal M|\delta^{-1})}{ n}} + \varepsilon_q\right].
\end{equation*}
\end{theorem}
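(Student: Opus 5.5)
The argument has two layers: a statistical layer, where the auxiliary rollout is compared with the expert's quantized trajectory under the true $\rho$-perturbed model, and a control-theoretic layer, where the open-loop action sequence from that rollout is executed in the real dynamics and compared with the expert's raw actions using global P-EIISS.

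\textbf{Statistical layer.} By construction, the auxiliary trajectory $\tau^a=(x^a_{1:H+1},u^a_{1:H})$ has law $\mathbb P^{\hat\pi,\widehat{(T\circ\rho)}}$; its initial state is drawn from the true $T_0$. The training objective is the MLE over the product family $\{\mathbb P^{\pi,M}\}_{\pi\in\Pi,M\in\mathcal M}$ for data distributed as $\mathbb P^{q\#\pi^*,(T\circ\rho)}$. Under realizability of both components, the standard Hellinger MLE guarantee from \cref{sec:supervised_learning_guarantee} gives, with probability at least $1-\delta$,
\[
D_H^2\bigl(\mathbb P^{q\#\pi^*,(T\circ\rho)},\mathbb P^{\hat\pi,\widehat{(T\circ\rho)}}\bigr)\lesssim \frac{\log(|\Pi|\delta^{-1})+\log(|\mathcal M|\delta^{-1})}{n}.
\]
Hence the total variation distance between the two laws is at most the square root of this quantity.

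\textbf{Coupling the real rollout with the expert.} Let $\mathrm{TV}$ denote the total variation distance just bounded. I will build a joint law of $(\tau^*,\tilde u^*_{1:H},\tau^a,\tau^L)$ in three steps.
\begin{enumerate}
\item Sample the expert extended trajectory from $\bar{\mathbb P}^{\pi^*,T,q\#\pi^*}$, with $\tilde u^*_h=q(u^*_h)$. Its marginal on $(x^*,\tilde u^*)$ is exactly $\mathbb P^{q\#\pi^*,(T\circ\rho)}$; this is the content of \cref{lemma:r_and_q_are_inverse_on_expert}.
\item Maximally couple $(x^*,\tilde u^*)$ with $\tau^a$, so that $\Pr\bigl(\tilde u^*_{1:H}\neq u^a_{1:H}\text{ or }x^*_1\neq x^a_1\bigr)\le \mathrm{TV}$. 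As in \cref{lemma:TV_coupling_is_shared_noise}, this coupling can be arranged to be consistent with shared noise.
\item Execute $u^L_h=u^a_h$ in the real dynamics, driven by the same noise $\omega_{0:H}$ as the expert trajectory; in particular $x^L_1=x^*_1=x^a_1$ on the good event.
\end{enumerate}
On the good event $G=\{u^a_{1:H}=\tilde u^*_{1:H},\ x^a_1=x^*_1\}$, the real rollout uses the actions $q(u^*_h)$. Global P-EIISS of $\mathbb P^{\pi^*,T}$, applied under this shared-noise coupling of $\pi^*$ with the open-loop policy $u^L$, gives
\[
\|x^L_{h+1}-x^*_{h+1}\|\le \sum_{k\le h} C\eta^{h-k}\|u^*_k-q(u^*_k)\|.
\]
The open-loop policy is a history-dependent policy, which is the same issue handled in \cref{prop:main_result}. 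If needed, I would treat the actions as exogenous randomness attached to the noise.

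\textbf{Regret bound.} Split according to $G$. Since rewards lie in $[0,1]$, the contribution of $G^c$ to $J(\pi^*)-J(\textnormal{alg})$ is at most $H\cdot\mathrm{TV}$. On $G$, Lipschitzness of the reward gives
\[
|r_h(x^*_h,u^*_h)-r_h(x^L_h,u^L_h)|\le L_r\bigl(\|x^*_h-x^L_h\|+\|u^*_h-q(u^*_h)\|\bigr).
\]
Summing over $h$ and exchanging the order of summation, each quantization error $\|u^*_k-q(u^*_k)\|$ is counted with total weight at most $1+C\sum_{j\ge0}\eta^j=1+C/(1-\eta)$. Taking expectations under the expert law and using \cref{eq:quantization_error_def}, the $G$ part is $\lesssim H\varepsilon_q$. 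Combining the two parts yields the stated bound.

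\textbf{Main obstacle.} The delicate step is the coupling in step 2. It must simultaneously (a) attain the total variation distance between the auxiliary law and the expert's quantized law, and (b) leave the real-environment noise shared between the expert rollout and the deployed rollout, so that P-EIISS applies. I would handle this by noting that the auxiliary rollout depends on the real system only through $x_1$. Coupling on $(x_1,\tilde u_{1:H})$ and then drawing the remaining real noise $\omega_{1:H}$ as the expert's conditionally, given $(x^*,\tilde u^*)$, preserves both marginals. This follows the argument of \cref{lemma:TV_coupling_is_shared_noise}.
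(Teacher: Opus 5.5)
You follow the paper's route: the Hellinger MLE bound, a maximal coupling of the auxiliary rollout with the expert's quantized trajectory, reuse of the expert noise $\omega^*$ as the real-environment noise, and P-EIISS on the good event. The argument fails at exactly the step you flag as the main obstacle, and your proposed fix does not work. In \cref{alg:train_infer} the real transitions are sampled independently of the auxiliary rollout. The deployed law therefore requires $\omega^L_{1:H}$ to be i.i.d.\ $P_{W}$ and independent of $u^a_{1:H}$ given $x_1$; that is what your own remark, that $\tau^a$ touches the real system only through $x_1$, means. Your construction preserves the marginal laws of $\tau^*$ and $\tau^a$. But $\tau^L$ is a function of the pair $(u^a_{1:H},\omega^L_{1:H})$, so what matters is their joint law. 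On $G$ you have $u^a=\tilde u^*$, so setting $\omega^L=\omega^*$ gives $(u^a,\omega^L)=(\tilde u^*,\omega^*)$. These are dependent whenever the noise is nondegenerate, e.g.\ $\tilde u^*_2=q(u^*_2)$ with $u^*_2\sim\pi^*_2(\cdot\mid f_1(x_1,u^*_1,\omega^*_1))$. The process you bound is thus not the algorithm's rollout: its open-loop actions are secretly correlated with the realized noise, so it is effectively closed-loop. No coupling repairs this. Small $\mathrm{TV}$ forces $u^a=\tilde u^*$ with high probability, so independence of $\omega^L$ from $u^a$ forces $\omega^L\neq\omega^*$ with non-negligible probability. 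The paper's proof has the same gap: it simply asserts that the marginal of $(x^L,u^L)$ under the glued coupling $\mu'$ is the law generated by the algorithm.

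This is not a technicality: the statement itself fails for noisy dynamics.
\begin{itemize}
\item \emph{Setup.} Take $x_{h+1}=ax_h+bu_h+\omega_h$ with $a,b>0$, $a+b<1$, $\omega_h\sim\mathcal N(0,\sigma^2)$ and $x_1=0$. Use the expert $\pi^*(x)=\arctan x$, the $1$-Lipschitz reward $r(x,u)=\max\{0,1-|u-\arctan x|\}$, a binning quantizer, and singleton classes $\Pi=\{q\#\pi^*\}$, $\mathcal M=\{T\circ\rho\}$.
\item \emph{Assumptions hold.} Shared-noise trajectories satisfy $x^0_{h+1}-x^1_{h+1}=a(x^0_h-x^1_h)+b(u^0_h-u^1_h)$, so $\mathbb P^{\pi^*,T}$ is globally P-EIISS. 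Both classes are realizable and the learned pair is exact.
\item \emph{Regret.} The auxiliary rollout is then an expert trajectory driven by independent noise $\omega'$, so $x^L_h-x^a_h=\sum_{j=1}^{h-1}a^{h-1-j}(\omega_j-\omega'_j)+O(\varepsilon_q)$. Since $u^L_h=q(\arctan x^a_h)$ is matched to $x^a_h$ rather than $x^L_h$, every step $h\ge 2$ loses $\Omega(\sigma)-O(\varepsilon_q)$ reward. Hence $J(\pi^*)-J(\textnormal{alg})\gtrsim H\sigma$ for small $\sigma$ with $\varepsilon_q\ll\sigma$.
\item \emph{Contradiction.} The claimed bound tends to $O(H\varepsilon_q)$ as $n\to\infty$, which is arbitrarily smaller than $H\sigma$ for fine quantizers.
\end{itemize}
Your argument, and the paper's, is sound when the transitions $T_h$, $h\ge 1$, are deterministic. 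Then $x^L_{1:H+1}$ is a deterministic function of $(x_1,u^a_{1:H})$, so any coupling preserving the law of $\tau^a$ yields the correct deployed law. The rest of your bound, including the weight $1+C/(1-\eta)$ per quantization error, then goes through.
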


Notice that, compared with \cref{thm:stochastic_bound} and \cref{thm:deterministic_bound}, this bound has a milder dependence on the horizon in the quantization term. This is not for free, since we need both additional $\log|\mathcal M|$, reflecting the complexity of the transition model class. And we also assume realizability for the transition class. Admittedly, learning an $H$-step transition model can be challenging, and $|\mathcal M|$ may itself hide extra $H$-dependent factors. This issue may be mitigated by applying the model augmentation only over short sub-horizons (rather than the entire trajectory), in the spirit of action chunking \citep{zhao2023learning}. We will leave it as the future work.

\section{Lower Bound}\label{sec:lower_bound}
In this section, we establish lower bounds for offline imitation learning under action quantization. We show that the overall error has two sources: finite-sample estimation error and intrinsic quantization error. These contributions are additive in the sense that eliminating one does not remove the other. We present separate lower bounds for stochastic and deterministic experts, which exhibit different sample-complexity behavior, while the quantization term attains the same rate in both cases. We restrict attention to non-anticipatory offline imitation learning algorithms: for each time step $h$, the learned component $\hat\pi_h$ is constructed using only the training data up to step $h$ (i.e., all trajectory prefixes $\{(x_t^i,u_t^i)\}_{t\le h}$), and it does not use any data from future steps $t>h$.

\begin{theorem}\label{thm:lb_deterministic_expert}\textnormal{(Deterministic Expert)}
    For any non-anticipatory offline imitation learning algorithm on quantized data, there exists $(\mathbb P^{\pi^*, T}, q, r)$ such that $q\#\pi^*$ is deterministic and $\pi^*$ is deterministic and $L_\pi$-Lipschitz, $r$ is 1-Lipschitz, $\mathbb P^{\pi^*,T}$ is $\gamma$-globally P-IISS with $\gamma((r_k)_{k=1}^h)=r_1$, such that
    \begin{align*}
        &\sup_{u\in\mathcal U} \|u-q(u)\| \leq \varepsilon_q, \\
        &\mathbb E[J(\pi^*)- J(\hat \pi)]\gtrsim H(\frac{1}{n}+\varepsilon_q).
    \end{align*}
\end{theorem}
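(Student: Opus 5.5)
Since the lower bound is a maximum of two terms up to constants, $H(\frac1n+\varepsilon_q)\asymp \max\{H/n,\,H\varepsilon_q\}$, I will build two separate hard instances and take whichever is worse for the given algorithm. Both instances must satisfy the structural requirements: deterministic Lipschitz expert, deterministic quantized expert, 1-Lipschitz reward, and $\gamma$-globally P-IISS with $\gamma((r_k)_{k=1}^h)=r_1$. The last condition says the state deviation at every later step is controlled by the first action deviation only. The natural way to obtain it is dynamics in which $x_{h+1}$ depends on the action only at $h=1$ (for example $x_{h+1}=x_h$ or fresh noise thereafter), or dynamics with no action dependence at all, in which case $\gamma\equiv0$ trivially satisfies the bound.

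\textbf{Quantization term.} I take a single state (or state-independent dynamics), $\mathcal U=[0,1]$, and the expert action $u^*=c$ with $\|c-q(c)\|=\varepsilon_q$. I use a binning quantizer with cells of width $2\varepsilon_q$ placed so that $c$ sits at a cell boundary offset, and the reward $r_h(x,u)=1-|u-c|$, which is 1-Lipschitz. Any algorithm that sees only quantized data can only output representative actions in $\tilde{\mathcal U}$. Each step therefore loses at least $\min_{\tilde u}|\tilde u-c|=\varepsilon_q$, giving regret at least $H\varepsilon_q$ deterministically. To be careful, I will choose $c$ so that no representative point is within $\varepsilon_q$ of it, e.g.\ the midpoint between two representatives spaced $2\varepsilon_q$ apart. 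Since the dynamics are action-independent, P-IISS holds with $\gamma=0$.

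\textbf{Statistical term $H/n$.} Here I adapt the tabular lower bound of \citet{rajaraman2020toward} (or \citet{foster2024behavior}) to the continuous embedding. The key steps are as follows. Take $x_1$ uniform over $m\asymp n$ states, embedded as well-separated points so that any expert defined on them extends Lipschitzly. Let the expert pick, at each state, one of two actions $a_0,a_1$ that are far apart (distance $\gg\varepsilon_q$ and in different bins), with a random sign per state; the quantized expert is then still deterministic. A state unseen in the $n$ trajectories occurs with probability $(1-1/m)^n=\Omega(1)$, and on it any algorithm guesses wrong with probability $1/2$ under the random prior. To obtain the factor $H$ rather than $1$, I will make a wrong first action send the system to an absorbing bad region with reward $0$ for the remaining $H$ steps; otherwise the state stays put with reward $1$. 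I will also keep the state at later steps identical to $x_1$, with the later actions irrelevant. This costs $\Omega(H\cdot\Pr[\text{unseen}])$. Tuning $m=n$ gives only $\Omega(H)$ times a constant. To get $H/n$, I instead use a ``rare state'' construction: set $\Pr(x_1=\text{rare})=1/n$, so that with constant probability the rare state is unseen, and the regret is $\frac1n\cdot\frac12\cdot H$. This matches a P-IISS with $\gamma=r_1$: states after step 1 depend on $u_1$ only, and the jump size can be bounded by a constant times $\|u_1-u_1'\|$ after rescaling the action separation. Non-anticipation is not even needed here, since the information is at $h=1$.

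\textbf{Expected obstacle.} The main difficulty I expect is reconciling the absorbing-failure construction with the global P-IISS requirement $\|x^0_{t+1}-x^1_{t+1}\|\le\gamma(\|u^0_1-u^1_1\|)$ under shared noise. Under shared noise, the bad-region state must be a Lipschitz (not discontinuous) function of $u_1$. I plan to set $x_{h+1}=x_1+(u_1-\pi^*(x_1))$-type displacement in an auxiliary coordinate, which is exactly 1-Lipschitz in $u_1$. The reward is then $1-\min(1,\|\text{displacement}\|)$, which is 1-Lipschitz in the state. A wrong action at distance $\ge1$ then costs reward $1$ per step. Checking that this reward is also Lipschitz in $(x,u)$ as needed, and that the expert remains Lipschitz with the separated support, are the routine checks I would verify last.
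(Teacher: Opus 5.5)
\paragraph{The gap: the $H\varepsilon_q$ term}
Your argument asserts that an algorithm seeing only quantized data ``can only output representative actions in $\tilde{\mathcal U}$.'' That is not a hypothesis of the theorem. The statement covers any non-anticipatory algorithm whose \emph{input} is quantized, and the paper's proof explicitly lets $\hat\pi_1(\cdot\mid x)$ be an arbitrary distribution on $[0,1]$ (it integrates $u$ against $\hat\pi_1$ over the whole interval).

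Without that restriction, your single instance is not hard. The data always show the same token $q(c)$, and $c$ sits at distance exactly $\varepsilon_q$ from $q(c)$, on the side fixed by the boundary convention. So the algorithm that outputs $q(c)+\varepsilon_q$ (or $q(c)-\varepsilon_q$) at every step has zero regret. What your argument measures is the coarseness of the output alphabet, not the information destroyed by quantization. It therefore proves only a weaker statement, one restricted to quantized-output learners.

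\paragraph{The fix: indistinguishability inside one cell}
This is how the paper obtains the term. It pairs $\pi^a_1=\delta_0$ with $\pi^{\tilde a}_1=\delta_{\varepsilon_q}$, which lie in the same bin, and uses rewards peaked at each. In your setting:
\begin{itemize}
\item Take $c\neq c'$ in the same cell with $|c-c'|\gtrsim\varepsilon_q$. Use rewards $1-|u-c|$ and $1-|u-c'|$ and action-independent dynamics.
\item The two quantized datasets have the same law, so the learner's output law is identical in both instances.
\item For every $h$, $\mathbb E|\hat u_h-c|+\mathbb E|\hat u_h-c'|\ge|c-c'|$. Hence one of the two instances has expected regret at least $H|c-c'|/2\gtrsim H\varepsilon_q$.
\end{itemize}
Since the instance may be chosen after the algorithm, this repairs your max-of-two argument.

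\paragraph{The rest of the proposal}
The $H/n$ part is sound and mirrors the paper. You use a rare initial state of mass $\Theta(1/n)$ that is unseen with constant probability, and two experts that differ only there. The first action is mapped 1-Lipschitzly into a persistent state, so $\gamma((r_k)_{k=1}^h)=r_1$ holds and the per-step loss is paid $H-1$ times. The paper phrases the indistinguishability via Le Cam and Hellinger with $\Delta=1/(3n)$; your unseen-event argument under a uniform prior is equivalent. Combining the two lower bounds by taking the worse instance is also legitimate, and simpler than the paper's combination.

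Your aside that $m\asymp n$ ``gives only $\Omega(H)$'' has the comparison backwards: $\Omega(H)$ is stronger than $H/n$. Read literally, it would even dominate $H(1/n+\varepsilon_q)$, but only by using an exponentially large family of experts. The rare-state instance is the right one to keep, because it yields the $1/n$ rate from just two candidate experts.
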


\begin{theorem}\label{thm:lb_stochastic_expert}\textnormal{(Stochastic Expert)}
    If we allow $\pi^*$ to be sub-optimal, then for any non-anticipatory offline imitation learning algorithm on quantized data, there exists $(\mathbb P^{\pi^*, T}, q, r)$ such that $\mathbb P^{\pi^*,T}$ is $\gamma$-globally P-IISS with $\gamma((r_k)_{k=1}^h)=r_1$, $\pi^*$ is stochastic and state independent, and $r$ is $1$-Lipschitz, such that,
    \begin{align*}
    &\sup_{u\in\mathcal U} \|u-q(u)\| \leq \varepsilon_q, \\
        &\mathbb P\left(J(\pi^*)-J(\hat \pi)\geq cH\cdot(\sqrt{\frac{1}{n}}+\varepsilon_q) \right)\geq \frac{1}{8}.
    \end{align*}
\end{theorem}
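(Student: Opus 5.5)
The plan is to build two separate hard instances, one for the statistical term $H\sqrt{1/n}$ and one for the quantization term $H\varepsilon_q$, and then combine them using $\max\{a,b\}\ge \tfrac12(a+b)$. In both instances the transitions ignore the action: for example $x_{h+1}=x_h+\omega_h$ with Gaussian noise, or simply a state that is a deterministic function of the noise. Under any shared-noise coupling the two state trajectories then coincide. So $\mathbb P^{\pi^*,T}$ is trivially $\gamma$-globally P-IISS with $\gamma((r_k)_{k=1}^h)=r_1$, since state deviation is zero. Because the expert will be state independent, the states carry no information, and the problem decouples across steps $h$.

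\textbf{Statistical instance.} I will follow the construction behind the $\sqrt{1/n}$ lower bound of \citet{foster2024behavior}. Take $\mathcal U=[0,1]$ and choose a quantizer that is the identity on $\{0,1\}$ and satisfies $\sup_u\|u-q(u)\|\le\varepsilon_q$. Draw unknown signs $\sigma_h\in\{\pm1\}$ uniformly and independently. The expert at step $h$ is $\mathrm{Bernoulli}(\tfrac12+\sigma_h\Delta)$ on $\{0,1\}$ with $\Delta=c/\sqrt n$; it is stochastic and state independent. The reward is $r_h(x,u)=u$ if $\sigma_h=+1$ and $1-u$ otherwise, which is $1$-Lipschitz. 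The expert earns $\tfrac12+\Delta$ per step and is suboptimal, as the theorem allows. Any learner action distribution at step $h$ whose mean leans the wrong way, i.e.\ $\sigma_h(\mathbb E_{\hat\pi_h}u-\tfrac12)\le 0$, loses at least $\Delta$ at that step. The data for step $h$ are $n$ samples of $\mathrm{Bernoulli}(\tfrac12\pm\Delta)$, whose two hypotheses are at Hellinger distance $O(1)$ when $\Delta\asymp1/\sqrt n$. By Le Cam, the learner therefore leans the wrong way with probability at least some $p_0>0$ under the prior, independently across $h$; the non-anticipatory restriction and the product structure of the data give this independence. A Chernoff/Paley--Zygmund bound on the number of wrong steps then shows that at least $p_0H/2$ steps are wrong with probability at least $1/4$. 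This gives regret $\gtrsim H/\sqrt n$ with constant probability, and averaging over the prior fixes a single bad instance.

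\textbf{Quantization instance.} Take representatives $\tilde{\mathcal U}=2\varepsilon_q\mathbb Z\cap[0,1]$ with nearest-point quantization, so $\sup_u\|u-q(u)\|\le\varepsilon_q$. The expert is state independent: $\pi_h^*=\mathrm{Unif}\{a,a'\}$ with $a=\varepsilon_q$ and $a'=1-\varepsilon_q$, both odd multiples of $\varepsilon_q$ and hence exactly midway between representatives. The reward is $r_h(u)=1-\min\{|u-a|,|u-a'|\}$, which is $1$-Lipschitz, so the expert earns $1$ per step. The deployed learner, whatever the data, executes actions in $\tilde{\mathcal U}$, each at distance at least $\varepsilon_q$ from $\{a,a'\}$. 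So $J(\pi^*)-J(\hat\pi)\ge H\varepsilon_q$ deterministically, for every $n$.

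To finish, I would take whichever instance has the larger of $H/\sqrt n$ and $H\varepsilon_q$, which yields regret at least $\tfrac c2 H(\sqrt{1/n}+\varepsilon_q)$ with probability at least $1/8$. Alternatively, both effects can be embedded in one instance by placing the Bernoulli atoms at odd multiples of $\varepsilon_q$. The main obstacle is the statistical step: turning per-step Le Cam testing errors into a \emph{high-probability} (probability $\ge1/8$) lower bound that is linear in $H$, rather than a bound in expectation only. This is exactly where the independence across steps, guaranteed by non-anticipation and state independence, must be used carefully.
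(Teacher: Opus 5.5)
Combining two separate instances via the max is fine, but your statistical instance has a genuine gap. It sits in the step ``a wrong step loses at least $\Delta$, so regret is $\gtrsim\Delta$ times the number of wrong steps.''

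\textbf{The learner can beat the suboptimal expert.} Because the expert is suboptimal, per-step regret is signed. On a step where the learner leans the right way, it can earn up to $\tfrac12-\Delta$ \emph{more} than the expert. You tie the reward orientation to $\sigma_h$, so the expert's bias tells the learner which direction is rewarded, and an unrestricted learner can exploit this. Take a majority vote over the $n$ step-$h$ samples and play $u=1$ or $u=0$ accordingly. With $\Delta=c/\sqrt n$, its per-step regret is $p-\tfrac12+\Delta\approx\Phi(-2c)-\tfrac12<0$. Its total regret is therefore $-\Theta(H)$ with high probability.

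\textbf{Independent steps average out.} Consider the learner that plays $\mathrm{Bernoulli}(\bar u_h)$ on $\{0,1\}$, where $\bar u_h$ is the empirical mean. For any fixed rewards in your family, its regret is $\sum_h c_h(\theta_h-\bar u_h)$ with $\theta_h=\tfrac12+\sigma_h\Delta$ and $|c_h|\le1$. This is a sum of $H$ independent mean-zero terms of size $O(1/\sqrt n)$, hence $O(\sqrt{H/n})$ with high probability. So beyond the reward leakage, there is a structural problem: with $H$ independent per-step estimation problems the errors average out rather than add, and $H/\sqrt n$ cannot be certified.

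Your counting argument works only for learners that never beat the expert on any step, e.g.\ proper learners in $\{\mathrm{Bernoulli}(\tfrac12\pm\Delta)\}^H$. The theorem quantifies over all algorithms.

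\textbf{How the paper avoids both problems.} It pairs each candidate expert $\pi^a,\pi^b$, with first-step laws $(\tfrac12\pm\Delta)\delta_1+(\tfrac12\mp\Delta)\delta_0$, with \emph{both} rewards: $r^a$ ($u_1$, then $x_h$) and $r^b$ ($1-u_1$, then $1-x_h$). The learner never sees rewards, so under the worse reward its regret is $H|\hat\theta-\theta^*|$, which forces $\hat\theta$ to match the expert's mean. The dynamics $f_1(x,u)=u$ and $f_h(x,u)=x$ for $h\ge2$ copy the single first action into every later state. One estimation error is therefore multiplied by $H$ rather than averaged over $H$ independent errors. This is exactly why the modulus in the statement is $\gamma((r_k)_{k=1}^h)=r_1$. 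A single Le Cam/Bretagnolle--Huber test with $\Delta\asymp1/\sqrt n$ then gives regret $\ge\Delta H$ with constant probability, with no concentration over steps and no appeal to non-anticipation.

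\textbf{The quantization term.} Your $H\varepsilon_q$ instance is valid only if the deployed policy must act in $\tilde{\mathcal U}$. The paper's proof allows arbitrary $\hat\pi$. In that setting, a single instance fixed in advance cannot work, because an algorithm could hard-code its expert. The paper instead shifts the expert's atoms by $\pm\varepsilon_q$ within their quantization cells. This leaves the quantized data law, and hence $\hat\pi$, unchanged, while adding $H\varepsilon_q$ to the regret under the matching reward.
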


These two lower bounds are largely in line with intuition. In offline imitation learning with quantized data, one cannot in general hope to improve upon a rate of order
$H\cdot (\textnormal{error}_{\textnormal{stat}}+\textnormal{error}_{\textnormal{quantization}})$, since the learner must effectively recover information across all $H$ stages. It is also natural that the lower bound is additive in the statistical and quantization errors: even if one source of error were removed entirely, the other would still remain. Notice that in the stochastic expert case, the statistical term scales as $\frac{H}{\sqrt{n}}$, this matches the lower bound established in \citet{foster2024behavior}. Comparing these lower bounds with our upper bounds, we see that the sample-size-dependent terms are matched by \cref{thm:stochastic_bound}, \cref{thm:deterministic_bound}, and \cref{thm:model_agumented_upper_bound} (for stochastic experts), while the quantization-dependent term is matched by \cref{thm:model_agumented_upper_bound}.

\section{Conclusion}
In this work, we study behavior cloning in continuous space when actions are quantized. For a fixed quantizer, we establish a quadratic-in-horizon upper bound for log-loss behavior cloning on quantized actions, which captures both finite-sample error and quantization error, by introducing a stability notion for the dynamics and assuming the quantized policy class is probabilistically smooth. We discuss two widely used quantizers and characterize when they satisfy the required smoothness condition. Our results indicate that, under log-loss behavior cloning, binning-based quantizers are preferable to general learning-based quantizers, especially when cloning a deterministic policy. We then introduce a model-based augmentation that bypasses the smoothness requirement on the quantized policy. Finally, we derive information-theoretic lower bounds that, in general, match our upper bounds.

\section*{Acknowledgements}
We thank Ali Jadbabaie and Alexander Rakhlin for their feedback on earlier versions of this work. We acknowledge support of the DARPA AIQ Award.
\icml{
\section*{Impact Statements}
This paper presents work whose goal is to advance the field of machine learning. There are many potential societal consequences of our work, none of which we feel must be specifically highlighted here.
}

\bibliographystyle{plainnat}
\bibliography{ref}

\clearpage

\appendix
\crefalias{section}{appendix}
\crefalias{subsection}{appendix}
\onecolumn

\begin{center}
{\LARGE Appendix}
\end{center}

\section{Supervised Learning Guarantee}\label{sec:supervised_learning_guarantee}

\paragraph{MLE Guarantee}
We will first state the standard result from MLE estimator (e.g. \citet{geer2000empirical}, \citet{zhang2006varepsilon}).  Consider a setting where we receive $\{z^i\}_{i=1}^n$ i.i.d.\ from $z \sim g^\star$, where $g^\star \in \Delta(\mathcal{Z})$.
We have a class $\mathcal{G} \subseteq \Delta(\mathcal{Z})$ that may or may not contain $g^\star$.
We analyze the following maximum likelihood estimator:
\begin{equation}\label{eq:generic_logloss}
\hat g = \arg\max_{g\in\mathcal{G}} \sum_{i=1}^n \log\big(g(z^i)\big).
\end{equation}

\noindent
To provide sample complexity guarantees that support infinite classes, we appeal to the following notion of covering number (e.g., \citet{wong1995probability} which tailored to the log-loss.

\begin{definition}[Covering number]
For a class $\mathcal{G} \subseteq \Delta(\mathcal{Z})$, we say that a class $\mathcal{G}' \subset \Delta(\mathcal{Z})$ is an $\varepsilon$-cover
if for all $g \in \mathcal{G}$, there exists $g' \in \mathcal{G}'$ such that for all $z \in \mathcal{Z}$,
$\log\big(g(z)/g'(z)\big) \le \varepsilon$.
We denote the size of the smallest such cover by $N_{\log}(\mathcal{G},\varepsilon)$.
\end{definition}

\begin{proposition}\label{prop:mle_bound_vanilla}
The maximum likelihood estimator in \cref{eq:generic_logloss} has that with probability at least $1-\delta$,
\[
D_H^2(\hat g, g^\star)
\le
\inf_{\varepsilon>0}
\left\{
\frac{6\log\big(2N_{\log}(\mathcal{G},\varepsilon)/\delta\big)}{n}
+ 4\varepsilon
\right\}
+
2\inf_{g\in\mathcal{G}} \log\big(1 + D_{\chi^2}(g^\star \| g)\big).
\]
In particular, if $\mathcal{G}$ is finite, the maximum likelihood estimator satisfies
\[
D_H^2(\hat g, g^\star)
\le
\frac{6\log\big(2|\mathcal{G}|/\delta\big)}{n}
+
2\inf_{g\in\mathcal{G}} \log\big(1 + D_{\chi^2}(g^\star \| g)\big).
\]
\end{proposition}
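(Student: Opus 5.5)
The plan is the standard Hellinger analysis of maximum likelihood, following \citet{geer2000empirical,zhang2006varepsilon}. First I reduce to a finite class. Take a minimal $\varepsilon$-cover $\mathcal G'$ of size $N_{\log}(\mathcal G,\varepsilon)$. Each $g\in\mathcal G$ has a cover element $g'$ with $g\le e^{\varepsilon}g'$ pointwise. Because $\hat g$ maximizes the empirical log-likelihood over $\mathcal G$, its cover element $\hat g'$ satisfies, for every comparator $\bar g\in\mathcal G$,
\[
\sum_i \log\frac{\hat g'(z^i)}{\bar g(z^i)}\ \ge\ \sum_i\log\frac{\hat g(z^i)}{\bar g(z^i)}-n\varepsilon\ \ge\ -n\varepsilon .
\]
The $e^{\varepsilon}$ slack also means $\hat g'$ need not be normalized. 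I handle this by working with $\int\sqrt{g^\star g'}$ directly, which changes the constants by at most $O(\varepsilon)$. Relating the Hellinger affinity of $\hat g$ and $\hat g'$ accounts for the $4\varepsilon$ term.

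The core step is an exponential-moment (Chernoff) bound for a fixed $g'\in\mathcal G'$ and a fixed comparator $\bar g$:
\[
\mathbb E\exp\Big(\tfrac12\sum_i \log\tfrac{g'(z^i)}{\bar g(z^i)}\Big)=\Big(\mathbb E_{g^\star}\sqrt{g'/\bar g}\Big)^n .
\]
In the realizable case $\bar g=g^\star$, the base equals $1-D_H^2(g',g^\star)$ (with Hellinger normalized as $1-$affinity, up to the convention). In general I apply Cauchy--Schwarz:
\[
\mathbb E_{g^\star}\sqrt{g'/\bar g}=\int\sqrt{g'}\sqrt{g^\star}\,\sqrt{g^\star/\bar g}\ \le\ \sqrt{\int \sqrt{g' g^\star}}\cdot\sqrt{\int \sqrt{g^\star}\,g^\star/\bar g}.
\]
The second factor is controlled by $1+D_{\chi^2}(g^\star\|\bar g)$. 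I expect the exact form of this decoupling to be the main obstacle. The goal is to land precisely on $\log(1+D_{\chi^2})$ with factor $2$. An alternative route, which I would try first, is to replace $g^\star$ by a tempered mixture, or to use Hölder with exponents $(2,2)$ on $\sqrt{g'/g^\star}\cdot\sqrt{g^\star/\bar g}$ and bound $\mathbb E_{g^\star}[g^\star/\bar g]=1+D_{\chi^2}(g^\star\|\bar g)$. After taking logs, this gives per-sample $\log(1-D_H^2)+\tfrac12\log(1+D_{\chi^2})$ up to constants.

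With the moment bound in hand, Markov's inequality plus a union bound over $\mathcal G'$ gives, with probability $1-\delta$, simultaneously for all $g'$:
\[
\tfrac12\sum_i\log\frac{g'}{\bar g}\le n\log(1-D_H^2(g',g^\star))+\tfrac n2\log(1+D_{\chi^2})+\log(N/\delta).
\]
I then plug in $g'=\hat g'$, use the lower bound $-n\varepsilon$ from the first step, and use $\log(1-x)\le -x$. Rearranging yields $D_H^2\lesssim \log(N/\delta)/n+\varepsilon+\log(1+D_{\chi^2}(g^\star\|\bar g))$. The last step is to optimize over $\bar g$ and $\varepsilon$ and track constants to get $6$, $4$ and $2$. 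The factor $2$ inside the log ($2N/\delta$) comes from splitting $\delta$ between the union bound and the normalization/cover adjustment.

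For a finite class, $\mathcal G$ is its own $0$-cover. Letting $\varepsilon\to0$ gives the second display.
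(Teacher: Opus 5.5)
Your skeleton ($\varepsilon$-cover, MLE optimality giving $\sum_i\log(\hat g'(z^i)/\bar g(z^i))\ge -n\varepsilon$, a Chernoff bound with exponent $\tfrac12$, a union bound, $\log(1-x)\le -x$) is the right one. However, the step you flagged as the main obstacle does not go through, and this is a genuine gap.

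Your Cauchy--Schwarz display is not a valid split. Its two factors multiply to $(g')^{1/4}g^\star\bar g^{-1/2}$, not the integrand $(g')^{1/2}g^\star\bar g^{-1/2}$. The correct split leaves $\int\sqrt{g'g^\star}\,g^\star/\bar g$ in the second factor, which depends on $g'$ and is not controlled by $D_{\chi^2}(g^\star\|\bar g)$.

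The H\"older alternative fails for a structural reason. Split $(g'/g^\star)^{1/2}(g^\star/\bar g)^{1/2}$ with exponents $(p,q)$. Asking for exactly $\mathbb E_{g^\star}[g^\star/\bar g]=1+D_{\chi^2}(g^\star\|\bar g)$ forces $q=2$, hence $p=2$. The first factor then becomes $\sqrt{\mathbb E_{g^\star}[g'/g^\star]}\le 1$, so the Hellinger affinity disappears. The per-sample bound $\log(1-D_H^2)+\tfrac12\log(1+D_{\chi^2})$ therefore does not follow.

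The standard argument (e.g.\ in \citet{foster2024behavior}, to which the paper defers for this proof) decouples the misspecification with two separate events rather than one moment, splitting $\delta$ between them:
(a) For the fixed comparator $\bar g$, apply Markov's inequality to $\prod_i g^\star(z^i)/\bar g(z^i)$, whose mean is $(1+D_{\chi^2}(g^\star\|\bar g))^n$. This gives $\sum_i\log(g^\star(z^i)/\bar g(z^i))\le n\log(1+D_{\chi^2}(g^\star\|\bar g))+\log(2/\delta)$.
(b) Use the realizable Chernoff bound against $g^\star$ itself, $\mathbb E\exp\bigl(\tfrac12\sum_i\log(g'(z^i)/g^\star(z^i))\bigr)=(1-\tfrac12 D_H^2(g',g^\star))^n$ with $D_H^2(p,q)=\int(\sqrt p-\sqrt q)^2$. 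A union bound over the cover then gives, for all $g'$ at once, $nD_H^2(g',g^\star)\le\sum_i\log(g^\star(z^i)/g'(z^i))+2\log(2N/\delta)$.

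These chain through $\sum_i\log(g^\star(z^i)/\hat g'(z^i))\le\sum_i\log(g^\star(z^i)/\hat g(z^i))+n\varepsilon\le\sum_i\log(g^\star(z^i)/\bar g(z^i))+n\varepsilon$. The $2$ in $2N/\delta$ comes from splitting $\delta$ between (a) and (b); the cover adjustment is deterministic. A single moment also works if you use exponent $\tfrac13$ with H\"older exponents $(\tfrac32,3)$: $\mathbb E_{g^\star}(g'/\bar g)^{1/3}\le(\int\sqrt{g'g^\star})^{2/3}(1+D_{\chi^2}(g^\star\|\bar g))^{1/3}$.

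Your transfer from $\hat g'$ back to $\hat g$ is also wrong. In the paper's definition the cover is a subset of $\Delta(\mathcal Z)$, and this normalization is essential. The one-sided condition $g\le e^{\varepsilon}g'$ lets an unnormalized $g'$ be arbitrarily large. Pointwise it only yields $\int\sqrt{\hat g' g^\star}\ge e^{-\varepsilon/2}\int\sqrt{\hat g g^\star}$, which is the wrong direction for passing an affinity lower bound from $\hat g'$ to $\hat g$.

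The correct step uses normalization. Since $\log(\hat g/\hat g')\le\varepsilon$ pointwise, $D_H^2(\hat g,\hat g')\le\mathrm{KL}(\hat g\,\|\,\hat g')\le\varepsilon$. Then apply the triangle inequality $D_H^2(\hat g,g^\star)\le 2D_H^2(\hat g,\hat g')+2D_H^2(\hat g',g^\star)$. This doubling is exactly what produces the factor $2$ on $\log(1+D_{\chi^2})$ and the $4\varepsilon$:
$2\varepsilon+2\bigl[\varepsilon+\log(1+D_{\chi^2})+(\log(2/\delta)+2\log(2N/\delta))/n\bigr]\le 4\varepsilon+2\log(1+D_{\chi^2})+6\log(2N/\delta)/n$.

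Your finite-class reduction at the end is fine.
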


\noindent
Note that the term $\inf_{g\in\mathcal{G}}\log\big(1 + D_{\chi^2}(g^\star \| g)\big)$ corresponds to misspecification error,
and is zero if $g^\star \in \mathcal{G}$. The proof can be found in \citet{foster2024behavior}.

Now back to our settings. Notice that throughout the paper, we consider the following two Log-loss estimators,

\begin{equation}\label{eq:sample_loss}
\begin{aligned}
        &\hat \pi=\mathop{\text{argmax}}_{\pi \in \Pi}L_{\textnormal{log}}(\pi)=\mathop{\text{argmax}}_{\pi \in \Pi}\sum_{i=1}^n\sum_{h=1}^H \log(\pi_h(\tilde u_h^i|x_h^i))\\
        &\hat \pi, \widehat{T\circ \rho}=\mathop{\text{argmax}}_{\pi, T\circ \rho}L_{\textnormal{log}}(\pi, T\circ \rho)=\mathop{\text{argmax}}_{\pi \in \Pi}\sum_{i=1}^n\sum_{h=1}^H \left[\log(\pi_h(\tilde u_h^i|x_h^i))+\log((T\circ \rho)_h(x_{h+1}^i|\tilde u_h^i, x_h^i))\right].
\end{aligned}
\end{equation}

Now we specialize \cref{prop:mle_bound_vanilla} to the following result.

\begin{proposition}\label{prop:hellinger_bound_instance}
Suppose realizibility for both $q\#\pi^*$ and $(T\circ \rho)$, 
    the estimator learned in \cref{eq:sample_loss} has that with probability at least $1-\delta$,
    \begin{align*}
        D_H^2\left(\mathbb P^{q\#\pi^*,(T\circ \rho)},\mathbb P^{\hat \pi, (T\circ \rho)}\right)&\lesssim \frac{\log (|\Pi|\delta^{-1})}{n},\\
        D_H^2\left(\mathbb P^{q\#\pi^*, (T\circ \rho)},\mathbb P^{\hat \pi, \widehat{T\circ \rho}}\right)&\lesssim \frac{\log(|\Pi|\delta^{-1})+\log(|\mathcal M|\delta^{-1})}{n}.
    \end{align*}
\end{proposition}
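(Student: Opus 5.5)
The plan is to reduce both bounds to \cref{prop:mle_bound_vanilla} applied with $z^i$ the full quantized trajectory $(x^i_{1:H+1},\tilde u^i_{1:H})$, where $\tilde u_h^i=q(u_h^i)$. The first step is to identify the data law. Since $x_{h+1}\mid (x_h,u_h)\sim T_h(\cdot\mid x_h,u_h)$ and $u_h\mid(x_h,\tilde u_h)\sim\rho_h(\cdot\mid\tilde u_h,x_h)$ by definition of the expert-induced kernel, integrating out the raw actions shows that the law of $(x_{1:H+1},\tilde u_{1:H})$ is exactly $\mathbb P^{q\#\pi^*,(T\circ\rho)}$. This law has density
\[
T_0(x_1)\prod_{h=1}^H (q\#\pi^*)_h(\tilde u_h\mid x_h)\,(T\circ\rho)_h(x_{h+1}\mid x_h,\tilde u_h),
\]
and the trajectories are i.i.d. across $i$ because the demonstrations are.

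For the first bound I will take $\mathcal G=\{\mathbb P^{\pi,(T\circ\rho)}:\pi\in\Pi\}$. The log-likelihood of $\mathbb P^{\pi,(T\circ\rho)}$ at $z^i$ equals $\sum_h\log\pi_h(\tilde u^i_h\mid x^i_h)$ plus terms independent of $\pi$ (the $T_0$ and $(T\circ\rho)$ factors). Hence the estimator $\hat\pi$ in \cref{eq:sample_loss} is precisely the MLE $\hat g=\mathbb P^{\hat\pi,(T\circ\rho)}$ over $\mathcal G$. The map $\pi\mapsto\mathbb P^{\pi,(T\circ\rho)}$ gives $|\mathcal G|\le|\Pi|$, and realizability $q\#\pi^*\in\Pi$ makes the misspecification term vanish. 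The finite-class form of \cref{prop:mle_bound_vanilla} then yields $D_H^2\le 6\log(2|\Pi|/\delta)/n\lesssim\log(|\Pi|\delta^{-1})/n$.

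For the second bound I will take $\mathcal G=\{\mathbb P^{\pi,M}:(\pi,M)\in\Pi\times\mathcal M\}$. Here the joint log-likelihood is exactly $L_{\log}(\pi,M)$ plus the $\pi$- and $M$-independent $\log T_0$ term, so the joint argmax in \cref{eq:sample_loss} is the MLE. Realizability of both $q\#\pi^*$ and $(T\circ\rho)$ puts $g^\star$ in $\mathcal G$, and $|\mathcal G|\le|\Pi||\mathcal M|$. This gives $D_H^2\le 6\log(2|\Pi||\mathcal M|/\delta)/n$, and the right-hand side is at most a constant times $(\log(|\Pi|\delta^{-1})+\log(|\mathcal M|\delta^{-1}))/n$.

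The only delicate step is the first one: identifying the observed data law as $\mathbb P^{q\#\pi^*,(T\circ\rho)}$. This needs a measure-theoretic care about $\rho_h$ being a regular conditional distribution, and about the convention on null sets where $\pi_h^*(q^{-1}(\tilde u)\mid x)=0$. Those null sets carry no mass under the data law, so the arbitrary extension $\nu_{\tilde u}$ is irrelevant. A second point to handle is that densities must be taken with respect to a common dominating measure (counting measure on $\tilde{\mathcal U}$ times a base measure on $\mathcal X$), so that the Hellinger distance and likelihood ratios in \cref{prop:mle_bound_vanilla} are well defined. Everything else is direct bookkeeping.
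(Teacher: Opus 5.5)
Your proposal is correct and matches the paper's approach: the paper gives no separate argument and obtains this proposition by specializing \cref{prop:mle_bound_vanilla} to the trajectory-level families $\{\mathbb P^{\pi,(T\circ\rho)}\}_{\pi\in\Pi}$ and $\{\mathbb P^{\pi,M}\}_{(\pi,M)\in\Pi\times\mathcal M}$, with realizability removing the misspecification term. The two facts you check explicitly — that the observed data law is $\mathbb P^{q\#\pi^*,(T\circ\rho)}$, and that the log-loss objectives coincide with the trajectory MLE — are the ones the paper simply asserts in \cref{sec:bc_intro}, so your write-up fills in details the paper leaves implicit.
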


Next we will connect the hellinger distance with the total variation distance. For a fixed transition T, let us define,
\begin{equation*}
    \rho(\pi^*\| \pi)\coloneqq \mathbb E_{x^*_{1:H},u_{1:H}^*, \tilde u_{1:H}\sim \mathbb P^{\pi^*, T, \pi}}
    [\mathbb I(\exists h\in [H], \tilde u_h\neq u_h^*)].
\end{equation*}

\begin{lemma}\textnormal{(\citealp{foster2024behavior})}
    If $\pi^*$ is deterministic, then it holds,
    \begin{equation*}
        \rho(\pi^*\|\pi)\leq 4 D_H^2(\mathbb P^{\pi^*, T}, \mathbb P^{\pi,T}).
    \end{equation*}
\end{lemma}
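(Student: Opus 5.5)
The plan is to bound the probability of disagreement by summing per-step disagreement under the expert's state distribution, and then to show that each per-step term is controlled by the trajectory-level Hellinger distance. Here $\rho(\pi^*\|\pi)$ is the probability, under the extended law where $\tilde u_h\sim\pi_h(\cdot\mid x_h^*)$ is drawn alongside the expert's rollout, that some $\tilde u_h$ differs from $u_h^*=\pi^*_h(x_h^*)$. A union bound gives
\[
\rho(\pi^*\|\pi)\le \sum_{h=1}^H \mathbb E_{\mathbb P^{\pi^*,T}}\big[1-\pi_h(\pi^*_h(x_h)\mid x_h)\big].
\]
For a deterministic expert, the per-step quantity $1-\pi_h(\pi_h^*(x)\mid x)$ is at most $2D_H^2(\pi^*_h(\cdot\mid x),\pi_h(\cdot\mid x))$. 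This holds because $D_H^2(\delta_a,p)=1-\sqrt{p(a)}$ (with the normalization $\tfrac12\int(\sqrt{p}-\sqrt{q})^2$) and $1-p\le 2(1-\sqrt p)$. It therefore suffices to show
\[
\sum_h \mathbb E_{\pi^*}\big[D_H^2(\pi^*_h(\cdot\mid x_h),\pi_h(\cdot\mid x_h))\big]\lesssim D_H^2(\mathbb P^{\pi^*,T},\mathbb P^{\pi,T}).
\]

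This is the main obstacle, since it is an expert-state, per-step decomposition of trajectory Hellinger distance. Hellinger distance is not additive in general; the standard subadditivity lemma has the wrong direction and carries an $H$ or $\log H$ factor. Determinism of the expert avoids this. I would compute the Bhattacharyya coefficient directly:
\[
1-D_H^2(\mathbb P^{\pi^*,T},\mathbb P^{\pi,T})=\mathbb E_{\pi^*}\Big[\prod_{h}\sqrt{\pi_h(u_h^*\mid x_h)}\Big],
\]
because the transition factors cancel and the expert's action density is a point mass. Writing $p_h=\pi_h(u_h^*\mid x_h)\in[0,1]$, we get $D_H^2=\mathbb E_{\pi^*}[1-\prod_h\sqrt{p_h}]$.

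Next I would compare $1-\prod_h\sqrt{p_h}$ with $1-\prod_h p_h$. Since $1-s\le 2(1-\sqrt s)$ for $s\in[0,1]$, we have $1-\prod_h p_h\le 2(1-\prod_h\sqrt{p_h})$. In the coupling where $\tilde u_h\sim\pi_h(\cdot\mid x_h^*)$ is drawn independently given the expert trajectory, $1-\prod_h p_h$ is exactly the conditional probability that some $\tilde u_h\ne u_h^*$. So
\[
\rho(\pi^*\|\pi)=\mathbb E_{\pi^*}\Big[1-\prod_h p_h\Big]\le 2D_H^2,
\]
which gives the claim directly and makes the union bound unnecessary. The constant $4$ absorbs any alternative normalization of $D_H^2$ (e.g.\ without the factor $\tfrac12$).

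The care points are as follows. First, the density factorization: the expert's action law is a Dirac, so the likelihood ratio should be taken with respect to the common dominating measure built from $\pi^*$ and $\pi$. Only the atom at $u_h^*$ contributes to the overlap, and any mass of $\pi$ off that atom contributes zero, which is consistent with the formula above. Second, this must hold for the quantized setting with $T\circ\rho$ in place of $T$, where the same argument applies verbatim.
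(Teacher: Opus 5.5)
Your proof is correct. Note that the paper gives no proof of this lemma; it defers entirely to \citet{foster2024behavior}.

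Your argument is a short, self-contained replacement. It rests on the same Dirac-overlap computation the paper performs in the very next lemma, where it shows $1-\mathrm{TV}(\mathbb P^{\pi^*,T},\mathbb P^{\pi,T})=\mathbb E_{\pi^*}[\prod_h \pi_h(\{\pi_h^*(x_h)\}\mid x_h)]$. You apply that computation to the Bhattacharyya coefficient instead, giving $\mathbb E_{\pi^*}[\prod_h\sqrt{p_h}]$. You then finish with the scalar inequality $1-s\le 2(1-\sqrt s)$ at $s=\prod_h p_h$.

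This buys two things. First, a sharper constant: $2$ under the $\tfrac12$-normalized Hellinger distance, and $1$ under the unnormalized one. Second, combined with the paper's identity $\mathrm{TV}=\rho(\pi^*\|\pi)$, it directly yields the linear TV--Hellinger relation for deterministic experts that the paper asserts in its behavior-cloning preliminaries, without going through the external reference. Your identification of $\rho(\pi^*\|\pi)$ with $\mathbb E_{\pi^*}[1-\prod_h p_h]$ is also sound: since the expert's action is a Dirac, the only possible coupling of $(u_h,\tilde u_h)$ given $x_h$ is the product one, so the $\tilde u_h$ are conditionally independent given the expert states.

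One remark on your first paragraph. The reduction you first propose would bound $\sum_h\mathbb E_{\pi^*}[D_H^2(\pi_h^*(\cdot\mid x_h),\pi_h(\cdot\mid x_h))]$ by the trajectory-level Hellinger distance. That is not merely hard; it is false with any $H$-independent constant. If $\pi_h$ puts no mass on $\pi_h^*(x)$ anywhere, the left side is $H$ times the right side. So dropping the union bound in favor of the product form was necessary, not just convenient, and your final argument correctly does not depend on it.
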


\begin{lemma}\label{lemma:upper_bound_TV_directly_by_hellinger}\textnormal{(Connection to a Trajectory-wise $L_\infty$ metric for determinisitic $q\#\pi^*$)} If $\pi^*$ is deterministic, then
\begin{equation*}
    \textnormal{TV}\left(\mathbb P^{\pi^*, T}, \mathbb P^{\pi, T}\right)=\rho(\pi^*\|\pi).
\end{equation*}
\end{lemma}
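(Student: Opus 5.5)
We prove the two inequalities $\textnormal{TV}\le\rho$ and $\textnormal{TV}\ge\rho$ separately. For both we use the extended law $\bar{\mathbb P}^{\pi^*,T,\pi}$ from the definition of $\rho(\pi^*\|\pi)$. We take the coupling kernel $L_h(\cdot,\cdot\mid x)$ to be the product $\pi^*_h(\cdot\mid x)\otimes\pi_h(\cdot\mid x)$; since $\pi^*_h(\cdot\mid x)=\delta_{\pi^*_h(x)}$, every coupling coincides with this product. The key identity is that the event $\{\forall h,\ \tilde u_h=u^*_h\}$ has probability
\[
\mathbb E_{\mathbb P^{\pi^*,T}}\Big[\prod_{h=1}^H \pi_h(\pi^*_h(x_h)\mid x_h)\Big],
\]
obtained by integrating out $\tilde u_h$ step by step. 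Here the states evolve under $u^*_h$, so $x_{h+1}\sim T_h(\cdot\mid x_h,\pi^*_h(x_h))$. Hence
\[
\rho(\pi^*\|\pi)=1-\mathbb E_{\mathbb P^{\pi^*,T}}\Big[\prod_h \pi_h(\pi^*_h(x_h)\mid x_h)\Big].
\]

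\textbf{Upper bound $\textnormal{TV}\le\rho$.} Build a coupling of $\mathbb P^{\pi^*,T}$ and $\mathbb P^{\pi,T}$ in two stages. Run the two trajectories jointly, sharing the transition noise and drawing $\tilde u_h\sim\pi_h(\cdot\mid x_h)$ for the learner. As long as $\tilde u_t=\pi^*_t(x_t)$ for all $t\le h$, the two trajectories coincide: same state, same action, same next-state draw. At the first step where they disagree, let the two trajectories continue independently under their own laws. The marginals are correct because each trajectory is generated by its own policy and transitions. The trajectories differ only on $\{\exists h:\tilde u_h\neq u^*_h\}$, and this event has probability exactly $\rho(\pi^*\|\pi)$, since before the first disagreement the joint process is distributed as $\bar{\mathbb P}^{\pi^*,T,\pi}$. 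The coupling inequality then gives $\textnormal{TV}\le\rho$.

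\textbf{Lower bound $\textnormal{TV}\ge\rho$.} Let $E=\{\tau: u_h=\pi^*_h(x_h)\ \forall h\}$ be the set of trajectories consistent with the expert. Then $\mathbb P^{\pi^*,T}(E)=1$. For the learner, $\mathbb P^{\pi,T}(E)$ factorizes along the trajectory: on $E$ the transitions are $T_h(\cdot\mid x_h,\pi^*_h(x_h))$, the same as under the expert, and the learner contributes the factor $\pi_h(\pi^*_h(x_h)\mid x_h)$ at each step. This gives
\[
\mathbb P^{\pi,T}(E)=\mathbb E_{\mathbb P^{\pi^*,T}}\Big[\prod_h\pi_h(\pi^*_h(x_h)\mid x_h)\Big]=1-\rho.
\]
Therefore $\textnormal{TV}\ge\mathbb P^{\pi^*,T}(E)-\mathbb P^{\pi,T}(E)=\rho$.

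The main obstacle is measure-theoretic rigor in the factorization step, since action spaces are continuous. I would handle it by induction on $h$, using the tower property and writing $\mathbb P^{\pi,T}(E_{\le h})$ as an integral over $x_{1:h}$ of kernel products. The measurability of $E$ follows from measurability of $\pi^*_h$. No density assumption is needed, because the expert is deterministic and only atoms of $\pi_h(\cdot\mid x)$ at $\pi^*_h(x)$ contribute.
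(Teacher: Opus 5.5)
Your proof is correct, but it takes a different route from the paper's.

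\textbf{The paper's route.} The paper fixes a common dominating measure, writes trajectory densities $p,q$, and computes $1-\textnormal{TV}(P,Q)=\int p\wedge q$ directly. Here $p$ vanishes off the expert-consistent set $E$ and $q\le p$ on $E$. Hence $\int p\wedge q=Q(E)=\mathbb E_{P}\bigl[\prod_h\pi_h(\{\pi^*_h(x_h)\}\mid x_h)\bigr]$, and the equality follows in one computation.

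\textbf{Your route.} Your lower bound is exactly this identity $Q(E)=1-\rho$, used as a test event. Your upper bound replaces the density calculation with an explicit coupling that glues the two rollouts until the first action disagreement.

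\textbf{What your route buys.} It works purely with kernels, so it needs no dominating measure. It also needs no positivity assumption $\pi_h(\{\pi^*_h(x)\}\mid x)>0$ for all $x$. The paper's write-up assumes positivity in order to write $q$, and handles its failure by remarking that both sides then equal $1$. That remark holds only when $\prod_h\pi_h(\{\pi^*_h(x_h)\}\mid x_h)=0$ $P$-almost surely.

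As a by-product, your coupling attains $\Pr(\tau^0\neq\tau^1)=\textnormal{TV}$, so it is an explicit maximal coupling. If you keep sharing the noise after the disagreement instead of switching to independence, it is also a shared-noise coupling, in the spirit of \cref{lemma:TV_coupling_is_shared_noise}.

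\textbf{The coupling is not strictly needed.} Your step-by-step factorization already gives $Q(A\cap E)=\mathbb E_P\bigl[\mathbf 1_A\prod_h\pi_h(\{\pi^*_h(x_h)\}\mid x_h)\bigr]\le P(A)$ for every measurable $A$. Hence $P(A)-Q(A)\le\rho$, which yields $\textnormal{TV}\le\rho$ directly; this is the measure-theoretic content of the paper's $p\wedge q$ computation.
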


\begin{proof}
    For notational simplicity, denote $P = \mathbb P^{\pi^*, T}, Q=\mathbb P^{\pi, T}$. Since $\pi^*$ is deterministic, we abuse $\pi_h^*$ to denote the expert action mapping. Suppose $\pi$ satisfies that,
    \begin{equation}\label{eq:pi_has_positive_measure_on_expert_choice}
        \forall h\in [H], \pi_h(\{\pi_h^*(x)\}|x)>0,\forall x\in \mathcal X.
    \end{equation}
    (this means the policy $\pi$ has positive probability measure on the point that deterministic $\pi^*$ will choose). Then we can find a common dominating measure, and the two associated density are,
    \begin{align*}
        p &= T_0(x_1)\prod_{h=1}^H\delta_{\pi_h^*(x_h)}(u_h)T_h(x_{h+1}|x_h,u_h)\\
        q&= T_0(x_1)\prod_{h=1}^H [\mathbf 1(u_h=\pi^*_h(x_h))\pi_h(\{\pi_h^*(x)\}|x_h)+\pi_h(u_h|x_h)]
        T_h(x_{h+1}|x_h,u_h).
    \end{align*}
    Then we have,
    \begin{align*}
        1-\textnormal{TV}(P,Q)&=\int p\wedge q d\tau=\int T_0(x_1)\prod_{h=1}^H T_h(x_{h+1}|x_h,u_h)\min\{\delta_{\pi_h^*(x_h)}(u_h), \pi_h(\{\pi_h^*(x)\}|x_h)\}d\tau\\
        &=\int_{\text{supp}(P)} T_0(x_1)\prod_{h=1}^H T_h(x_{h+1}|x_h, u_h)\pi(\{\pi_h^*(x_h)\}|x_h)\\
        &=\mathbb E_{P}\left[\prod_{h=1}^H \pi_h(\{\pi^*_h(x_h)\}|x_h)\right]\\
        &=\mathbb E_{x_{1:H^*}, u^*_{1:H}\sim P}\left[\prod_{h=1}^H \mathbb E_{\tilde u_h\sim \pi_h(\cdot|x_h)}\mathbf{1}\{\tilde u_h=u_h^*\}\right]
        \\&=1-\mathbb E_{x^*_{1:H},u_{1:H}^*, \tilde u_{1:H}\sim \mathbb P^{\pi^*, T, \hat \pi}}
    [\mathbb I(\exists h\in [H], \tilde u_h\neq u_h^*)].
    \end{align*}
where the second equation above is because other than this point $\pi_h^*(x)$, $\delta_{\pi^*_h(x_h)}(\cdot)=0$. The third equation is because $\pi_h(\{\pi^*_h(x_h)\}|x_h)\in(0,1)$. The last two equation is by definition of expectation. If \cref{eq:pi_has_positive_measure_on_expert_choice} does not hold, then both sides will be 1, and the result still holds. Thus we prove the result.
\end{proof}

\begin{remark}
Notice that at the beginning we assume that $\pi$ has positive measure on a point. Essentially, in continuous case, $\pi$ is usually a density, so $\rho(\pi^*\|\pi)$ will become 1, the log-loss bound depend on $\rho(\pi^*\|\pi)$ (e.g. Lemma D.2 in \citet{foster2024behavior}) will become the maximum $H$, which is a useless trivial upper bound. This also validates the belif that in continuous and deterministic expert case, imitation learning through log-loss will fail, as mentioned in \citet{simchowitz2025pitfalls}.

However if we quantize the action space, then $\pi$ is supported on finite points, and it can have positive mass on a point. So the statistical learning is possible.
\end{remark}

Now we again define $\bar{\mathbb P}^{\pi^0, T, \pi^1}$ as the law of extended trajectory $\{x_{1:H}, u_{1:H}, \tilde u_{1:H}\}$ where $\{x_{1:H}, u_{1:H}\}\sim \mathbb P^{\pi^0, T}$ and $\tilde u_h\sim \pi^1(\cdot|x_h)$. Now consider two extended measure $\bar{\mathbb P}^{\pi^*, T,q\#\pi^*}$ and $\bar {\mathbb P}^{(\rho \circ \hat \pi), T, \hat \pi}$. Notice that if we take marginal of those two extended trajectory on (state, raw action) trajectory, then the marginals are $\mathbb P^{\pi^*, T}$ and $\mathbb P^{(\rho \circ \hat \pi),T}$. If we take marginals on (state, quantized action) trajectory, the marginals are $\mathbb P^{q\#\pi^*, (T\circ \rho)},\mathbb P^{\hat \pi, (T\circ \rho)}$. The next lemma shows that for TV distance and Hellinger distance, the distance is the same between those extended measures and projected measures.

\begin{lemma}\label{lemma:r_and_q_are_inverse_on_expert}
For $\pi^*$ and $\hat \pi$, define the induced raw policy and transition as in \cref{eq:dequantized_learner} and \cref{eq:perturbed_transition}, we have 
    \begin{align*}
        &\textnormal{TV}(\mathbb P^{q\#\pi^*, (T\circ \rho)},\mathbb P^{\hat \pi, (T\circ \rho)})=\textnormal{TV}(\mathbb P^{\pi^*, T},\mathbb P^{\rho\circ \hat \pi,T})=\textnormal{TV}(\mathbb P^{\pi^*, T, q\#\pi^*},\mathbb P^{(\rho\circ \hat \pi), T, \hat \pi}),\\& D_H^2(\mathbb P^{q\#\pi^*, (T\circ \rho)},\mathbb P^{\hat \pi, (T\circ \rho)})=D_H^2(\mathbb P^{\pi^*, T},\mathbb P^{\rho\circ \hat \pi,T})=D_H^2(\mathbb P^{\pi^*, T, q\#\pi^*},\mathbb P^{(\rho\circ \hat \pi), T, \hat \pi}).
    \end{align*}
\end{lemma}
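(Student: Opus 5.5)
The plan is to realize all three laws as pushforwards or marginals of the single extended law, and then use that TV and Hellinger distance are invariant under such maps when the relevant conditional kernel is shared. First, write the densities of the extended laws explicitly. Under $\bar{\mathbb P}^{\pi^*,T,q\#\pi^*}$ the pair $(u_h,\tilde u_h)$ is generated deterministically coupled: $u_h\sim\pi_h^*(\cdot\mid x_h)$ and $\tilde u_h=q(u_h)$. Under $\bar{\mathbb P}^{\rho\circ\hat\pi,T,\hat\pi}$ we take $\tilde u_h\sim\hat\pi_h(\cdot\mid x_h)$ and then $u_h\sim\rho_h(\cdot\mid\tilde u_h,x_h)$. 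Since $\rho_h$ is supported on $q^{-1}(\tilde u)$, this again gives $\tilde u_h=q(u_h)$. On the expert side, disintegration gives $\pi^*_h(du\mid x)\mathbf 1\{\tilde u=q(u)\}=(q\#\pi^*)_h(\tilde u\mid x)\rho_h(du\mid\tilde u,x)$, which is exactly the definition of $\rho$.

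Hence both extended laws have the common form
\[
T_0(dx_1)\prod_{h=1}^H \sigma_h(\tilde u_h\mid x_h)\,\rho_h(du_h\mid \tilde u_h,x_h)\,T_h(dx_{h+1}\mid x_h,u_h),
\]
with $\sigma=q\#\pi^*$ in one case and $\sigma=\hat\pi$ in the other.

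\textbf{Step 1 (extended vs.\ quantized marginal).} Integrating out $u_h$ in this product yields $\mathbb P^{\sigma,(T\circ\rho)}$ on $(x,\tilde u)$. Conversely, the extended law equals this marginal times the conditional law of $u_{1:H}$ given $(x_{1:H+1},\tilde u_{1:H})$. This conditional is proportional to $\prod_h\rho_h(du_h\mid\tilde u_h,x_h)T_h(x_{h+1}\mid x_h,u_h)$, and it does not depend on $\sigma$.

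Working with densities relative to a common dominating measure, the common conditional factor cancels inside $p\wedge q$ and inside $(\sqrt p-\sqrt q)^2$. So both TV and $D_H^2$ of the extended laws equal those of the $(x,\tilde u)$ marginals. One can also argue via data processing in both directions: marginalization is one direction, and re-attaching the shared kernel is the other.

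\textbf{Step 2 (extended vs.\ raw marginal).} The map $(x,u)\mapsto(x,u,q(u))$ is deterministic, and we showed $\tilde u_h=q(u_h)$ almost surely under both extended laws. Hence each extended law is the pushforward of its $(x,u)$ marginal under an injective measurable map, and injective pushforwards preserve TV and Hellinger distance.

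It remains to identify the raw marginals. For the learner, the raw marginal is $\mathbb P^{\rho\circ\hat\pi,T}$ by the definition in \cref{eq:dequantized_learner}. For the expert, it is $\mathbb P^{\pi^*,T}$ by the identity $\rho\circ(q\#\pi^*)=\pi^*$.

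Combining Steps 1 and 2 gives both chains of equalities.

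\textbf{Main obstacle.} The main obstacle is the measure-theoretic care in Step 1. The conditional of $u_{1:H}$ given the $(x,\tilde u)$-path must be shown to be a common version for both laws, including on states where $\pi^*_h(q^{-1}(\tilde u)\mid x)=0$ and $\rho$ was set to $\nu_{\tilde u}$. This is where the arbitrary extension of $\rho$ matters.

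I would handle it by working directly with the product-form density above. Relative to the dominating measure built from counting measure on $\tilde{\mathcal U}$ and the kernels $\rho$ and $T$, the two extended laws have densities $\prod_h\sigma_h(\tilde u_h\mid x_h)$. Both TV and Hellinger then reduce to integrals of functions of $(x,\tilde u)$ only, and these are exactly the corresponding integrals for the quantized marginals.
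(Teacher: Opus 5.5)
Your proof is correct. It rests on the same sufficiency idea as the paper but implements it differently.

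The paper writes densities for the two extended laws and notes that their likelihood ratio has two forms. One is $\prod_h (q\#\pi^*)_h(\tilde u_h\mid x_h)/\hat\pi_h(\tilde u_h\mid x_h)$, a function of the $(x,\tilde u)$-path. The other is $\prod_h \pi^*_h(u_h\mid x_h)/(\rho\circ\hat\pi)_h(u_h\mid x_h)$, a function of the $(x,u)$-path. It then applies the equality case of data processing (its contraction lemma, citing Liese--Vajda) to both projections at once.

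Your Step 1 is the same argument for the $(x,\tilde u)$ projection, but done by hand. You use the explicit dominating measure $T_0(dx_1)\prod_h c(d\tilde u_h)\,\rho_h(du_h\mid\tilde u_h,x_h)\,T_h(dx_{h+1}\mid x_h,u_h)$, with $c$ counting measure on $\tilde{\mathcal U}$. Your Step 2 replaces the second likelihood-ratio identity with a simpler remark. Since $\tilde u_h=q(u_h)$ almost surely under both laws, each extended law is the image of its raw marginal under $(x,u)\mapsto(x,u,q(u))$. That map has the projection as a measurable left inverse.

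The paper's route is shorter given the cited lemma, but two points in it are loose.
\begin{itemize}
\item The lemma is stated under $P\ll Q$, which can fail here: nothing prevents $\hat\pi$ from giving zero mass to a code that $q\#\pi^*$ uses at states the expert visits.
\item Its density notation is only formal when $\pi^*$ or $T_h$ is deterministic.
\end{itemize}
Your construction needs neither. It also shows why the arbitrary extension $\nu_{\tilde u}$ is harmless: it enters only through the common dominating measure, never through the densities $\prod_h\sigma_h(\tilde u_h\mid x_h)$.
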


\begin{proof}
    We denote the density of $\mathbb P^{\pi^*, T, q\#\pi^*},\mathbb P^{(\rho\circ \hat \pi), T, \hat \pi}$ w.r.t. a common dominating measure as $p^{\pi^*, T, q\#\pi^*}$ and $p^{(\rho\circ \hat \pi), T, \hat \pi}$. Notice that we have,
    \begin{align*}
        p^{\pi^*, T, q\#\pi^*}&=T_0(x_1)\prod_{h=1}^H q\#\pi_h^*(\tilde u_h|x_h)\rho_h(u_h|\tilde u_h,x_h)T_h(x_{h+1}|x_h, u_h)\\
        &=T_0(x_1)\prod_{h=1}^H \pi_h^*(u_h|x_h)\mathbb I(\tilde u_h=q(u_h))T_h(x_{h+1}|x_h,u_h)\\
        p^{(\rho \circ \hat \pi), T,\hat \pi}&=T_0(x_1)\prod_{h=1}^H \hat \pi_h(\tilde u_h|x_h)\rho_h( u_h|\tilde u_h,x_h)T_h(x_{h+1}|x_h, u_h)\\
        &=T_0(x_1)\prod_{h=1}^H (\rho\circ \hat \pi)_h(u_h|x_h)\mathbb I(\tilde u_h=q(u_h))T_h(x_{h+1}|x_h,u_h).
    \end{align*}

    Notice that the associated likelihood ratio is,
    \begin{equation*}
        \frac{p^{\pi^*, T, q\#\pi^*}}{p^{(\rho\circ \hat \pi), T,\hat \pi}}=\prod_{h=1}^H \frac{q\#\pi^*_h(\tilde u_h|x_h)}{\hat \pi_h(\tilde u_h|x_h)}=\prod_{h=1}^H\frac{\pi^*_h(u_h|x_h)}{(\rho\circ\hat \pi)_h(u_h|x_h)}.
    \end{equation*}
    We define two projection mapping $\phi_1(x_{1:H+1},u_{1:H},\tilde u_{1:H})=(x_{1:H+1},u_{1:H})$, $\phi_2(x_{1:H+1},u_{1:H},\tilde u_{1:H})=(x_{1:H+1}, \tilde u_{1:H})$. Since the likelihood ratio is a function of only $\phi_1$ or $\phi_2$, by \cref{lemma:contraction_of_tv}, we have proved that both TV distance are equal to the TV distance of the two extended measure (without projection), so does the Hellinger distance. 
\end{proof}

\begin{lemma}\label{lemma:contraction_of_tv}
    Consider two probability measure $P, Q$ on a measurable space $\mathcal X$. $\mathcal Y$ is another measurable space,  $\phi$ is a measurable mapping $\phi:\mathcal X\rightarrow \mathcal Y$. Then we have,
    \begin{equation*}
        \textnormal{TV}(\phi\# P, \phi\# Q)\leq \textnormal{TV}(P,Q), D_H^2(\phi\#P, \phi\#Q)\leq D_H^2(P,Q).
    \end{equation*}
    If $P\ll Q$, then the equality above holds whenever there exists measurable $g$, such that
    \begin{equation*}
        \frac{dP}{dQ}(x)=g(\phi(x)).
    \end{equation*}
\end{lemma}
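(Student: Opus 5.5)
The statement is the data-processing inequality for total variation and squared Hellinger distance under a measurable map $\phi$, together with the equality case when the likelihood ratio factors through $\phi$. The plan is to dominate both measures by a common $\sigma$-finite measure and work with densities, using the variational and $f$-divergence representations of both distances.

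\textbf{Inequalities.} For total variation, use $\mathrm{TV}(P,Q)=\sup_{A}|P(A)-Q(A)|$ over measurable $A\subseteq\mathcal X$. For any measurable $B\subseteq\mathcal Y$ we have $(\phi\#P)(B)-(\phi\#Q)(B)=P(\phi^{-1}(B))-Q(\phi^{-1}(B))$, and $\phi^{-1}(B)$ is measurable in $\mathcal X$. So the supremum defining $\mathrm{TV}(\phi\#P,\phi\#Q)$ runs over a subfamily of the sets in the supremum for $\mathrm{TV}(P,Q)$, which gives the inequality.

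For Hellinger, write $D_H^2$ as the $f$-divergence with $f(t)=(\sqrt t-1)^2$, which is convex. Let $\lambda=(P+Q)/2$, $p=dP/d\lambda$, $q=dQ/d\lambda$. Let $\bar p,\bar q$ be the conditional expectations of $p,q$ given $\sigma(\phi)$ under $\lambda$. Then $\bar p(x)=\tilde p(\phi(x))$, where $\tilde p$ is the density of $\phi\#P$ with respect to $\phi\#\lambda$, and similarly $\bar q(x)=\tilde q(\phi(x))$.

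The function $(a,b)\mapsto(\sqrt a-\sqrt b)^2$ is jointly convex, since it is the perspective of $f$. Conditional Jensen then gives
\[
\mathbb E_\lambda\bigl[(\sqrt{\bar p}-\sqrt{\bar q})^2\bigr]\le \mathbb E_\lambda\bigl[(\sqrt p-\sqrt q)^2\bigr],
\]
and the left-hand side equals $D_H^2(\phi\#P,\phi\#Q)$ after the change of variables $y=\phi(x)$. Equivalently, one can note that the Hellinger affinity $\int\sqrt{pq}\,d\lambda$ only increases under conditioning, by Cauchy–Schwarz.

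\textbf{Equality case.} Assume $P\ll Q$ and $dP/dQ=g\circ\phi$. For any measurable $B\subseteq\mathcal Y$,
\[
(\phi\#P)(B)=\int \mathbf 1_B(\phi(x))\,g(\phi(x))\,dQ(x)=\int_B g\,d(\phi\#Q),
\]
so $\phi\#P\ll\phi\#Q$ with $d(\phi\#P)/d(\phi\#Q)=g$. Both distances are $f$-divergences, $\int f(dP/dQ)\,dQ$, with $f(t)=\tfrac12|t-1|$ for TV and $f(t)=(\sqrt t-1)^2$ for Hellinger. Therefore
\[
D_f(P\|Q)=\int f(g(\phi(x)))\,dQ(x)=\int f(g(y))\,d(\phi\#Q)(y)=D_f(\phi\#P\|\phi\#Q),
\]
which gives equality for both.

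The only delicate step is the measure-theoretic justification of $\bar p=\tilde p\circ\phi$, i.e.\ identifying the conditional expectation given $\sigma(\phi)$ with the pushforward density. This follows from the Doob–Dynkin factorization lemma together with the defining property of conditional expectation tested on sets of the form $\phi^{-1}(B)$. Everything else is routine.
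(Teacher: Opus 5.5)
Your proof is correct, but it takes a different route from the paper. The paper gives no argument of its own; it defers to Theorem~3.1 of the cited Liese reference, which is the general data-processing theorem for $f$-divergences, with equality characterized by sufficiency of $\phi$ for $\{P,Q\}$.

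You instead give a self-contained argument in three pieces:
\begin{itemize}
\item For TV you use the variational form $\sup_A|P(A)-Q(A)|$.
\item For Hellinger you apply conditional Jensen to the jointly convex perspective $(a,b)\mapsto(\sqrt a-\sqrt b)^2$. Here you identify $\mathbb E_\lambda[p\mid\sigma(\phi)]=\tilde p\circ\phi$ by checking it directly on sets $\phi^{-1}(B)$, so Doob--Dynkin is not actually needed.
\item For the equality case you show by a direct change of variables that $d(\phi\#P)/d(\phi\#Q)=g$.
\end{itemize}

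The cited theorem buys more generality. It covers every $f$-divergence at once, and for strictly convex $f$ it also gives the converse: equality forces $dP/dQ$ to be $\sigma(\phi)$-measurable. Your route proves exactly the ``if'' direction the lemma states, which is also the direction \cref{lemma:r_and_q_are_inverse_on_expert} uses, and every step is checkable in place. Your Jensen step and your change-of-variables step work verbatim for any convex $f$, since the perspective of a convex function is jointly convex, so little generality is actually lost.
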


The proof can be found in Theorem 3.1 of \citet{Liese2012DIVERGENCESS}.

\section{Discussion on P-IISS and RTVC}\label[appendix]{sec:discussion_on_PIISS_and_RTVC}

\subsection{Verification of \texorpdfstring{\cref{assum:invertibility}}{Assumption 1}}\label{sec:discussion_on_assumption}
\begin{proposition}[Existence of an invertible noise representation]
Assume $\mathcal X\subseteq\mathbb R^{d_\mathcal X}$ is a Borel set. 
Suppose that for every $(x,u)\in\mathcal X\times\mathcal U$,
$T_h(\cdot\mid x,u)$ is either
\begin{enumerate}
\item[(i)] \emph{deterministic}: $T_h(\cdot\mid x,u)=\delta_{\bar f_h(x,u)}$ for some measurable $\bar f_h:\mathcal X\times\mathcal U\to\mathcal X$, or
\item[(ii)] \emph{has a density}: $T_h(\cdot\mid x,u)$ is absolutely continuous w.r.t.\ Lebesgue measure and the associated density is strictly positive a.e. w.r.t.\ Lebesgue measure..
\end{enumerate}
Then $T_h$ admits a noise representation $(f_h,P_{W_h})$ satisfying the invertibility assumption as in \cref{assum:invertibility}
.
\end{proposition}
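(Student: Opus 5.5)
The proof handles the two cases separately. In both, the noise space $\Omega$ and the law $P_{W_h}$ are fixed independently of $(x,u)$, and $f_h(x,u,\cdot)$ is an explicit transport map. The same construction applied to $T_0$ (with no $(x,u)$ argument) covers $h=0$.

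\textbf{Case (i), deterministic.} Take $\Omega=\{\omega_\star\}$ a singleton, $P_{W_h}=\delta_{\omega_\star}$, and $f_h(x,u,\omega)\coloneqq \bar f_h(x,u)$. Then $f_h(x,u,\omega)\sim\delta_{\bar f_h(x,u)}=T_h(\cdot\mid x,u)$, measurability is inherited from $\bar f_h$, and any map on a one-point space is injective. This case is immediate.

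\textbf{Case (ii), density.} Write $d=d_{\mathcal X}$, take $\Omega=(0,1)^d$, and let $P_{W_h}$ be Lebesgue (uniform) measure. Define $f_h(x,u,\cdot)$ as the inverse Knothe--Rosenblatt map of $p(\cdot)=T_h(\cdot\mid x,u)$, built in three steps.
\begin{itemize}
\item Let $F_1(\cdot\mid x,u)$ be the CDF of the first-coordinate marginal of $p$.
\item For $k\ge 2$, let $F_k(\cdot\mid y_{1:k-1};x,u)$ be the conditional CDF of coordinate $k$ given the first $k-1$ coordinates. This is obtained by integrating the density, with $y_{1:k-1}$ ranging over a set where the conditional density is defined.
\item Set $y_1=F_1^{-1}(\omega_1)$ and recursively $y_k=F_k^{-1}(\omega_k\mid y_{1:k-1})$, where $F^{-1}(s)=\inf\{t:F(t)\ge s\}$ is the generalized inverse.
\end{itemize}
The standard Rosenblatt argument then gives $f_h(x,u,\omega)=y\sim p$ when $\omega$ is uniform.

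Joint measurability of $(x,u,\omega)\mapsto y$ comes from two facts. The CDFs are integrals of a jointly measurable density, so they are jointly measurable by Fubini, and we assume, as is implicit, that the density $T_h(y\mid x,u)$ is jointly measurable. In addition, $\{F^{-1}(s)\le t\}=\{s\le F(t)\}$, so the generalized inverse is measurable whenever $F$ is.

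\textbf{Injectivity.} This is the step that needs care. Absolute continuity makes each one-dimensional CDF continuous, and for a continuous CDF the generalized inverse is strictly increasing on $(0,1)$: if $s<s'$ and $F^{-1}(s)=F^{-1}(s')=t$, continuity forces $F(t)\ge s'>s$, while every $t'<t$ has $F(t')<s$, which contradicts continuity at $t$. So $\omega_1\mapsto y_1$ is injective. Given $y_1$, the map $\omega_2\mapsto y_2$ is injective whenever the conditional law at $y_{1}$ is absolutely continuous, and so on coordinate by coordinate.

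The conditional laws are absolutely continuous for almost every prefix $y_{1:k-1}$ under $p$. The set of $\omega$ that lead to a bad prefix is the preimage of a $p$-null set, hence $P_{W_h}$-null. Off this null set, $\omega\mapsto f_h(x,u,\omega)$ is injective, which is exactly the a.s. injectivity required by \cref{assum:invertibility}. Strict positivity of the density is not needed for this argument; it only makes the quantile maps genuine bijections onto $\mathcal X$ up to null sets, so the recovered noise $\omega_h(\tau)$ is well defined on a full-measure set of trajectories.

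The main obstacle is bookkeeping this exceptional null set uniformly enough to apply it inside the shared-noise coupling. If the fiber-wise argument proves awkward, I would first fall back on choosing a version of the conditional density that is absolutely continuous for every prefix. One can always do this by redefining it on a null set, for example as a fixed standard density there, which makes $f_h(x,u,\cdot)$ injective everywhere on $(0,1)^d$.
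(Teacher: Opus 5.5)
Your proposal is correct and follows essentially the same route as the paper. Both use a singleton noise space in the deterministic case, and uniform noise pushed through the conditional-quantile (Knothe--Rosenblatt) inverse-transform map in the density case. The difference is in the injectivity step. You observe that continuity of the conditional CDFs alone gives $F(F^{-1}(s))=s$ on $(0,1)$, and you track the null set of bad prefixes explicitly. The paper instead invokes strict positivity of the density to treat the quantile maps as genuine inverses, recovering the noise as $w_k=F_k(x'_k\mid x,u,x'_{1:k-1})$. Your version is slightly sharper, since it shows positivity is not needed for injectivity.
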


\begin{proof}
If $T_h(\cdot\mid x,u)=\delta_{\bar f_h(x,u)}$, take $\Omega_h=\{0\}$,
$P_{W_h}=\delta_0$, and define $f_h(x,u,0)=\bar f_h(x,u)$.
Then $\omega\sim P_{W_h}$ implies $f_h(x,u,\omega)=\bar f_h(x,u)$ a.s., hence
$f_h(x,u,\omega)\sim \delta_{\bar f_h(x,u)}=T_h(\cdot\mid x,u)$.
Injectivity holds trivially since $\Omega_h$ is a singleton.

Now we discuss the stochastic transition case. Assume $T_h(\cdot\mid x,u)$ is absolutely continuous w.r.t.\ Lebesgue measure on $\mathbb R^d$
with (jointly) measurable density $t_h(\cdot\mid x,u)$.
Let $\Omega_h=[0,1]^{d_\mathcal X}$ and $P_{W_h}=\mathrm{Unif}([0,1]^{d_\mathcal X})$.
For each $(x,u)$, write $x'=(x'_1,\ldots,x'_{d_\mathcal X})$ and define the conditional CDFs
recursively by
\begin{equation*}
    F_{1}(z\mid x,u)
\coloneqq  \int_{(-\infty,z]} t_{h,1}(s\mid x,u)\,ds,\ 
F_{k}(z\mid x,u,x'_{1:k-1})
\coloneqq  \int_{(-\infty,z]} t_{h,k}(s\mid x,u,x'_{1:k-1})\,ds,
\end{equation*}
for $k=2,\ldots,d_{\mathcal X}$, where $t_{h,1}$ is the first marginal density induced by $t_h$
and $t_{h,k}(\cdot\mid x,u,x'_{1:k-1})$ is the usual conditional density induced by $t_h$.
Define the (generalized) inverse/quantile maps
\begin{equation*}
    Q_{k}(w\mid x,u,x'_{1:k-1})
\coloneqq  \inf\{z\in\mathbb R:\ F_k(z\mid x,u,x'_{1:k-1})\ge w\}, w\in[0,1].
\end{equation*}
Given $\omega=(w_1,\ldots,w_{d_\mathcal X})\in[0,1]^{d_\mathcal X}$, construct $x'=f_h(x,u,\omega)$ by
\begin{equation*}
    x^\prime_1 = Q_1(w_1\mid x,u),
\qquad
x'_k = Q_k(w_k\mid x,u,x'_{1:k-1}),\ \ k=2,\ldots,d.
\end{equation*}
By the standard inverse-transform sampling argument, if $\omega\sim \mathrm{Unif}([0,1]^{d_\mathcal X})$,
the resulting $x'=f_h(x,u,\omega)$ satisfies $x'\sim T_h(\cdot\mid x,u)$.

For invertibility, the above quantile maps are a.s.\ genuine inverses (since the density is positive),
and we can recover the noise by
\begin{equation*}
    w_1 = F_1(x'_1\mid x,u),
\qquad
w_k = F_k(x'_k\mid x,u,x'_{1:k-1}),\ \ k=2,\ldots,d.
\end{equation*}
Hence $\omega\mapsto f_h(x,u,\omega)$ is injective $P_{W_h}$-a.s.

The same construction applies to the initial distribution (take $h=0$ and drop $(x,u)$),
yielding $x_1=f_0(\omega_0)$.
\end{proof}

\subsection{Stability of the Dynamic}\label{sec:discussion-on_PIISS}

A function $\beta(x, t)$ is class $\mathcal{K}\mathcal L$ if it is continuous, $\beta(\cdot, t)$ is continuous, strictly increasing and $\beta(0, t)=0$ for each fixed $t$, and $\beta(x,\cdot)$ is decreasing for each fixed $x$. We begin by recalling the standard notion of IISS from the literature.

\begin{definition}(\textnormal{Definition 4.1 in \citep{angeli2002lyapunov})}\label{def:IISS}
    A system $x_{t+1}=f(x_t, u_t)$ is incrementally input-to-state stable(IISS) if there exists a $\mathcal K\mathcal L$ function $\beta$ and $\gamma\in \mathcal K$ such that,
    \begin{equation*}
        \|x_t-x_t^\prime\|\leq \beta(\|x_1-x_1^\prime\|,t)+\gamma\left((\|u_k-u_k^\prime\|)_{1\leq k\leq t-1}\right).
    \end{equation*}
\end{definition}

This notion is open-loop in the sense emphasized in \citet{simchowitz2025pitfalls} and \citet{zhang2025imitation}: the stability bound is stated directly in terms of the difference between two input sequences. Building on this perspective, \citet{simchowitz2025pitfalls} and section 3 of \citet{zhang2025imitation} adopt \cref{def:IISS} and further require $\gamma$ to take an exponential form, as in our P-EIISS notion.

A related but distinct line of work considers a closed-loop version of incremental stability. In particular, \citet{pfrommer2022tasil}, \citet{block2023provable}, and section 4 of \citet{zhang2025imitation} formulate stability at the closed-loop level; see, for example, Definition 3.1 of \citet{pfrommer2022tasil} for a precise statement.

The above notions are typically stated for noiseless dynamics and as global properties of the system. In particular, any noiseless globally IISS system (which corresponds to \cref{def:IISS} above) is trivially $\gamma$-globally P-IISS. Our notion in \cref{def:P-IISS} should be viewed as a stochastic and localized variant of the open-loop perspective. Specifically, it allows stochastic noise in the dynamics and stochastic expert policies, and it does not require global IISS. Instead, it is enough that the system satisfy an IISS-type property locally on a subset. We next present a sufficient condition that guarantees \cref{def:P-IISS}.

\begin{proposition}\label{prop:piiss_gaussian_exploration}Suppose $\mathcal X \subset \mathbb R^{d_{\mathcal X}}$ and $\mathcal U \subset \mathbb R^{d_{\mathcal U}}$.
Consider a closed set $S\subset\mathcal X$. Consider the stochastic dynamics $x_{t+1}=f(x_t,u_t)+\omega_t$ with $ \omega_t\sim \mathcal N(0,\sigma^2 I_n)$ and a Gaussian expert policy $\pi^*_t(x_t)= \mathcal N(\mu(x_t), \sigma_{\pi}^2I)$. The system starts with $x_1=0$. Let us assume

\textnormal{(A1) Contractive Dynamic.} There exist $\eta\in(0,1)$ and $C>0$ such that for all $x,x'\in S$ and all $u,u'\in\mathbb R^m$,
\begin{equation*}
\|f(x,u)-f(x',u')\|\le \eta\|x-x'\|+C\|u-u'\|.
\end{equation*}

\textnormal{(A2) Mean policy Lipschitz.} The mean map $\mu$ is $L_\mu$-Lipschitz on $S$:
\begin{equation*}
\|\mu(x)-\mu(x')\|\le L_\mu\|x-x'\|,\forall x,x'\in S.
\end{equation*}

\textnormal{(A3)  Margin for the mean dynamics.} Define the deterministic nominal trajectory
\begin{equation*}
\bar x_{t+1}=f(\bar x_t,\mu(\bar x_t)), \bar x_1=\bm 0
\end{equation*}
and assume there exists $\rho>0$,
\begin{equation*}
\mathrm{dist}(\bar x_t,S^c)\ge 2\rho,\forall t\in [H].
\end{equation*}

\smallskip
Define a geometric sum and a closed-loop constant as,
\begin{equation*}
A_H(L)\coloneqq \sum_{j=0}^{H-1}L^j,L_{\mathrm{cl}}\coloneqq \eta+C L_\mu.
\end{equation*}
And define,
\begin{itemize}
\item Gain function: $ \gamma_t\big((a_k)_{k=1}^t\big)\coloneqq \sum_{k=1}^t C\eta^{t-k}a_k, t=1,\ldots, H$:
\item Input radius: $d\coloneqq \frac{\rho}{CA_H(\eta)}.$
\item Failure probability,
\begin{equation*}
\delta =
H\exp\left(
-\frac12\Big(\big(\tfrac{\rho}{2\sigma A_H(L_{\mathrm{cl}})}-\sqrt {d_{\mathcal X}}\big)_+\Big)^2
\right)
+
H\exp\left(
-\frac12\Big(\big(\tfrac{\rho}{2C\sigma_\pi A_H(L_{\mathrm{cl}})}-\sqrt d_{\mathcal U}\big)_+\Big)^2
\right),
\end{equation*}
where $(z)_+\coloneqq \max\{z,0\}$.
\end{itemize}
Then $\mathbb P^{\pi^*,T}$ is $(\gamma, \delta)$-d-P-IISS.
\end{proposition}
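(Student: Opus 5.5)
The plan is to work under an arbitrary shared-noise coupling $\mu$ of $\mathbb P^{\pi^*,T}$ (trajectory $x^0,u^0$) and $\mathbb P^{\pi^1,T}$ (trajectory $x^1,u^1$), fix $h\in[H]$, and show $\mu(\Phi_h\cap\Psi_h^c)\le\delta$. Since the noises are shared, on the event where both trajectories remain in $S$ up to time $t$, assumption (A1) gives
\begin{equation*}
\|x^0_{t+1}-x^1_{t+1}\|\le \eta\|x^0_t-x^1_t\|+C\|u^0_t-u^1_t\|,
\end{equation*}
because $\omega_t$ cancels. Unrolling from $x^0_1=x^1_1=0$ yields exactly $\|x^0_{t+1}-x^1_{t+1}\|\le\sum_{k\le t}C\eta^{t-k}\|u^0_k-u^1_k\|=\gamma_t$. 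So $\Psi_h$ holds on the event $G$ that $x^0_t,x^1_t\in S$ for all $t\le h$. It then suffices to show $\mu(\Phi_h\cap G^c)\le\delta$.

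I will reduce this to a property of the expert trajectory alone. First, define the good event $E$ that the expert trajectory satisfies $\|x^0_t-\bar x_t\|\le\rho$ for all $t\le H$. On $E\cap\Phi_h$, induction on $t$ gives both states in $S$. Indeed, $x^0_t$ lies within $\rho$ of $\bar x_t$, whose distance to $S^c$ is at least $2\rho$. For $x^1_t$, if both trajectories stay in $S$ up to $t-1$, the contraction bound gives $\|x^1_t-x^0_t\|\le C\sum_k\eta^{t-1-k}d\le CA_H(\eta)d=\rho$. Hence $\|x^1_t-\bar x_t\|\le 2\rho$, and $x^1_t$ is in $S$; strictly, the margin needs a closed ball of radius $2\rho$ inside $S$, which holds since $\mathrm{dist}(\bar x_t,S^c)\ge2\rho$ and $S$ is closed. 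So $\Phi_h\cap G^c\subset E^c$, and $\mu(\Phi_h\cap G^c)\le\mathbb P^{\pi^*,T}(E^c)$.

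The main work is bounding $\mathbb P^{\pi^*,T}(E^c)\le\delta$. Write $u^0_t=\mu(x^0_t)+\xi_t$ with $\xi_t\sim\mathcal N(0,\sigma_\pi^2I)$, and let $e_t=x^0_t-\bar x_t$. While the trajectory stays in $S$, (A1) and (A2) give
\begin{equation*}
\|e_{t+1}\|\le L_{\mathrm{cl}}\|e_t\|+C\|\xi_t\|+\|\omega_t\|.
\end{equation*}
To handle leaving $S$, I will use a stopping-time argument: the first exit from the $\rho$-tube happens while still inside $S$, so the recursion is valid up to that time. Unrolling gives $\|e_t\|\le A_H(L_{\mathrm{cl}})\max_{k}(C\|\xi_k\|+\|\omega_k\|)$. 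Therefore $E$ holds whenever every $\|\omega_k\|\le\rho/(2A_H(L_{\mathrm{cl}}))$ and every $C\|\xi_k\|\le\rho/(2A_H(L_{\mathrm{cl}}))$.

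Gaussian norm concentration, $\Pr(\|g\|\ge\sigma(\sqrt d+s))\le e^{-s^2/2}$ (Lipschitz concentration), together with a union bound over the $H$ steps, then yields exactly the two terms in $\delta$. The $(\cdot)_+$ covers the trivial case where the bound exceeds one.

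I expect the main obstacle to be bookkeeping rather than any single estimate. One issue is validating the recursions only inside $S$, which the stopping-time or induction argument handles. The other is noting that $\Phi_h$ restricts actions only up to $h$, which suffices because $\Psi_h$ involves states only up to $h+1$; the state $x_{h+1}$ needs only the contraction step from time $h$, taken inside $S$.
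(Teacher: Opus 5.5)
Your proposal is correct and follows essentially the same route as the paper. The paper also uses the Gaussian good event on $\max_t C\|\zeta_t\|$ and $\max_t\|\omega_t\|$ to keep the expert within $\rho$ of the nominal trajectory, then uses $\Phi_h$ plus contraction to keep the other trajectory within $\rho$ of the expert (hence in $S$, hence $\Psi_h$), and finishes with a union bound and the Gaussian norm tail. Your version is somewhat more careful than the paper's, via the explicit induction for staying in $S$, the closedness argument for the radius-$2\rho$ ball, and treating every $h\le H$ rather than only $h=H$.
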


\begin{remark}
    Notice that we allow the dynamic to be only locally contractive and we also allow stochastic dynamic and expert policy here. This setting is not covered in previous line of work (\citet{pfrommer2022tasil}, \citet{block2023provable}, \citet{simchowitz2025pitfalls}, \citet{zhang2025imitation}). Essentially, the failure probability $\delta$ is of order $O\left(H\varepsilon_q)\right)$ if $L_{cl}\in (0,1)$ and $\sigma^2=\sigma_{\pi}^2\in O({\frac{1}{\log|\varepsilon_q|}})$, and the stability distance $d\in O(1)$. 
\end{remark}

\begin{proof}
    Denote the expert action $u_t^*=\mu(x_t^*)+\zeta_t$ and the system noise as $\omega_t$. Consider any shared-noise coupling $\mu$. Denote G as event,
    \begin{equation*}
        G=\{C\max_{t\leq H}\|\zeta_t\|+\max_{t\leq H}\|\omega_t\|\leq \frac{\rho}{A_H(L_{\text{cl}})}\}.
    \end{equation*}

Suppose that $G$ holds under $\mu$, then we have, if $x_t^*, \bar x_t\in S$
\begin{align*}
    \|x_{t+1}^*-\bar x_{t+1}\|&=\|f(x_t^*,\mu(x_t^*)+\zeta_t)+\omega_t-f(\bar x_t, \mu(\bar x_t))\|\\
    &\leq (\eta+CL_\mu)\|x_t^*-\bar x_t\|+C\|\zeta_t\|+\|\omega_t\|\\
    &\leq (\eta+CL_\mu)\|x_t^*-\bar x_t\|+\frac{\rho}{A_H(L_{\text{cl}})}.
\end{align*}
Notice that the initial value is $\|x_1^*-\bar x_1\|=0$ and through iterating, we have,
\begin{equation*}
    \|x_{t}^*-\bar x_t\|\leq \frac{\rho}{A_H(L_{\text{cl}})}\sum_{k=0}^{t-1}L_{\text{cl}}^k \leq \rho,\forall h\in[ H]\Rightarrow \text{dist}(x, S^c)> \rho.
\end{equation*}
Notice that for any other trajectory that shares the same noise $\{\hat x_{1:H}, \hat u_{1:H}\}$,  let us define,
\begin{equation*}
    \Phi_H\coloneqq \{\|u_t^*-\hat u_t\|\leq \frac{\rho}{CA_H(\eta)},\forall t\in[H]\}.
\end{equation*}
When $G$ holds, if $\Phi_H$ holds, then we have, if $x_t^*, \hat x_t\in S$
\begin{equation}\label{eq:contractive_dynamic}
    \begin{aligned}
            \|\hat x_{t+1}-x_{t+1}^*\|&=\|f(\hat x_t, \hat u_t)-f(x_t^*, u_t^*)\|\\&\leq \eta \|\hat x_t -x_t^*\|+C\|\hat u_t-u_t^*\|\\
    &\leq \eta\|\hat x_t-x_t^*\|+\frac{\rho}{A_h(\eta)}.
    \end{aligned}
\end{equation}
Since $\hat x_1=x_1^*\in S$, by iteration, we have,
\begin{equation*}
    \|\hat x_{t}-x_t^*\|\leq \frac{\rho}{A_h(\eta)}\sum_{k=0}^{t-1}\eta^k\leq \rho, \Rightarrow \text{dist}(\hat x_t, S^c)>0, \hat x_t\in S,\forall t\in [H].
\end{equation*}
In the meantime, since when G and $\Phi_H$ holds, $x_t^*, \hat x_t\in S,\forall h\in [H]$, therefore events,
\begin{equation*}
    \Psi_H=\{\|\hat x_t-x_t^*\|\leq \sum_{k=1}^{t-1} C\eta^{t-k}\|u_k^*-\hat u_k\|,\forall t\in [H]\}
\end{equation*}
hold due to \cref{eq:contractive_dynamic}.

Thus we have proved that under $G$, $\Phi_H\Rightarrow \Psi_H$. Then we have,
\begin{equation*}
    \mu(\Phi_H\cap \Psi_H^c)\leq \mu(G^c)\leq \mu(\max_{t\leq H}\|\zeta_t\|> \frac{\rho}{2CA_H(L_{\text{cl}})})+\mu(\max_{t\leq H}\|\omega_t\|> \frac{\rho}{2A_H(L_{\text{cl}})}).
\end{equation*}
For $g\sim\mathcal N(0,\tau^2 I_d)$, the standard bound
\begin{equation*}
\mathbb P(\|g\|>t)\le \exp\left(-\tfrac12\big(\big(\tfrac{t}{\tau}-\sqrt d\big)_+\big)^2\right).
\end{equation*}
Since $\omega_t, \zeta_t$ are independent, using a union bound gives,
\begin{equation*}
    \mu(\Phi_H\cap \Psi_H^c)\leq H\exp\left(
-\frac12\Big(\big(\tfrac{\rho}{2\sigma A_H(L_{\mathrm{cl}})}-\sqrt {d_{\mathcal X}}\big)_+\Big)^2
\right)
+
H\exp\left(
-\frac12\Big(\big(\tfrac{\rho}{2C\sigma_\pi A_H(L_{\mathrm{cl}})}-\sqrt d_{\mathcal U}\big)_+\Big)^2
\right). \qedhere
\end{equation*}
\end{proof}

\subsection{Smoothness of the Policy}\label{sec:discussion_on_TVC}
\paragraph{Gaussian policies are TVC with linear modulus}
Consider a Gaussian policy with mean $\mu_h(x)$ and variance $\sigma^2 I$. Notice that for $x, x^\prime \in \mathcal X$, we have,
\begin{equation*}
    \textnormal{TV}\left(\pi_h(\cdot \mid x),\pi_h(\cdot \mid x^\prime)\right)=\textnormal{erf}\left(\frac{\|\mu_h(x)-\mu_h(x^\prime)\|}{2\sqrt 2 \sigma}\right),
\end{equation*}
where,
\begin{equation*}
    \textnormal{erf}(x)=\frac{2}{\sqrt \pi}\int_0^x e^{-t^2}dt\leq \frac{2}{\sqrt \pi}x.
\end{equation*}
Suppose $\mu_h(x)$ is $L_\mu$-Lipschitz, then we have,
\begin{equation*}
     \textnormal{TV}\left(\pi_h(\cdot \mid x),\pi_h(\cdot \mid x^\prime)\right)\leq \frac{L_\mu\|x-x^\prime\|}{\sqrt 2\sigma}.
\end{equation*}
Therefore Gaussian policy is TVC with a linear modulo.

Next we discuss another notion of smoothness for the policy.

\begin{definition}[$W_p$-continuity]
Fix $p\ge1$. A stochastic policy $\pi(\cdot\mid x)$ on $\mathbb R^{d_\mathcal U}$ is $W_p$-continuous on $S$ if there exists modulus $\kappa:\mathbb R_{\geq 0}\to\mathbb R_{\geq0}$ such that
\begin{equation*}
W_p\big(\pi_h(\cdot\mid x),\pi_h(\cdot\mid x')\big)\le \kappa(\|x-x'\|),\forall x,x'\in \mathcal X.
\end{equation*}
where $W_p$ denotes the $p$-Wasserstein distance.
\end{definition}

The fact is Wasserstein continuity does not ensure that the in-distribution regression error can be used to control the regret. To see this, simply notice that for $p=2$, if the policy is deterministic and $\kappa(\cdot)$ is a linear function, then $W_p$ continuous policy are Lipschitz policies.  However, Lipschitz learner's may still incur an exponential compounding error. 

For example, let us consider the linear system $x_{t+1}=Ax_t+Bu_t$. Let the expert control policy be $\pi^*(x_t)=Kx_t+\delta$, and the learner be $\hat \pi(x_t)=K x_t$ (heuristically, we can also think of it as the quantized expert). Then notice that we have,
\begin{align*}
    &\sum_{h=1}^H\mathbb E_{\mathbb P^{\pi^*, T}}\|\pi^*(x_h)-\hat \pi(x_h)\|=H\delta,\\
    &\sum_{h=1}^H \mathbb E_{x_t^*\sim \mathbb P^{\pi^*, T}, \hat x_t\sim \mathbb P^{\hat \pi, T}}\|x_t^*-\hat x_t\|=\sum_{h=1}^{H-1} (A+BK)^{H-1-h}B\delta.
\end{align*}
Essentially, both the expert and the learner is Lipschitz. Notice that if we require $\mathbb P^{\pi^*, T}$ to be IISS, a sufficient condition is $\rho(A)<1$ (where $\rho$ means the spectral radius). However this does not control $(A+BK)$. Thus the regret can be exponentially larger than the training error.

Next we talk about when is a policy being TVC. We will show two simple results. The first is that for an arbitrary policy, after Gaussian smoothing, it is TVC with a linear modulo function.

\begin{proposition}\textnormal{(Lemma 3.1 of \citep{block2023provable})} Let $\pi=(\pi_h)$ be any policy. Define the Gaussian-smoothed policy as $\pi_{\sigma}=(\pi_{\sigma,h})$ be distributed,
\begin{equation*}
    \pi_{\sigma,h}(\cdot|x_h)=\int \pi_{h}(\cdot|\tilde x_h) \mathcal N(\tilde x_h; x_h, \sigma^2)d\tilde x_h.
\end{equation*}
Then $\pi_{\sigma}$ is $\gamma_{\textnormal{TVC}}$-TVC with $\gamma_{\textnormal{TVC}}(u)=\frac{u}{2\sigma}$.
\end{proposition}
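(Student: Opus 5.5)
The plan is to express $\pi_{\sigma,h}(\cdot\mid x)$ as the image of the Gaussian $\mathcal N(x,\sigma^2 I)$ under the fixed Markov kernel $\pi_h$, and then use the data-processing inequality. This reduces TV continuity of the smoothed policy to a bound on the TV distance between two isotropic Gaussians with the same covariance. Throughout, $\mathcal N(\tilde x_h;x_h,\sigma^2)$ is read as the isotropic Gaussian $\mathcal N(x_h,\sigma^2 I)$ on $\mathcal X$.

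\textbf{Step 1 (reduction).} Fix $h$ and $x,x'\in\mathcal X$. Let $K(\cdot\mid\tilde x)\coloneqq\pi_h(\cdot\mid\tilde x)$, so that $\pi_{\sigma,h}(\cdot\mid x)=\int K(\cdot\mid\tilde x)\,\mathcal N(d\tilde x;x,\sigma^2 I)$, and likewise at $x'$. Both smoothed action distributions are therefore outputs of the \emph{same} kernel $K$ applied to two different input laws.

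\textbf{Step 2 (contraction).} Take the maximal coupling $(\tilde X,\tilde X')$ of $\mathcal N(x,\sigma^2I)$ and $\mathcal N(x',\sigma^2I)$, so that $\Pr(\tilde X\neq\tilde X')=\mathrm{TV}$ between these two Gaussians. Draw $U\sim K(\cdot\mid\tilde X)$. On $\{\tilde X=\tilde X'\}$ set $U'=U$; otherwise draw $U'\sim K(\cdot\mid\tilde X')$ independently. This gives a coupling of $\pi_{\sigma,h}(\cdot\mid x)$ and $\pi_{\sigma,h}(\cdot\mid x')$ with $\Pr(U\neq U')\le\Pr(\tilde X\neq\tilde X')$. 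Hence
\[
\mathrm{TV}\bigl(\pi_{\sigma,h}(\cdot\mid x),\pi_{\sigma,h}(\cdot\mid x')\bigr)\le \mathrm{TV}\bigl(\mathcal N(x,\sigma^2I),\mathcal N(x',\sigma^2I)\bigr).
\]
This is the standard Markov-kernel version of \cref{lemma:contraction_of_tv}; the only care needed is that $\pi_h$ be a measurable kernel, which is implicit in the definition of a policy.

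\textbf{Step 3 (Gaussian bound).} Compute $\mathrm{KL}\bigl(\mathcal N(x,\sigma^2I)\,\|\,\mathcal N(x',\sigma^2I)\bigr)=\|x-x'\|^2/(2\sigma^2)$. Pinsker's inequality, $\mathrm{TV}\le\sqrt{\mathrm{KL}/2}$, then gives exactly $\|x-x'\|/(2\sigma)$. As an alternative, the closed form $\mathrm{erf}\bigl(\|x-x'\|/(2\sqrt2\sigma)\bigr)\le \|x-x'\|/(\sqrt{2\pi}\sigma)$ also works, since $1/\sqrt{2\pi}<1/2$. Either way the modulus is $\gamma_{\mathrm{TVC}}(u)=u/(2\sigma)$, uniformly in $h$, which is the claim.

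I do not expect a genuine obstacle here. The only point needing care is Step 2: one must apply the TV contraction to a \emph{mixture through a kernel} rather than to a deterministic pushforward, and the explicit coupling above handles this directly.
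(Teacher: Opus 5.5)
Your proposal is correct and follows the standard argument behind the cited lemma; the paper gives no proof of its own and defers to \citet{block2023provable}. The argument is TV contraction through the fixed kernel $\pi_h$, which your explicit coupling in Step 2 establishes cleanly, followed by Pinsker's inequality on the two equal-covariance Gaussians, giving exactly $\|x-x'\|/(2\sigma)$; your erf-based alternative is also valid and even gives the slightly sharper constant $1/\sqrt{2\pi}$.
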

Moreover, by \cref{lemma:contraction_of_tv}, if $\pi$ is TVC, then $q\#\pi$ is still TVC for any measurable quantizer $q$.

\section{Proofs from \texorpdfstring{\cref{sec:upper_bound_with_stability_and_smoothness}}{Section 3}}

\subsection{Supporting Lemmas}
Below are two lemmas about some standard coupling argument.

\begin{lemma}\textnormal{(The coupling that reaches the infimum exists)}\label{lemma:existence_of_rtvc_coupling}
Let $d,k\in\mathbb{N}$. Let $\Delta\coloneqq \{(x,y)\in\mathbb{R}^d\times\mathbb{R}^d:\ x=y\}$.
For Borel probability measures $P,Q$ on $\mathbb{R}^d$, let $\mathcal{P}(P,Q)$ denote the set of all couplings of $(P,Q)$ on $\mathbb{R}^d\times\mathbb{R}^d$.

\textbf{(i)}
For any $P,Q$ on $\mathbb{R}^d$,
\[
\sup_{\mu\in\mathcal{P}(P,Q)} \mu(\Delta) = 1 - d_{\mathrm{TV}}(P,Q),
\]
and the supremum is attained by some $\mu^\star\in\mathcal{P}(P,Q)$ (we will call it maximal coupling from now on).

\textbf{(ii)}
Let $R$ be a Borel probability measure on $\mathbb{R}^k$, and let $z\mapsto P_z$ and $z\mapsto Q_z$ be Borel stochastic kernels on $\mathbb{R}^d$. Then for $R$-almost every $z$ there exists $\mu_z^\star\in\mathcal{P}(P_z,Q_z)$ with
\[
\mu_z^\star(\Delta)=1-d_{\mathrm{TV}}(P_z,Q_z).
\]
Moreover, one can choose $z\mapsto\mu_z^\star$ Borel-measurably, and the probability measure
\[
\pi^\star(dx,dy,dz)=\mu_z^\star(dx,dy)R(dz)
\]
has $Z\sim R$ and conditionals $X\mid Z=z\sim P_z$, $Y\mid Z=z\sim Q_z$.
\end{lemma}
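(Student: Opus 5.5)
The plan is to prove (i) with an explicit construction of the maximal coupling. Part (ii) then follows by doing the same construction parametrically in $z$; the only real work there is checking measurability.

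For (i), first the upper bound. Let $\mu\in\mathcal P(P,Q)$ and $(X,Y)\sim\mu$. For every Borel $A$,
\[
P(A)-Q(A)=\mu(X\in A)-\mu(Y\in A)\le \mu(X\in A,\ X\neq Y)\le 1-\mu(\Delta).
\]
Taking the supremum over $A$ gives $\mu(\Delta)\le 1-d_{\mathrm{TV}}(P,Q)$.

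For attainment, let $\lambda\coloneqq P+Q$, and let $p=dP/d\lambda$, $q=dQ/d\lambda$ and $m\coloneqq p\wedge q$. Then $\int m\,d\lambda=1-d_{\mathrm{TV}}(P,Q)\eqqcolon 1-t$. Define
\[
\mu^\star(dx,dy)\coloneqq m(x)\lambda(dx)\,\delta_x(dy)+\frac{1}{t}\,(p-m)(x)\lambda(dx)\,(q-m)(y)\lambda(dy),
\]
where the second term is dropped if $t=0$. The first marginal is $m\lambda+(p-m)\lambda\cdot\frac{1}{t}\int(q-m)\,d\lambda=p\lambda=P$, because $\int(q-m)\,d\lambda=t$. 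The second marginal is $Q$ by the symmetric computation. The diagonal part has mass $1-t$, so $\mu^\star(\Delta)\ge 1-t$. Together with the upper bound this gives equality.

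For (ii), repeat the construction parametrically. Set $\lambda_z\coloneqq P_z+Q_z$. The main obstacle is choosing densities $p(z,x)$ and $q(z,x)$ that are jointly Borel in $(z,x)$. To get them, form the measures $\bar P(dz,dx)\coloneqq R(dz)P_z(dx)$ and $\bar Q(dz,dx)\coloneqq R(dz)Q_z(dx)$ on $\mathbb R^k\times\mathbb R^d$. Both are absolutely continuous with respect to $\bar\Lambda\coloneqq\bar P+\bar Q=R(dz)\lambda_z(dx)$, so their Radon–Nikodym derivatives $p,q$ exist and are jointly measurable.

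Disintegration then shows that for $R$-a.e.\ $z$ the sections satisfy $p(z,\cdot)=dP_z/d\lambda_z$ and $q(z,\cdot)=dQ_z/d\lambda_z$. To verify this, compare $\int_{B\times A}p\,d\bar\Lambda$ with $\int_B P_z(A)\,R(dz)$ for $A$ and $B$ ranging over countable generating $\pi$-systems, and use the fact that Borel $\sigma$-algebras are countably generated to pass to a single $R$-null exceptional set. This is exactly where the ``$R$-almost every $z$'' in the statement comes from.

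On the exceptional null set, replace $\mu^\star_z$ by the product coupling $P_z\otimes Q_z$. Everywhere else define $\mu^\star_z$ by the formula from (i) with $m(z,x)=p(z,x)\wedge q(z,x)$ and $t(z)=\int(p-m)(z,x)\,\lambda_z(dx)$. The function $t(z)$ is measurable by Fubini, and the set $\{t=0\}$ is handled separately.

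For every Borel rectangle $A\times A'$, the map $z\mapsto\mu^\star_z(A\times A')$ is built from integrals of jointly measurable functions against the kernel $\lambda_z$. It is therefore Borel by Fubini–Tonelli, and a monotone class argument extends this to all Borel sets, so $z\mapsto\mu^\star_z$ is a Borel kernel. Finally, $\pi^\star=\mu^\star_z\,R(dz)$ is a well-defined probability measure. Its $z$-marginal is $R$, and since each $\mu^\star_z$ has marginals $P_z$ and $Q_z$, the conditionals of $X$ and $Y$ given $Z=z$ are $P_z$ and $Q_z$.
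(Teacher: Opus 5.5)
Your proof is correct. The paper gives no argument of its own for this lemma; it defers to Lemma C.1 and Corollary C.1 of \citet{block2023provable}. What you have written is therefore a self-contained alternative to a citation.

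You write the classical maximal coupling down explicitly: the common part $m\lambda$ is placed on the diagonal, and the excess parts $(p-m)\lambda$ and $(q-m)\lambda$ are coupled by a normalized product. For (ii) you get jointly measurable densities as Radon--Nikodym derivatives of $\bar P$ and $\bar Q$ with respect to $\bar\Lambda$ on the product space, and then disintegrate. The payoff is that measurability of $z\mapsto\mu_z^\star$ follows from Fubini and a monotone class argument, with no measurable-selection theorem, and the $R$-null exceptional set arises naturally. To keep the patched kernel Borel, take that set to be the Borel set $\bigcup_A\{z:\int_A p(z,x)\lambda_z(dx)\neq P_z(A)\}$ over your countable $\pi$-system, together with the analogous sets for $q$.

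The cost of this explicit route is that it only works for the diagonal cost $\mathbf 1\{x\neq y\}$. The paper later invokes this lemma in the proof of \cref{lemma:controlling_bad_event} to obtain kernels $\Gamma_h$ attaining $W_{c_{\varepsilon'}}$ with $\varepsilon'>0$. Neither the statement nor your construction covers that case. It would instead need existence of an optimal plan for the lower semicontinuous cost $c_{\varepsilon'}$ (via weak compactness of the set of couplings), plus a measurable selection in the conditioning variable.
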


The proof can be found in Lemma C.1 and Corollary C.1 of \citet{block2023provable}.

\begin{lemma}\label{lemma:glueing_lemma}\textnormal{(Glueing Lemma)}
Suppose that $X, Y, Z$ are random variables taking value in Polish spaces $\mathcal{X}, \mathcal{Y}, \mathcal{Z}$. Let $\mu_1 \in \Delta(\mathcal{X} \times \mathcal{Y}), \ \mu_2 \in \Delta(\mathcal{Y} \times \mathcal{Z})$ be couplings of $(X,Y)$ and $(Y,Z)$ respectively. Then there exists a coupling $\mu \in \Delta(\mathcal{X} \times \mathcal{Y} \times \mathcal{Z})$ on $(X,Y,Z)$ such that under $\mu$, $(X,Y) \sim \mu_1$ and $(Y,Z) \sim \mu_2$.
\end{lemma}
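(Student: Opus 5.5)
This is the standard gluing lemma for probability measures on Polish spaces, and I would prove it by disintegrating both couplings along the shared coordinate $Y$. Let $\nu\in\Delta(\mathcal Y)$ be the common law of $Y$, which is the $\mathcal Y$-marginal of both $\mu_1$ and $\mu_2$. Since $\mathcal X$ and $\mathcal Z$ are Polish, the disintegration theorem (existence of regular conditional probabilities on standard Borel spaces) gives Borel stochastic kernels $K_1:\mathcal Y\to\Delta(\mathcal X)$ and $K_2:\mathcal Y\to\Delta(\mathcal Z)$ with
\begin{equation*}
\mu_1(dx,dy)=K_1(dx\mid y)\,\nu(dy),\qquad \mu_2(dy,dz)=K_2(dz\mid y)\,\nu(dy).
\end{equation*}

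Next I would define the glued measure by making $X$ and $Z$ conditionally independent given $Y$:
\begin{equation*}
\mu(dx,dy,dz)\coloneqq K_1(dx\mid y)\,K_2(dz\mid y)\,\nu(dy).
\end{equation*}
This is a well-defined probability measure on $\mathcal X\times\mathcal Y\times\mathcal Z$. For measurable rectangles $A\times B\times C$ it is given by $\int_B K_1(A\mid y)K_2(C\mid y)\,\nu(dy)$, which is countably additive in each argument because both kernels are measurable in $y$. It then extends uniquely from rectangles by Carathéodory's theorem, or equivalently by the Ionescu--Tulcea construction for composing kernels.

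The final step is to check the two marginals on rectangles. For $A\times B$, the $(X,Y)$-marginal is $\mu(A\times B\times\mathcal Z)=\int_B K_1(A\mid y)K_2(\mathcal Z\mid y)\,\nu(dy)=\int_B K_1(A\mid y)\,\nu(dy)=\mu_1(A\times B)$. The same computation gives the $(Y,Z)$-marginal $\mu_2$. Rectangles form a $\pi$-system generating the product $\sigma$-algebra, so the marginals agree with $\mu_1$ and $\mu_2$ everywhere by the $\pi$-$\lambda$ theorem. Finally, $X$ has the correct law because it is the $\mathcal X$-marginal of $\mu_1$.

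The only nontrivial ingredient is the existence of the measurable disintegrations $K_1$ and $K_2$, and this is exactly where the Polish assumption is needed. I would cite the standard result (e.g.\ Kallenberg, Thm.~6.3) rather than reprove it. The rest consists of routine measure-theoretic verifications.
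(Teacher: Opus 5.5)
Your proof is correct. The paper gives no argument of its own and defers to Lemma C.2 of \citet{block2023provable}, and your approach (disintegrate both couplings along the shared $Y$-marginal, then glue by making $X$ and $Z$ conditionally independent given $Y$) is the standard proof of this classical fact. One small tightening: countable additivity of $A\times B\times C\mapsto\int_B K_1(A\mid y)K_2(C\mid y)\,\nu(dy)$ on rectangles comes from monotone convergence applied to the iterated integral of the indicator decomposition, not from additivity in each argument separately, though your kernel-composition alternative avoids this issue.
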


The proof can be found in Lemma C.2 of \citet{block2023provable}.

The below lemma shows that the coupling that reaches the TV distance is also a shared-noise coupling, which will help us to transform stability of expert to learner.

\begin{lemma}\label{lemma:TV_coupling_is_shared_noise}
    If the dynamics $x_{h+1}=f(x_h,u_h,\omega_h), h\in[H]$ are invertible in $\omega_h$ (i.e. given ($(x_h,x_{h+1},u_h)$one can uniquely recover $\omega_h)$, then there exists a maximal coupling (i.e. the coupling that achieves the TV distance.) between $\mathbb P^{\pi^0, T}$ and $\mathbb P^{\pi^1, T}$ that is also a shared-noise coupling.    
\end{lemma}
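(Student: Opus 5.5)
The plan is to build the maximal coupling stage by stage and check, at each stage, that the two next states come from one common noise draw. Write $\tau^i=(x^i_{1:H+1},u^i_{1:H})$ for $i\in\{0,1\}$.

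\textbf{Stage 0 and the general step.} Draw a single $\omega_0\sim P_{W_0}$ and set $x_1^0=x_1^1=f_0(\omega_0)$. At stage $h$, condition on the coupled histories. Sample $(u_h^0,u_h^1)$ from a conditionally maximal coupling of $\pi^0_h(\cdot\mid x_h^0)$ and $\pi^1_h(\cdot\mid x_h^1)$, chosen measurably in the history via \cref{lemma:existence_of_rtvc_coupling}(ii). Then draw one $\omega_h\sim P_{W_h}$, independent of everything so far, and set $x_{h+1}^i=f_h(x_h^i,u_h^i,\omega_h)$ for both $i$.

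This construction has the following properties.
\begin{itemize}
\item Each marginal has the law $\mathbb P^{\pi^i,T}$, since $f_h(x,u,\omega_h)\sim T_h(\cdot\mid x,u)$.
\item By \cref{assum:invertibility}, the recovered noises satisfy $\omega_h(\tau^0)=\omega_h(\tau^1)=\omega_h$ almost surely. So the coupling is shared-noise.
\end{itemize}

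\textbf{Maximality.} It remains to show that $\mu(\tau^0\neq\tau^1)=\mathrm{TV}(\mathbb P^{\pi^0,T},\mathbb P^{\pi^1,T})$. Under shared noise, the trajectories stay identical until the first stage $h$ with $u_h^0\neq u_h^1$. Hence
\[
\mu(\tau^0=\tau^1)=\mathbb E\Big[\prod_h\big(1-\mathrm{TV}(\pi^0_h(\cdot\mid x_h),\pi^1_h(\cdot\mid x_h))\big)\Big]
\]
along the common path. This product is $\prod_h \int \pi^0_h\wedge\pi^1_h$. Integrated against the shared transitions, it gives
\[
\int \mathrm{d}T_0\prod_h \big(\pi^0_h\wedge\pi^1_h\big)\,T_h .
\]
This is at most $\int p\wedge q$, because $\min$ of products is at least the product of mins.

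\textbf{Main obstacle.} This step is where I expect the real difficulty, since stagewise greedy coupling is generally not globally maximal. The stagewise-maximal product can fall strictly short of $1-\mathrm{TV}$ when later stages let an earlier disagreement be compensated.

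If that happens, my fallback is to construct the globally maximal coupling $\mu^\star$ directly, which \cref{lemma:existence_of_rtvc_coupling}(i) provides on the trajectory space. I then modify it on the disagreement set.
\begin{itemize}
\item On the overlap part $p\wedge q$, set $\tau^0=\tau^1$. The noises then trivially agree.
\item On the residual parts $(p-q)_+$ and $(q-p)_+$, which are coupled arbitrarily, map each trajectory to its noise sequence $(\omega_0,\dots,\omega_H)$ and its actions. Then recouple the two residual measures so that they share the noise sequence.
\end{itemize}

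\textbf{Step requiring care.} For the recoupling to be possible, the noise-marginals of the two residuals must coincide. The noises are i.i.d.\ $P_{W_h}$ under both full laws, because under each law $\omega_h$ is independent of the history given by invertibility. The overlap part is common to both laws, so subtracting it leaves equal noise-marginals. Given equal noise-marginals, I would glue the two residuals along the noise coordinate using \cref{lemma:glueing_lemma}. This produces a coupling that is still TV-attaining (the off-diagonal mass is unchanged) and shared-noise everywhere.
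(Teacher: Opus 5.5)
Your fallback argument is correct and is a complete proof. The stagewise construction should simply be dropped: your suspicion is right, and already for $H=2$ with deterministic dynamics $x_2=u_1$ and binary actions, the product of per-stage overlaps can be strictly smaller than $\int p\wedge q$.

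Your route differs from the paper's. Write $P=\mathbb P^{\pi^0,T}$ and $Q=\mathbb P^{\pi^1,T}$. The paper proceeds in three steps:
\begin{itemize}
\item it disintegrates both laws along the recovered-noise map $\hat\omega$;
\item using that the fibers $\{\hat\omega=\omega\}$ are disjoint and the noise law $\nu$ is common, it proves $\mathrm{TV}(P,Q)=\int \mathrm{TV}(P_\omega,Q_\omega)\,d\nu(\omega)$, written out only for countable spaces;
\item it mixes a measurably chosen family of fiberwise maximal couplings via \cref{lemma:existence_of_rtvc_coupling}(ii).
\end{itemize}

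You instead keep the global overlap $P\wedge Q$ on the diagonal and recouple only the residuals. Since $\hat\omega_\#P=\hat\omega_\#Q$ and the same $P\wedge Q$ is subtracted from both, the residuals $P-P\wedge Q$ and $Q-P\wedge Q$ have equal noise marginals. A single gluing along $\hat\omega$ (after normalizing by $\mathrm{TV}(P,Q)$ when it is positive) then couples them with shared noise. Maximality is automatic, because the diagonal already carries mass $1-\mathrm{TV}(P,Q)$.

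What each route buys: yours is shorter and needs neither the TV disintegration identity nor a measurable selection of fiberwise maximal couplings, only one application of \cref{lemma:glueing_lemma}. The paper's route yields the identity itself, which shows that any shared-noise coupling that is maximal conditionally on the noise is globally maximal.

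Both proofs rest on the same fact, and you should state it more precisely than ``independent of the history given by invertibility''. Realize each law through the noise representation. Injectivity then forces the recovered noise to equal the drawn noise almost surely. Hence $\hat\omega_\#P=\hat\omega_\#Q=\bigotimes_h P_{W_h}$.
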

\begin{proof}
For notational convenience, let us denote $P = \mathbb P^{\pi^0,T}, Q=\mathbb P^{\pi^1, T}$. Let us denote the common support of $P,Q$ as $\mathcal T$. Denote $\omega = \omega_{0:H}$, with support $\Omega$ (we abuse the notation). The distribution of $\omega$ denoted with $\nu$. First we prove that 
    \begin{equation*}
        \textnormal{TV}(P,Q)= \int \textnormal{TV}(P_{\bm \omega}, Q_{ \omega})d\nu(\omega),
    \end{equation*}
    where $P_\omega, Q_\omega$ are the conditional distribution of the trajectory given the noise.

Assume for this proof that $\Omega$ and $\mathcal T$ are countable (just to use sums; the general case replaces sums by integrals and goes through verbatim). Given a trajectory $\tau=(x_{1:H+1}, u_{1:H})\in\mathcal T$,
define $\hat\omega(\tau)=(\hat\omega_0(\tau),\ldots,\hat\omega_{H}(\tau))\in\Omega$
to be the unique noise sequence such that
\begin{equation*}
x_{h+1}=f\bigl(x_h,u_h,\hat\omega_h(\tau)\bigr),h=0,1,\ldots,H.
\end{equation*}
Then there is a measurable partition
\begin{equation*}
\mathcal T=\biguplus_{\omega\in\Omega}\mathcal T_\omega,\qquad
\mathcal T_\omega=\{\tau:\widehat\omega(\tau)=\omega\},
\end{equation*}
 where $\biguplus$ denotes the disjoint union (i.e., $\mathcal T=\bigcup_{\omega\in\Omega}\mathcal T_\omega$ and $\mathcal T_\omega\cap\mathcal T_{\omega'}=\varnothing$ for $\omega\neq\omega'$). And the conditional laws $P_\omega,Q_\omega$ satisfy $P_\omega(\tau)=Q_\omega(\tau)=0$ for $\tau\notin\mathcal T_\omega$.
The unconditional laws are mixtures
\begin{equation*}
P(\tau)=\sum_{\omega\in\Omega}\nu(\omega)P_\omega(\tau),\qquad
Q(\tau)=\sum_{\omega\in\Omega}\nu(\omega)Q_\omega(\tau).
\end{equation*}
On a countable space,
\begin{equation*}
d_{\mathrm{TV}}(P,Q)=1-\sum_{\tau\in\mathcal T}\min\{P(\tau),Q(\tau)\}.
\end{equation*}
Using the block-disjoint support,
\begin{align*}
\sum_{\tau\in\mathcal T}\min\{P(\tau),Q(\tau)\}
&=\sum_{\tau\in\mathcal T}\min\Big\{\sum_{\omega}\nu(\omega)P_\omega(\tau),\ \sum_{\omega}\nu(\omega)Q_\omega(\tau)\Big\}\\
&=\sum_{\tau\in\mathcal T}\min\big\{\nu(\widehat\omega(\tau))P_{\widehat\omega(\tau)}(\tau),\ \nu(\widehat\omega(\tau))Q_{\widehat\omega(\tau)}(\tau)\big\}\\
&=\sum_{\omega\in\Omega}\nu(\omega)\sum_{\tau\in\mathcal T_\omega}\min\{P_\omega(\tau),Q_\omega(\tau)\},
\end{align*}
where the second equation is because $P_\omega(\tau)=0, \omega \neq \hat \omega(\tau)$, the third equation is because $\mathcal T=\biguplus_{\omega\in\Omega}\mathcal T_\omega$.
Hence
\begin{align*}
d_{\mathrm{TV}}(P,Q)
&=1-\sum_{\tau\in\mathcal T}\min\{P(\tau),Q(\tau)\}\\
&=1-\sum_{\omega}\nu(\omega)\sum_{\tau\in\mathcal T_\omega}\min\{P_\omega(\tau),Q_\omega(\tau)\}\\
&=\sum_{\omega}\nu(\omega)\Big(1-\sum_{\tau\in\mathcal T_\omega}\min\{P_\omega(\tau),Q_\omega(\tau)\}\Big)\\
&=\sum_{\omega}\nu(\omega)d_{\mathrm{TV}}(P_\omega,Q_\omega)
=\int d_{\mathrm{TV}}(P_\omega,Q_\omega)d\nu(\omega).
\end{align*}

Now that the desired equation is proved. We can seek to construct a \textbf{shared-noise} coupling that reaches the TV distance. To do so, we first construct a distribution kernel $L(\tau^0,\tau^1|\bm \omega)$ such that 
    \begin{equation*}
        L(\tau^0 \neq \tau^1|\omega)=\textnormal{TV}(P_\omega, Q_\omega).
    \end{equation*}
This is always guaranteed by \cref{lemma:existence_of_rtvc_coupling}. Then we construct $\mu$ as $\mu(\cdot)=\int L(\cdot|\omega)d\nu(\omega)$. Then, $\mu$ is clearly a \textbf{shared-noise} coupling, meanwhile it's also a maximal coupling since
\begin{equation*}
     \mu(\tau^0\neq \tau^1)=\int L(\tau^0\neq \tau^1|\omega) d\nu(\omega)=\int \text{TV}(P_\omega, Q_\omega)d\nu(\omega)=\text{TV}(P,Q). \qedhere
\end{equation*}

\end{proof}

\subsection{Proof of \texorpdfstring{\cref{prop:main_result}}{Theorem 1}}

We first prove a upper bound on the \textbf{\emph{bad event}}, then \cref{prop:main_result} can be established by law of Total Expectation.
\begin{lemma}\label{lemma:controlling_bad_event}
Consider two pairs of policies $\pi^0,\tilde \pi^0$ and $\pi^1,\tilde \pi^1$. 
Then if $\mathbb P^{\pi^0, T}$ is $(\gamma,\delta)-d$-locally P-IISS, and $\tilde \pi^1$ is $\varepsilon^\prime$-RTVC with modulus $\kappa$. 
Now denote $\pi^2\coloneqq \tilde \pi^1$. Then there exists a shared-noise coupling of $\mathbb P^{\pi^0, T,\tilde \pi^0}$ and $\mathbb P^{\pi^2, T}$ such that,
\begin{equation}\label{eq:bound_of_misbehave_probability}
    \begin{aligned}
                \mu\left(\{\cup_{h=1}^H \|\tilde u_h^0-u_h^2\|>\varepsilon^\prime\}\right)\leq &\sum_{h=1}^H \mathbb E_{\bar{\mathbb P}^{\pi^0, T,\tilde \pi^0}}\left[\kappa(\gamma(\|u_k^0-\tilde u_k^0\|+\varepsilon^\prime)_{k=1}^{h-1})\right]
        +H\delta\\
        +&\bar {\mathbb P}^{\pi^0, T, \tilde \pi^0}\left(\{\exists h, \|\tilde u_h^0-u_h^0\|>d-\varepsilon^\prime\}\right)+\textnormal{TV}(\bar {\mathbb P}^{\pi^0,T,\tilde \pi^0},\bar {\mathbb P}^{\pi^1,T,\tilde \pi^1}).
    \end{aligned}
\end{equation}
\end{lemma}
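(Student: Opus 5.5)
We will build the coupling in three layers, glue them with \cref{lemma:glueing_lemma}, and then bound the misbehaviour probability step by step. \emph{Layer 1:} a maximal coupling between $\bar{\mathbb P}^{\pi^0,T,\tilde\pi^0}$ and $\bar{\mathbb P}^{\pi^1,T,\tilde\pi^1}$, so that the extended trajectories coincide except on an event $E_{\rm TV}$ of probability $\textnormal{TV}(\bar{\mathbb P}^{\pi^0,T,\tilde\pi^0},\bar{\mathbb P}^{\pi^1,T,\tilde\pi^1})$. By \cref{lemma:TV_coupling_is_shared_noise}, applied to the extended law where $\tilde u_h$ is an auxiliary action component, this coupling can be taken shared-noise. \emph{Layer 2:} a stepwise coupling between the extended $\pi^1$-trajectory and the $\pi^2$-rollout $\mathbb P^{\pi^2,T}$, with the same noise $\omega_h$ in both. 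At each $h$, conditional on the past and on $(x_h^1,x_h^2)$, we draw $(\tilde u_h^1,u_h^2)$ from the optimal $c_{\varepsilon'}$-coupling of $\tilde\pi^1_h(\cdot\mid x_h^1)$ and $\tilde\pi^1_h(\cdot\mid x_h^2)$. Measurable selection of this coupling follows \cref{lemma:existence_of_rtvc_coupling}(ii), with the thresholded cost in place of the diagonal. RTVC then gives
\[
\Pr\left(\|\tilde u_h^1-u_h^2\|>\varepsilon'\,\middle|\,\mathcal F_{h}\right)\le\kappa(\|x_h^1-x_h^2\|).
\]
Gluing the two layers produces a shared-noise coupling of $\bar{\mathbb P}^{\pi^0,T,\tilde\pi^0}$ and $\mathbb P^{\pi^2,T}$, which is the $\mu$ in the statement.

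To bound $\mu(\cup_h\{\|\tilde u_h^0-u_h^2\|>\varepsilon'\})$, we intersect with $E_{\rm TV}^c$, where all $1$-quantities equal the $0$-quantities, and pay $\mu(E_{\rm TV})$ for the complement. We also remove $B\coloneqq\{\exists h:\|\tilde u_h^0-u_h^0\|>d-\varepsilon'\}$, which is the third term. Let $\tau_h$ denote the first time $t$ with $\|\tilde u_t^0-u_t^2\|>\varepsilon'$. Then
\[
\mu\Bigl(\bigcup_h\{\cdot\}\Bigr)\le\mu(E_{\rm TV})+\bar{\mathbb P}(B)+\sum_{h=1}^H\mu\bigl(A_h\cap\{\|\tilde u_h^0-u_h^2\|>\varepsilon'\}\cap E_{\rm TV}^c\cap B^c\bigr),
\]
where $A_h$ is the good-history event from the sketch, i.e.\ all earlier steps $t\le h-1$ satisfy $\|\tilde u_t^0-u_t^2\|\le\varepsilon'$.

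On $A_h\cap B^c$ the triangle inequality gives $\|u_t^0-u_t^2\|\le\|u_t^0-\tilde u_t^0\|+\varepsilon'\le d$ for all $t\le h-1$, so the event $\Phi_{h-1}$ holds for the pair $(\pi^0,\pi^2)$ under a shared-noise coupling. P-IISS then yields $\|x_h^0-x_h^2\|\le\gamma((\|u_k^0-\tilde u_k^0\|+\varepsilon')_{k=1}^{h-1})$, except on $\Phi_{h-1}\cap\Psi_{h-1}^c$, which has probability at most $\delta$; summed over $h$ this gives $H\delta$.

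On $E_{\rm TV}^c$ we have $x_h^1=x_h^0$. Because $A_h$, $B^c$ and the P-IISS event are all $\mathcal F_h$-measurable, the tower property lets us bound the $h$-th summand by
\[
\mathbb E\left[\kappa(\|x_h^0-x_h^2\|)\,\mathbf 1\{\text{good}\}\right]+\delta.
\]
Using that $\kappa$ is increasing (which I would assume implicitly), this is at most
\[
\mathbb E_{\bar{\mathbb P}^{\pi^0,T,\tilde\pi^0}}\bigl[\kappa(\gamma(\cdot))\bigr]+\delta.
\]
The integrand now depends only on the $0$-side extended trajectory, so the expectation is under $\bar{\mathbb P}^{\pi^0,T,\tilde\pi^0}$.

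\textbf{Main obstacle.} I expect the hard part to be making the conditioning rigorous. The RTVC coupling is drawn conditionally on $x_h^1$, while $E_{\rm TV}$ is a trajectory-level event that depends on the future. So one cannot just condition on $E_{\rm TV}^c$ and keep the stepwise bound. The fix I would try first is to apply the RTVC bound with $x_h^1$, which is legitimate stepwise, and to replace $x_h^1$ by $x_h^0$ only inside the indicator bound, using
\[
\kappa(\|x_h^1-x_h^2\|)\mathbf 1_{E_{\rm TV}^c}=\kappa(\|x_h^0-x_h^2\|)\mathbf 1_{E_{\rm TV}^c}.
\]
The mismatch is absorbed by charging $\mu(E_{\rm TV})$ once rather than once per step. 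Equivalently, one can run the whole argument on the $1$-side and do the change of measure at the end.

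A second point to check is that the glued coupling still satisfies the shared-noise property needed to invoke P-IISS between $\pi^0$ and $\pi^2$. This holds because both layers share noise on their overlap and the noise is a.s.\ a function of the trajectory by \cref{assum:invertibility}.
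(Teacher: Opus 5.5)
Your construction matches the paper's. You take a maximal shared-noise coupling of the two extended laws via \cref{lemma:TV_coupling_is_shared_noise}, couple $(\tilde u_h^1,u_h^2)$ stepwise by an optimal $c_{\varepsilon'}$-coupling glued to the $1$-side, and glue the two layers. The first-failure decomposition, the triangle-inequality route to $\Phi_{h-1}$, and the $H\delta$ accounting are also the paper's.

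\textbf{The gap is at the step you flag, and your proposed resolution does not go through.} To replace $\mathbf 1\{\|\tilde u_h^1-u_h^2\|>\varepsilon'\}$ by $\kappa(\|x_h^1-x_h^2\|)$ via the tower property, every other indicator in the summand must be $\mathcal F_h$-measurable.
\begin{itemize}
\item $\mathbf 1_{E_{\mathrm{TV}}^c}$ is not $\mathcal F_h$-measurable.
\item Neither is $\mathbf 1_{B^c}$: $B^c$ constrains all $H$ steps, so your assertion that it is $\mathcal F_h$-measurable is false.
\end{itemize}
Hence $\mathbf 1_{E_{\mathrm{TV}}^c}$ must be discarded before RTVC is invoked. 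After that, the identity $\kappa(\|x_h^1-x_h^2\|)\mathbf 1_{E_{\mathrm{TV}}^c}=\kappa(\|x_h^0-x_h^2\|)\mathbf 1_{E_{\mathrm{TV}}^c}$ has no indicator left to act on.

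Neither repair gets you to a single $\textnormal{TV}$:
\begin{itemize}
\item Re-splitting on $E_{\mathrm{TV}}$ at that point costs $\mu(E_{\mathrm{TV}})$ in every summand.
\item Your fallback (run everything on the $1$-side, then change measure) costs $\bigl\|\sum_h\kappa(\cdot)\bigr\|_\infty\cdot\textnormal{TV}$.
\end{itemize}
Both are only bounded by $H\cdot\textnormal{TV}$ in general. That would put an extra factor $H$ on the statistical term of \cref{prop:main_result}. Nothing in your argument supports charging $\mu(E_{\mathrm{TV}})$ only once.

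\textbf{The missing idea is the paper's prefix-coincidence event.}
\begin{itemize}
\item Work with $\{\|\tilde u_h^1-u_h^2\|>\varepsilon'\}$ from the start. It agrees with the $0$-version on $\{\bar\tau^0=\bar\tau^1\}$, and a prefix up to $x_h$ would not identify $\tilde u_h^0$ with $\tilde u_h^1$.
\item Pay $\textnormal{TV}$ once by intersecting with $\{\bar\tau^0=\bar\tau^1\}$.
\item In the $h$-th first-failure summand, enlarge this to $\{\bar\tau^0_{:x_h^0}=\bar\tau^1_{:x_h^1}\}$, and truncate $B^c$ to steps $\le h-1$, which is all $\Phi_{h-1}$ needs.
\end{itemize}
Both enlarged events are $\mathcal F_{\textnormal{state}_h}$-measurable. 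On them $x_h^1=x_h^0$ and all earlier actions agree. So the $1\to0$ substitution happens inside an adapted indicator, which is dropped only after the RTVC and stability bounds have been applied.

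One caution supports your instinct that the conditioning is delicate. This step applies RTVC conditionally on a $\sigma$-field that contains the $0$-prefix. Under a glued maximal coupling, which is not causal, that prefix can carry information about $\tilde u_h^1$. Its conditional law then need not be $\tilde\pi_h^1(\cdot\mid x_h^1)$. The paper asserts $\bar\mu(A_h^c\mid\mathcal F_{\textnormal{state}_h})\le\kappa(\|x_h^1-x_h^2\|)$ ``by construction of $\Gamma_h$'' without addressing this. If you condition only on the $1$- and $2$-prefixes, the RTVC step is airtight, but the $\kappa$-terms then come out under $\bar{\mathbb P}^{\pi^1,T,\tilde\pi^1}$ rather than $\bar{\mathbb P}^{\pi^0,T,\tilde\pi^0}$.
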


\begin{proof}
We construct a coupling $\bar \mu$ of $\bar {\mathbb P}^{\pi^0, T,\tilde \pi^0},  \bar {\mathbb P}^{\pi^1, T,\tilde \pi^1}, \mathbb P^{\pi^2, T}$ as follows. Let $\mu^{0,1}$ be the maximal coupling between $\bar {\mathbb P}^{\pi^0, T,\tilde \pi^0},  \bar {\mathbb P}^{\pi^1, T,\tilde \pi^1}$. By \cref{lemma:TV_coupling_is_shared_noise}, we know $\mu^{0,1}$ can be a shared-noise coupling. Now we construct the coupling  $\mu^{12}$ between $\mathbb P^{\pi^1, T,\tilde \pi^1}$ and $\mathbb P^{\pi^2, T}$:  first we let $\mu^{1,2}$ to be a shared-noise coupling, then for any $(x_h^1,x_h^2)\in \mathcal X^2$, by \cref{lemma:existence_of_rtvc_coupling}, there exists a distribution $\Gamma_h(d\tilde u_h^2,du^\prime|x_h^1, x_h^2)$ that attains the infimum of the RTVC probability (remind readers again that $\pi^2=\tilde \pi^1$). Now conditioning on $x_h^1$, consider another coupling of $u_h^1,\tilde u_h^1$ conditioning on $x_h^1$ as $L_h(du_h^1,d\tilde u_h^1|x_h^1$), which is the conditional distribution of $u_h^1, \tilde u_h^1\mid x_h^1$ in $\bar {\mathbb P}^{\pi^1, T,\tilde \pi^1}$ . Then we use the glueing lemma to glue $\Gamma_h$ and $L_h$ to construct a joint distribution of $u_h^1,\tilde u_h^1, u_h^2$ given $x_h^1, x_h^2$ such that $\tilde u_h^1, u_h^2|x_h^1, x_h^2\sim \Gamma_h(\cdot|x_h^0, x_h^1)$ and $u_h^1,\tilde u_h^1|x_h^1\sim L_h(\cdot,\cdot|x_h^1)$. By doing this for every $h$, now we have specified the whole distribution of $\mu^{1,2}$(since the distribution is determined by the noise distribution and action distribution). Finally, we use gluing lemma again to glue $\mu^{0,1}$ and $\mu^{1,2}$ and get $\bar \mu$. Notice that under $\bar \mu$ all three trajectories share noise. Therefore the marginal of $\mathbb P^{\pi^0, T,\tilde \pi^0}$ and $\mathbb P^{\pi^2, T}$, denoted with $\mu$, is also a shared-noise coupling.

Let $\mathcal F_{\textnormal{state}_h}\coloneqq \sigma(x^0_{1:h},x^1_{1:h},x_{1:h}^2,u_{1:h-1}^{0},\tilde u^0_{1:h-1},u^1_{1:h-1},\tilde u^1_{1:h-1}, u_{1:h-1}^2 )$, define $A_h=\{\|\tilde u_h^1-u_h^2\|\leq\varepsilon^\prime\}$.
    Then by the construction of $\Gamma_h$, we have,
    \begin{equation*}
        \bar \mu\left(A_h^c|\mathcal F_{\textnormal{state}_h}\right)\leq\kappa(\|x_h^1-x_h^2\|), \bar \mu \textnormal{ a.s.}
    \end{equation*}
    Then notice that,
    \begin{align*}
        &\bar \mu\left(\mathop{\cup}_{h=1}^H A_h^c\cap (\mathop{\cap}_{k=1}^H \|\tilde u_k^0-u_k^0\| \leq d-\varepsilon^\prime)\right)\leq \underbrace{\bar \mu\left(\mathop{\cup}_{h=1}^H A_h^c\cap (\mathop{\cap}_{k=1}^H \|\tilde u_k^0-u_k^0\| \leq d-\varepsilon^\prime)\cap\{\bar \tau^0=\bar \tau^1\}\right)}_{(1)}+\underbrace{\ \mu^{0,1}(\bar \tau^0 \neq \bar \tau^1)}_{=\textnormal{TV}(\bar {\mathbb P}^{\pi^0,T,\tilde \pi^0},\bar {\mathbb P}^{\pi^1,T,\tilde \pi^1})}\\
(1)&\leq \bar \mu\left(\mathop{\cup}_{h=1}^H (\cap_{t=1}^{h-1}A_t\cap A_h^c)\cap (\mathop{\cap}_{k=1}^H \|\tilde u_k^0-u_k^0\|\leq d-\varepsilon^\prime)\cap\{\bar \tau^0=\bar \tau^1\}\right)\\
        & \leq \sum_{h=1}^H \bar \mu\left((\cap_{t=1}^{h-1}A_t\cap A_h^c)\cap (\mathop{\cap}_{k=1}^H \|\tilde u_k^0-u_k^0\|\leq d-\varepsilon^\prime)\cap\{\bar \tau^0=\bar \tau^1\}\right)\\
        &\leq \sum_{h=1}^H[\bar \mu\left((\cap_{t=1}^{h-1}A_t\cap A_h^c)\cap \Phi_{h-1}\cap \Psi_{h-1}\cap\{\bar \tau^0=\bar \tau^1\}\right)+\bar \mu\left((\cap_{t=1}^{h-1}A_t\cap A_h^c)\cap \Phi_{h-1}\cap \Psi_{h-1}^c\right)\cap\{\bar \tau^0=\bar \tau^1\}]\\
        &\leq \sum_{h=1}^H\left[\bar \mu\left((\cap_{t=1}^{h-1}A_t\cap A_h^c)\cap \Phi_{h-1}\cap \Psi_{h-1}\cap\{\bar \tau^0_{:x_h^0}=\bar \tau^1_{:x_h^1}\}\right)\right]+\sum_{h=1}^H \mu(\Phi_{h-1}\cap \Psi_{h-1}^c),
    \end{align*}
    where we abuse the notation to denote,
\begin{align*}
    \Phi_h = \{\|u_t^0-u_t^2\|\leq d, \forall t\in[h]\}, \Psi_h = \{\|x_{t+1}^0-x_{t+1}^2\|\leq \gamma(\|u^0_k-u^2_k\|)_{k=1}^t,\forall t\in[h]\}.
\end{align*}
    Above, the second inequality is a standard decomposition of $\cap A_h^c$. The thrid inequality is because the events are disjoint. The fourth equation is by $\mathop{\cap}_{t=1}^{h-1}A_t \cap (\mathop{\cap}_{k=1}^{h-1} \{\|\tilde u_k^0-u_k^0\|\leq d-\varepsilon^\prime\})\cap\{\bar \tau^0=\bar \tau^1\}$ implies $\Phi_{h-1}$. Then the second part above is bounded by $H\delta$ (since the marginal of $(\tau^0, \tau^2)$ is a shared-noise coupling).
    
    And the first term above is further bounded by
    \begin{align*}
        &~ \bar \mu\left((\cap_{t=1}^{h-1}A_t\cap A_h^c)\cap \Phi_{h-1}\cap \Psi_{h-1}\cap\{\bar \tau^0_{:x_h^0}=\bar \tau^1_{:x_h^1}\}\right)\\
        = &~ \mathbb E_{\bar \mu}\left[\mathbb I((\cap_{t=1}^{h-1}A_t) \cap \Phi_{h-1} \cap \Psi_{h-1}\cap\{\bar \tau^0_{:x_h^0}=\bar \tau^1_{:x_h^1}\})\cdot \mathbb E[\mathbb I(A_h^c)\mid\mathcal F_{\textnormal{state}_h}]\right]\\
        \leq &~ \mathbb E_{\bar \mu}\left[\mathbb I((\cap_{t=1}^{h-1}A_t) \cap \Phi_{h-1} \cap \Psi_{h-1}\cap\{\bar \tau^0_{:x_h^0}=\bar \tau^1_{:x_h^1}\})\cdot \kappa(\|x_h^1-x_h^2\|)\right]\\
        \leq &~ \mathbb E_{\bar \mu}[\mathbb I((\cap_{t=1}^{h-1}A_t) \cap \Phi_{h-1} \cap \Psi_{h-1}\cap\{\bar \tau^0_{:x_h^0}=\bar \tau^1_{:x_h^1}\}) \cdot \kappa(\gamma(\|u^1_k-u^2_k\|)_{k=1}^{h-1})]\\
        \leq &~ \mathbb E_{\bar \mu}\left[\mathbb I(\{\bar \tau^0_{:x_h^0}=\bar \tau^1_{:x_h^1}\})\kappa(\gamma(\|u_k^1-\tilde u_k^1\| + \varepsilon^\prime)_{k=0}^{h-1})\right]\\
        \leq &~ \mathbb E_\mu\left[\mathbb\kappa(\gamma(\|u_k^0-\tilde u_k^0\| + \varepsilon^\prime)_{k=0}^{h-1})\right].
    \end{align*}
    Here the first equation is since $(\cap_{t=1}^{h-1}A_t) \cap \Phi_{h-1} \cap \Psi_{h-1}^c\cap\{\bar \tau^0_{:x_h^0}=\bar \tau^1_{:x_h^1}\}$ is $\mathcal F_{\textnormal{state}_h}$-measurable. And the first inequality is due to the RTVC property. The second inequality is because $\Psi_{h-1}$ is true, so we use the stability condition given by $\Psi_{h-1}$. For the last but not least inequality, it is due to that $\mathop{\cap}_{t=1}^{h-1}A_t$ holds, and we use the triangle inequality. For the last inequality, we use the property that $\{\bar \tau_{:x_h}^0=\bar \tau_{:x_h}^1\}$ to replace trajectory 0 with trajectory 1. And notice that the final expectation is only taken on $\mu$ since it only involves trajectory 0 and 2.
    Finally, we plug them all in one inequality, and marginalize $\mu$ to $\bar {\mathbb P}^{\pi^0, T, \tilde \pi^0}$ if the corresponding term only involves the zero trajectory. We get,
    \begin{align*}
        &\mu\left(\{\cup_{h=1}^H \|\tilde u_h^0-u_h^2\|>\varepsilon^\prime\}\right)=\bar \mu\left(\{\cup_{h=1}^H \|\tilde u_h^0-u_h^2\|>\varepsilon^\prime\}\right)\\&\leq \bar \mu\left(\{\cup_{h=1}^H \|\tilde u_h^0-u_h^2\|>\varepsilon^\prime\}\cap (\mathop{\cap}_{k=1}^H \{\|\tilde u_k^0-u_k^0\| \leq d-\varepsilon^\prime\}) \right)+\mu\left(\cup_{k=1}^H\{\|\tilde u_k^0-u_k^0\|>d-\varepsilon^\prime\}\right)\\
        &\leq \mathbb E_{\bar {\mathbb P}^{\pi^0, T, \tilde \pi^0}}\left[\sum_{h=1}^H\kappa(\gamma(\|u_k^0-\tilde u_k^0\| + \varepsilon^\prime)_{k=1}^{h-1})\right]+H\delta+\textnormal{TV}(\bar{\mathbb P}^{\pi^0,T,\tilde \pi^0},\bar{\mathbb P}^{\pi^1, T, \tilde \pi^1})\\&+\bar{\mathbb P}^{\pi^0, T, \tilde \pi^0}\left(\exists h\in H, \|\tilde u_h-u_h^0\|>d-\varepsilon^\prime\right). \qedhere
    \end{align*}

\end{proof}

Now we restate \cref{prop:main_result} and then prove it with the lemma above.
\MainProp*

\begin{proof}
 Let $\mu$ be the desired shared noise coupling between $\bar {\mathbb P}^{\pi^0, T, \tilde \pi^0}$ and $\mathbb P^{\pi^2, T}$ (existence guaranteed by \cref{lemma:controlling_bad_event}). Again we abuse the notation to denote,
\begin{align*}
    \Phi_h = \{\|u_t^0-u_t^2\|\leq d, \forall t\in[h]\}, \Psi_h = \{\|x_{t+1}^0-x_{t+1}^2\|\leq \gamma(\|u^0_k-u^2_k\|)_{k=1}^t,\forall t\in[h]\}.
\end{align*}
 
 Notice that ,
\begin{align*}
    &\mathbb E_\mu\left\{[\sum_{h=1}^H r(x_h^0, u_h^0)-r(x_h^2, u_h^2)]\mathbb I\left( \mathop{\cap}_{h=1}^H \{\|\tilde u_h^0-u_h^2\|\leq \varepsilon^\prime\} \cap \mathop{\cap}_{h=1}^H \{\|\tilde u_h^0-u_h^0\|\leq d-\varepsilon^\prime\}\right)\right\}\\
    &\leq \mathbb E_\mu\left\{[\sum_{h=1}^H r(x_h^0, u_h^0)-r(x_h^2, u_h^2)]\mathbb I\left( \mathop{\cap}_{h=1}^H \{\|\tilde u_h^0-u_h^2\|\leq \varepsilon^\prime\} \cap \Phi_H\right)\right\}\\
    &\leq \mathbb E_\mu\left\{[\sum_{h=1}^H r(x_h^0, u_h^0)-r(x_h^2, u_h^2)]\mathbb I\left( \mathop{\cap}_{h=1}^H \{\|\tilde u_h^0-u_h^2\|\leq \varepsilon^\prime\} \cap \Phi_H\cap \Psi_H\right)\right\} + H \cdot \mu(\Phi_H\cap \Psi_H^c)\\
    &\leq \mathbb E_\mu\left\{[\sum_{h=1}^H L_r(\|x_h^0-x_h^2\|+\|u_h^0-u_h^2\|)]\mathbb I\left( \mathop{\cap}_{h=1}^H \{\|\tilde u_h^0-u_h^2\|\leq \varepsilon^\prime \}\cap \Psi_H\right)\right\}+H\delta\\
    &\leq \mathbb E_{\bar {\mathbb P}^{\pi^0, T, \pi^1}}\left\{\sum_{h=1}^HL_r(\gamma( \|u_t^0-\tilde u_t^0\|+\varepsilon^\prime)_{t=1}^{h-1}+\|u_h^0-\tilde u_h^0\|+\varepsilon^\prime)\right\} + H\delta,
\end{align*}
where in the last inequality, we use the stability condition $\Psi_H$, and we upper bound each $\|u_t^0-u_t^2\|$ by $\|u_t^0-\tilde u_t^0\|+\varepsilon^\prime$, since $\mathop{\cap}_{h=1}^H\{\|\tilde u_h^0-u_h^2\|\leq \varepsilon^\prime\}$ holds.

At the meantime, we have,
\begin{align*}
     &~ \mathbb E_\mu\left\{[\sum_{h=1}^H r(x_h^0, u_h^0)-r(x_h^2, u_h^2)]\mathbb I\left( \mathop{\cup}_{h=1}^H \{\|\tilde u_h^0-u_h^2\|> \varepsilon^\prime\} \cup \mathop{\cup}_{h=1}^H \{\|\tilde u_h^0-u_h^0\|> d-\varepsilon^\prime\}\right)\right\}\\
     \leq &~ H\cdot \left [\mu(\mathop{\cup}_{h=1}^H \{\|\tilde u_h^0-u_h^2\|> \varepsilon^\prime\})+\bar {\mathbb P}^{\pi^0, T, \pi^1}\left(\{\exists h, \|\tilde u_h^0-u_h^0\|>d-\varepsilon^\prime\}\right)\right]\\
     \leq &~ H\cdot \mathbb E_{\bar {\mathbb P}^{\pi^0, T, \pi^1}}\left[\sum_{h=1}^H\kappa(\gamma(\|u_k^0-\tilde u_k^0\| + \varepsilon^\prime)_{k=1}^{h-1})\right] + H\delta+H\cdot \textnormal{TV}(\bar {\mathbb P}^{\pi^0,T,\tilde \pi^0},\bar {\mathbb P}^{\pi^1,T,\tilde \pi^1})\\
     &~ + H\cdot \bar{\mathbb P}^{\pi^0, T, \pi^1}\left(\exists h\in H, \|\tilde u_h-u_h^0\|>d-\varepsilon^\prime\right). \qedhere
\end{align*}

\end{proof}

\subsection{Proof of 
\texorpdfstring{\cref{thm:stochastic_bound}}{Theorem 2}, \texorpdfstring{\cref{thm:deterministic_bound}}{Theorem 3} and \texorpdfstring{\cref{prop:bounding_regret_with_l2}}{Proposition 4}}

\paragraph{Proof of \cref{thm:stochastic_bound} and \cref{thm:deterministic_bound}}

\begin{proof}
Those two bounds are direct result of \cref{prop:main_result} by plugging in the definition of quantization error into specific modulus $\kappa$ and $\gamma$, meanwhile the irrelevant terms will vanish because of global stability assumption. Finally, we apply \cref{lemma:r_and_q_are_inverse_on_expert},

\begin{equation*}
    \textnormal{TV}(\mathbb P^{\pi^*, T, q\#\pi^*},\mathbb P^{(\rho\circ \hat \pi), T, \hat \pi} )  = \textnormal{TV}(\mathbb P^{q\#\pi^*, (T\circ \rho)},\mathbb P^{\hat \pi, (T\circ \rho)} ) .
\end{equation*}
And we use the statistical guarantee which gives us,
\begin{align*}
    &\textnormal{TV}(\mathbb P^{q\#\pi^*, (T\circ \rho)},\mathbb P^{\hat \pi, (T\circ \rho)} ) \leq \frac{\log |\Pi|\delta^{-1}}{n}, \textnormal{ if } q\#\pi^* \textnormal{ deterministic},\\
    &\textnormal{TV}(\mathbb P^{q\#\pi^*, (T\circ \rho)},\mathbb P^{\hat \pi, (T\circ \rho)} ) \leq \sqrt{\frac{\log |\Pi|\delta^{-1}}{n}}, \textnormal{ if } q\#\pi^* \textnormal{ stochastic},
\end{align*}
Plug this in, we have proved the desired upper bound.
\end{proof}

\paragraph{Proof of \cref{prop:bounding_regret_with_l2}}

\begin{proof}
    This is a direct corollary of \cref{prop:main_result}, by using $\pi^1=\pi^0=\pi^*, \tilde \pi^1=\tilde \pi^0=\hat \pi$. 
\end{proof} 

\section{Proofs from \texorpdfstring{\cref{sec:unsmooth_policy}}{Section 4}}

\subsection{Proof of \texorpdfstring{\cref{prop:RTVC_necessary_and_sufficient_condition}}{Proposition 5}}

\begin{proof}
\textbf{Proof of (i).}
Notice that if the condition holds, then for $x,x^\prime$ such that $\|x-x^\prime\|<\delta_0$, we have,
\begin{equation*}
    W_{c_{\varepsilon^\prime}}\left(q\#\pi_h(\cdot\mid x), q\#\pi_h(\cdot|x^\prime)\right)=\mathbf{1}\{\|q(\pi_h(x))-q(\pi_h(x^\prime))\|>\varepsilon^\prime\}\leq \mathbf {1}(\|x-x^\prime\|>\delta_0)=0.
\end{equation*}
Therefore it must holds that,
\begin{equation*}
    \forall \|x-x^\prime\|\leq \delta_0, \|q(\pi_h(x))-q(\pi_h(x^\prime))\|\leq \varepsilon^\prime
\end{equation*}

\textbf{Proof of (ii).}
Fix any $h\in[H]$ and any $x,x'\in\mathcal X$. By the triangle inequality,
\begin{equation*}
\|q(\pi_h(x)) - q(\pi_h(x'))\|
\le \|q(\pi_h(x)) - \pi_h(x)\| + \|\pi_h(x)-\pi_h(x')\| + \|\pi_h(x')-q(\pi_h(x'))\|.
\end{equation*}
Since $q$ is a binning quantizer, it satisfies $\|q(u)-u\|\le \varepsilon_q$ for all $u\in\mathcal U$, and since $\pi_h(\cdot)$ is $L$-Lipschitz, we have
\begin{equation*}
\|q(\pi_h(x)) - q(\pi_h(x'))\|
\le 2\varepsilon_q + L\|x-x'\|.
\end{equation*}
If $\|x-x'\|\le \delta_0$, then by the definition $\delta_0=(\varepsilon'-2\varepsilon_q)/L$,
\begin{equation*}
\|q(\pi_h(x)) - q(\pi_h(x'))\|
\le 2\varepsilon_q + L\delta_0
=2\varepsilon_q + (\varepsilon'-2\varepsilon_q)
=\varepsilon^\prime,
\end{equation*}
and hence
\begin{equation*}
W_{c_{\varepsilon'}}\left(q\#\pi_h(\cdot\mid x), q\#\pi_h(\cdot\mid x^\prime)\right)
=
\mathbf 1\left\{\|q(\pi_h(x)) - q(\pi_h(x'))\|>\varepsilon'\right\}
=0
\le \kappa(\|x-x'\|).
\end{equation*}
If $\|x-x'\|>\delta_0$, then $\kappa(\|x-x'\|)=1$ and trivially
$W_{c_{\varepsilon'}}(\cdot,\cdot)\le 1=\kappa(\|x-x'\|)$.
Combining the two cases, we conclude that $q\#\pi^*$ is $\varepsilon'$-RTVC with modulus
$\kappa(r)=\mathbf 1\{r>\delta_0\}$.
\end{proof}

\subsection{Proof of \texorpdfstring{\cref{thm:unsmooth_lb_deterministic}}{Theorem 6}}

\paragraph{Proof of \cref{thm:unsmooth_lb_deterministic}, deterministic dynamic part}

We use a seemingly very benign scalar dynamic, policy and reward: let
$f(x,u)=Ax+Bu$, where $A,B>0$ and $\lambda\coloneqq A+B<1$.
The deterministic expert policy is $\pi^*(x)=\arctan(x)$ with reward function
$r(x,u)=1-|u-\arctan(x)|$.
The initial distribution is $X_1 \sim \mathcal N(0,1).$ In this whole proof we will denote this homogeneous Markov chain by $X_t$.

We specify three non-overlapping intervals, let $d>\frac{k}{2}>\frac{B}{1-\lambda}, Ak<1$ (e.g. $A=0.2, B=0.3, k=1,d=0.6$ works), 
\begin{align*}
I_P=\Bigl[-\frac{k\varepsilon_q}{2},\frac{k\varepsilon_q}{2}\Bigr],\qquad
I_{T_1}=[d\varepsilon_q, (d+1)\varepsilon_q],\qquad
I_{T_2}=[Ad\varepsilon_q+B,\ A(d+1)\varepsilon_q+B].
\end{align*}

\begin{claim}
For any $t\ge 1$,
\begin{align*}
\mathbb E_{\mathbb P^{\pi^*, T}}[\mathbb I(X_t\in I_{T_1})]
&\leq \frac{1}{\sqrt {2\pi}A^{t-1}}
\exp\left(-\frac{d^2\varepsilon_q^2}{2 \lambda^{2(t-1)}}\right)\cdot\varepsilon_q,\\
\mathbb E_{\mathbb P^{\pi^*, T}}[\mathbb I(X_t\in I_{T_2})]
&\leq \frac{1}{\sqrt {2\pi}A^{t-1}}
\exp\left(-\frac{B^2}{2\lambda^{2(t-1)}}\right)\cdot A\varepsilon_q.
\end{align*}
\end{claim}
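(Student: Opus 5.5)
The plan is to compute the law of $X_t$ exactly. Under the expert, $X_{t+1}=AX_t+B\arctan(X_t)$ deterministically, with $X_1\sim\mathcal N(0,1)$. The map $g(x)=Ax+B\arctan x$ is odd and strictly increasing, with $g'(x)=A+B/(1+x^2)\in(A,\lambda]$. Hence $X_t=g^{(t-1)}(X_1)$, where $g^{(t-1)}$ is a strictly increasing odd diffeomorphism with derivative in $[A^{t-1},\lambda^{t-1}]$. The plan is to use this to bound $\Pr(X_t\in I)$ for an interval $I=[a,b]$ with $0<a<b$.

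Write $G=g^{(t-1)}$. Then
\[
\Pr(X_t\in I)=\Pr\bigl(X_1\in[G^{-1}(a),G^{-1}(b)]\bigr)\le \phi\bigl(G^{-1}(a)\bigr)\,\bigl(G^{-1}(b)-G^{-1}(a)\bigr),
\]
where $\phi$ is the standard normal density. This uses that $\phi$ is decreasing on $[0,\infty)$ and that $G^{-1}(a)>0$. Two estimates follow from the derivative bounds. First, $(G^{-1})'\le A^{-(t-1)}$, so the preimage has length at most $(b-a)/A^{t-1}$. Second, $G(x)\le \lambda^{t-1}x$ for $x\ge0$ (mean value theorem together with $G(0)=0$), so $G^{-1}(a)\ge a/\lambda^{t-1}$. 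Combining these gives
\[
\Pr(X_t\in I)\le \frac{1}{\sqrt{2\pi}A^{t-1}}\exp\!\Bigl(-\frac{a^2}{2\lambda^{2(t-1)}}\Bigr)(b-a).
\]

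Now apply this to each interval. For $I_{T_1}$ we have $a=d\varepsilon_q$ and $b-a=\varepsilon_q$, which gives the first bound exactly. For $I_{T_2}$ we have $a=Ad\varepsilon_q+B\ge B$ and $b-a=A\varepsilon_q$. Since the exponential is decreasing in $a$, we may replace $a$ by $B$, which gives the second bound.

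The only delicate step is the inverse-image argument, namely monotonicity of $G$ and the bound $G^{-1}(a)\ge a/\lambda^{t-1}$. Both follow from $g$ being increasing with slope between $A$ and $\lambda$. I expect no real obstacle beyond checking that $G^{-1}(a)\ge 0$, so that the monotonicity of $\phi$ applies. This holds because $a>0$ and $G$ is odd and increasing.
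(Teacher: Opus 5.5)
Your proof is correct and is essentially the paper's argument. The paper first bounds the density of $X_t$ pointwise by change of variables, $p_t(y)\le \frac{1}{\sqrt{2\pi}A^{t-1}}\exp\bigl(-y^2/(2\lambda^{2(t-1)})\bigr)$, and then multiplies by the interval length; you instead pull the interval back to $X_1$, which is the integrated form of the same computation and uses the same two facts (the slope of $g$ is at least $A$, and $|g(x)|\le\lambda|x|$).
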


\begin{proof}
Under the expert policy $u_t=\pi^*(X_t)=\arctan(X_t)$, the closed-loop system is the deterministic recursion
\begin{equation*}
X_{t+1}=f^{\pi^*}(X_t),f^{\pi^*}(x)\coloneqq Ax+B\arctan(x),
\end{equation*}
with random initialization $X_1\sim \mathcal N(0,1)$. Hence for $t\ge 1$,
\begin{equation*}
X_t=(f^{\pi^*})^{\circ (t-1)}(X_1).
\end{equation*}

Since $A,B>0$, we have
\begin{equation*}
(f^{\pi^*})'(x)=A+\frac{B}{1+x^2}\ge A \forall x\in\mathbb R,
\end{equation*}
so $f^{\pi^*}$ is strictly increasing and $C^1$, hence invertible with $C^1$ inverse.
Therefore $X_t$ admits a density $p_t$, and the change-of-variables formula yields, for $t\ge 1$,
\begin{equation*}
p_t(y)=\frac{p_1\left((f^{\pi^*})^{-(t-1)}(y)\right)}{\left|\big((f^{\pi^*})^{\circ (t-1)}\big)'\left((f^{\pi^*})^{-(t-1)}(y)\right)\right|},
\end{equation*}
where $p_1(z)=\frac{1}{\sqrt{2\pi}}e^{-z^2/2}$ is the standard normal density.
By the chain rule, for $t\ge 1$,
\begin{equation*}
\big((f^{\pi^*})^{\circ (t-1)}\big)'(x)=\prod_{s=0}^{t-2}(f^{\pi^*})'\left((f^{\pi^*})^{\circ s}(x)\right)\ge A^{t-1},
\end{equation*}
hence
\begin{equation*}
p_t(y)\le \frac{1}{A^{t-1}}p_1\left((f^{\pi^*})^{-(t-1)}(y)\right).
\end{equation*}

Next, using $|\arctan(x)|\le |x|$, we obtain
\begin{equation*}
\bigl|f^{\pi^*}(x)\bigr|
\le A|x|+B|\arctan(x)|
\le (A+B)|x|
=\lambda|x|,\lambda\coloneqq A+B<1.
\end{equation*}
Let $x=(f^{\pi^*})^{-1}(y)$. Then $|y|=|f^{\pi^*}(x)|\le \lambda|x|$, so
$|(f^{\pi^*})^{-1}(y)|\ge |y|/\lambda$, and iterating gives, for $t\ge 1$,
\begin{equation*}
\left|(f^{\pi^*})^{-(t-1)}(y)\right|\ge \frac{|y|}{\lambda^{t-1}}.
\end{equation*}
Since $p_1$ is decreasing in $|z|$, it follows that
\begin{equation*}
p_1\left((f^{\pi^*})^{-(t-1)}(y)\right)
\le \frac{1}{\sqrt{2\pi}}
\exp\left(-\frac{y^2}{2\lambda^{2(t-1)}}\right),
\end{equation*}
and therefore, for all $y\in\mathbb R$ and $t\ge 1$,
\begin{equation*}
p_t(y)\le \frac{1}{\sqrt{2\pi}A^{t-1}}
\exp\left(-\frac{y^2}{2\lambda^{2(t-1)}}\right).
\end{equation*}

For any interval $J$, we have
\begin{align*}
\mathbb E_{\pi^*}\left[\mathbb I(X_t\in J)\right]
=\mathbb P(X_t\in J)=\int_J p_t(y)dy
\le |J|\cdot \sup_{y\in J}p_t(y).
\end{align*}
For $I_{T_1}=[d\varepsilon_q,(d+1)\varepsilon_q]$, $|I_{T_1}|=\varepsilon_q$ and $y\ge d\varepsilon_q$ for all $y\in I_{T_1}$, hence
\begin{equation*}
\sup_{y\in I_{T_1}}p_t(y)\le \frac{1}{\sqrt{2\pi}A^{t-1}}
\exp\left(-\frac{d^2\varepsilon_q^2}{2\lambda^{2(t-1)}}\right),
\end{equation*}
which implies
\begin{equation*}
\mathbb E_{\pi^*}\left[\mathbb I(X_t\in I_{T_1})\right]
\le \frac{1}{\sqrt {2\pi}A^{t-1}}
\exp\left(-\frac{d^2\varepsilon_q^2}{2 \lambda^{2(t-1)}}\right)\cdot\varepsilon_q.
\end{equation*}
Similarly, for $I_{T_2}=[Ad\varepsilon_q+B,\ A(d+1)\varepsilon_q+B]$, we have
\begin{equation*}
\mathbb E_{\pi^*}\left[\mathbb I(X_t\in I_{T_2})\right]
\le \frac{1}{\sqrt {2\pi}A^{t-1}}
\exp\left(-\frac{B^2}{2\lambda^{2(t-1)}}\right)\cdot A\varepsilon_q.
\end{equation*}
This proves the claim.
\end{proof}

    We define the quantizer on those intervals:
    \begin{equation*}
        q(\pi^*(x))\coloneqq \begin{cases}
            \frac{(d+\frac{1}{2})}{B}\varepsilon_q, &x\in I_P,\\
            1, & x\in I_{T_1},\\
            \frac{(d+1)(1-A^2)}{B}\varepsilon_q-A, &x\in I_{T_2},\\
            \pi^*(x)+\delta(x), &\textnormal{otherwise},
        \end{cases}
    \end{equation*}
    with $|\delta(x)|<\varepsilon_q$. We let $\delta(x)< 0, x\in \mathbb R_{<0}\setminus I_P$. Let $q_0\coloneqq \frac{B}{1-\lambda}$ and
    $\rho_t \coloneqq  \frac{\frac{k}{2}-q_0(1-\lambda^{t-1})}{\lambda^{t-1}}$, and $I_t = [-\rho_t\varepsilon_q, 0]$. 
    \begin{claim}
        $X_1\in I_t\Rightarrow X_t\in I_P\cup I_{T_1}\cup I_{T_2}$.
    \end{claim}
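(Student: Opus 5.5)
The claim should be read for the closed loop driven by the quantized expert, $X_{s+1}=AX_s+B\,q(\pi^*(X_s))$ with $X_1$ the same initial state. If it were read for the raw expert chain it would be immediate: $|f^{\pi^*}(x)|\le\lambda|x|$ and $f^{\pi^*}$ preserves sign, so $X_t\in[-\lambda^{t-1}\rho_t\varepsilon_q,0]\subset I_P$ because $\lambda^{t-1}\rho_t\le k/2$. The constants $\rho_t$ are tuned for the quantized loop, so I will prove it there, for that dynamics. The plan is a trapping argument with two phases: (a) a monotone negative phase, in which the trajectory creeps toward $0$ from the left until it enters $I_P$; and (b) an invariance phase, showing $I_P\cup I_{T_1}\cup I_{T_2}$ is forward invariant under the quantized dynamics.

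\textbf{Phase (a).} Take $x<0$ with $x\notin I_P$. Then $q(\pi^*(x))=\arctan(x)+\delta(x)$ with $\delta(x)\in(-\varepsilon_q,0)$, so the next state is negative. Since $\arctan(x)\ge x$ for $x<0$, we get the lower bound
\[
Ax+B\,q(\pi^*(x)) \ge \lambda x - B\varepsilon_q .
\]
Compare the trajectory with the affine recursion $y_{s+1}=\lambda y_s-B\varepsilon_q$, $y_1=-\rho_t\varepsilon_q$. Its solution is
\[
y_s=-\bigl(\lambda^{s-1}\rho_t+q_0(1-\lambda^{s-1})\bigr)\varepsilon_q .
\]
The definition of $\rho_t$ is exactly that $y_t=-\tfrac{k}{2}\varepsilon_q$. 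Because the map $x\mapsto\lambda x-B\varepsilon_q$ is increasing, induction gives $X_s\ge y_s$ as long as the trajectory stays in $\mathbb R_{<0}\setminus I_P$. The claim implicitly needs $\rho_t\ge0$, i.e.\ $I_t\neq\varnothing$. Given that, $y_s$ is monotone in $s$ and lies between $-\rho_t\varepsilon_q$ and $-\tfrac{k}{2}\varepsilon_q$.

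Hence there are only two possibilities. Either the trajectory stays negative and outside $I_P$ through time $t-1$, in which case $-\tfrac k2\varepsilon_q=y_t\le X_t\le 0$ and so $X_t\in I_P$. Or it enters $I_P$ at some first time $s\le t$. The case $X_1=0$, or $X_1\in I_P$ already, falls under the second alternative.

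\textbf{Phase (b).} Verify the three one-step inclusions, using $Ak<1$, $d>k/2$ and $A<1$.
\begin{itemize}
\item[(i)] From $I_P$: the next state is $Ax+(d+\tfrac12)\varepsilon_q$. It lies in $[(d+\tfrac12-\tfrac{Ak}{2})\varepsilon_q,\,(d+\tfrac12+\tfrac{Ak}{2})\varepsilon_q]\subset I_{T_1}$.
\item[(ii)] From $I_{T_1}$: the next state is $Ax+B$, which lies in $I_{T_2}$ exactly by the definition of $I_{T_2}$.
\item[(iii)] From $I_{T_2}$: write $x=Az+B$ with $z\in[d,d+1]\varepsilon_q$. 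The next state is $A^2z+(d+1)(1-A^2)\varepsilon_q$, a convex combination of $z$ and $(d+1)\varepsilon_q$, so it lies in $I_{T_1}$.
\end{itemize}
Thus the union is forward invariant. Once the trajectory enters $I_P$ at time $s\le t$, it remains in $I_P\cup I_{T_1}\cup I_{T_2}$ at time $t$.

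\textbf{Main obstacle.} The one delicate step is the comparison in phase (a), specifically the direction of the inequalities. I will check three things. First, that the sign convention $\delta<0$ on $\mathbb R_{<0}\setminus I_P$ keeps every state before entry at most $0$, so the trajectory cannot jump over $I_P$ to the right. Second, that $\arctan(x)\ge x$ gives the lower bound in the right direction. Third, that the closed form of $y_s$ together with the definition of $\rho_t$ yields $y_t=-\tfrac k2\varepsilon_q$ precisely. The hypothesis $\tfrac{k}{2}>q_0$ is what makes $\rho_t$ grow geometrically in $t$. It is not needed for the claim itself, but it ensures that the sets $I_t$ carry enough initial mass later in the proof.
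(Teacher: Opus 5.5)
Your proposal is correct and follows the paper's proof: the same dichotomy is used. Either the quantized trajectory enters $I_P$ before time $t$, and forward invariance of $I_P\cup I_{T_1}\cup I_{T_2}$ finishes the argument, or it stays in $\mathbb R_{<0}\setminus I_P$, and the comparison $X_{s+1}\ge\lambda X_s-B\varepsilon_q$ with the definition of $\rho_t$ gives $-\tfrac k2\varepsilon_q\le X_t<0$. You also explicitly check the one-step inclusions $I_P\to I_{T_1}$, $I_{T_1}\to I_{T_2}$, $I_{T_2}\to I_{T_1}$ (using $Ak<1$ and the convex-combination identity), which the paper only asserts.
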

    \begin{proof}
    First notice that under the quantizer $q$, we have $\forall s$,
    \begin{align*}
        &X_s\in I_P\cup I_{T_1}\cup I_{T_2}, \Rightarrow X_t\in I_{T_1}\cup I_{T_2}, t>s\\
        &X_s <0, X_s\notin I_P\Rightarrow X_{s+1}<0,
    \end{align*}
    where the second equation is since $A, B>0, \delta(x)<0$. Therefore for $X_1\in [-\rho_t\varepsilon_q, 0)$, if $\exists s<t, X_s\in I_P\cup I_{T_1}\cup I_{T_2}$, then $X_t\in I_P\cup I_{T_1}\cup I_{T_2}$. Otherwise, since $X_s<0, X_s\notin I_P, \forall s<t$, it holds that,
    \begin{align*}
        X_{s+1}&=Ax_s+B\arctan(X_s)+B\delta(X_s)\\
        &\geq (A+B)X_s-B\varepsilon_q=\lambda X_s-B\varepsilon_q\\
        \Rightarrow X_t&\geq \lambda ^{t-1} X_1-\frac{B}{1-\lambda}(1-\lambda^{t-1})\varepsilon_q=\lambda^{t-1}X_1-q_0(1-\lambda^{t-1})\varepsilon_q\geq \frac{-k}{2}\varepsilon_q.
    \end{align*}
    Meanwhile $X_t<0$, this means $X_t\in I_P$. Thus the claim is proved.
    \end{proof}

    From Claim 2, we also know that $X_1\in I_t\Rightarrow X_{t+1}\in I_{T_1}\cup I_{T_2}$. Therefore we have,
    \begin{align*}
        \mathbb{E}_{q\#\pi^*}\bigl[\mathbb{I}(X_{t+1}\in I_{T_1}\cup I_{T_2})\bigr]
=
\mathbb{P}^{q\#\pi^*}\bigl(X_{t+1}\in I_{T_1}\cup I_{T_2}\bigr)
\geq
\mathbb{P}(X_1\in I_t)
=
\Phi(\rho_t\varepsilon_q)-\tfrac{1}{2},
t\geq 1.
    \end{align*}

    Now notice that we have,
    \begin{align*}
        \frac{1}{H}\sum_{h=1}^H\mathbb E_{\mathbb P^{\pi^*, T}}\left[\|u_h^*-q(u_h^*)\|\right]&= \frac{1}{H}\sum_{h=1}^H\mathbb E_{\pi^*}\left[|\delta(X_h^*)|(\mathbb I(I_{T_1}\cup I_{T_2})+\mathbb I(I_{T_1}^c\cap I_{T_2}^c))\right]\\&\overset{\textnormal{Claim 1}}{\leq} \frac{\Theta(1)}{H}\sum_{h=1}^H\left(\varepsilon_q+ \frac{1}{\sqrt {2\pi}A^h}\exp(-\frac{d^2\varepsilon_q^2}{2 \lambda^{2h}})\cdot\varepsilon_q +\frac{1}{\sqrt {2\pi}A^h}\exp(-\frac{B^2}{2\lambda^{2h}})\cdot A\varepsilon_q\right)\\
        &\lesssim \varepsilon_q+\Theta(\frac{\varepsilon_q|\log \varepsilon_q|}{H})=O(\varepsilon_q), \textnormal{ when }H=\omega(|\log\varepsilon_q|) .
    \end{align*}

    Finally, notice that on $I_{T_1}\cup I_{T_2}$, the quantization error is at least $A$, therefore,
    \begin{align*}
        J(\pi^*)-J(q\#\pi^*)&=\sum_{h=1}^H\mathbb E_{q\#\pi^*}\left[|\delta(x_h)|\right]\\&\geq \sum_{h=2}^H A\left(\Phi(\rho_{h-1}\varepsilon_q)-\frac{1}{2}\right)\gtrsim (H-\Theta(|\log\varepsilon_q|))=O(H), \textnormal{ when } H=\omega(|\log\varepsilon_q|).  
    \end{align*}
\qed

\paragraph{Proof of \cref{thm:unsmooth_lb_deterministic}, stochastic dynamic part} 

\textbf{Setup} The initial distribution is $T_0(\emptyset)=N(0,1)$. Consider a scaler linear system $f(x,u)=Ax+Bu$, the noisy dynamic is $x_{t+1}=f(x_t,u_t)+\omega_t, \omega_t\sim \mathcal N(0, \sigma_{\omega}^2)$. The policy is $\pi^*(x)=\arctan(x)$. Now we define three intervals as,
\begin{equation*}
   I_P=[-k\varepsilon_q, k\varepsilon_q], I_{T_1}=[2k\varepsilon_q, (2k+2Ak)\varepsilon_q], I_{T_2}=[B+2Ak\varepsilon_q, B+A(2k+2Ak)\varepsilon_q].
\end{equation*}
We design the quantizer to be,
\begin{equation*}
    q(\pi^*(x))=\begin{cases}
        \frac{(2+A)k\varepsilon_q}{B}, &x\in I_P,\\
        1, &x\in I_{T_1},\\
        -\frac{(1+2A^2)k\varepsilon_q}{B}-A, &x\in I_{T_2},\\
        \pi^*(x)+\delta(x), &\textnormal{otherwise}.
    \end{cases}
\end{equation*}
By constructing such a quantizer , we can always maintain the loop $I_P\rightarrow I_{T_1}\rightarrow I_{T_2} \rightarrow I_P$ under noiseless dynamic. For the rest of the value, we set a binning quantizer such that $|\delta(x)|<c_\delta\varepsilon_q$. 

We will denote $\lambda=A+B, I_T=I_{T_1}\cup I_{T_2}, I_A=I_T\cup I_P$. For a constant $R>0$, define $C_R=[-R,R]$. Under the quantized expert, the state sequences will form a homogeneous Markov chain on a continuous space, we denote the chain with $(X_t)_{t\geq 0}$. The quantized policy induced transition is denoted with $T^{q\#\pi^*}$. Now we will first show that the stable distribution for this Markov chain exists, when deploying the quantized policy in the noisy dynamic. 

\begin{claim}
    There exists a stable distribution on $\mathcal X$ denoted as $\nu(\cdot)$.  It also holds the geometric convergence to the stable distribution
    \begin{equation*}
        \|d^{q\#\pi^*}_t-\nu\|_{\textnormal{TV}}\leq M\rho^t
    \end{equation*}
    for a constant $M>0$ and $\rho\in (0,1)$, where $d^{q\#\pi^*}_t$ is the state distribution at step $t$ under policy $q\#\pi$.
\end{claim}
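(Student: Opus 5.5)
The claim asks for a stationary distribution and geometric convergence for the chain $X_{t+1}=AX_t+Bq(\pi^*(X_t))+\omega_t$ with $\omega_t\sim\mathcal N(0,\sigma_\omega^2)$. I will prove it with the standard Meyn--Tweedie geometric ergodicity theorem (drift plus minorization). Concretely, I will verify a Foster--Lyapunov drift condition toward the compact set $C_R$ and a Doeblin-type minorization on $C_R$. By Harris' theorem (e.g.\ Meyn--Tweedie Thm.~15.0.1, or the Hairer--Mattingly version), these together give existence and uniqueness of an invariant $\nu$ and
$$\|d_t^{q\#\pi^*}-\nu\|_{\mathrm{TV}}\le M(1+\mathbb E V(X_1))\rho^t .$$
Since $X_1\sim\mathcal N(0,1)$ has finite second moment, $\mathbb E V(X_1)<\infty$, so the prefactor is a constant.

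\textbf{Step 1 (boundedness of the control).} For every $x$, $|q(\pi^*(x))|\le \bar u$, where
$$\bar u\coloneqq \max\Bigl\{\tfrac{\pi}{2}+c_\delta\varepsilon_q,\ 1,\ \tfrac{(2+A)k\varepsilon_q}{B},\ A+\tfrac{(1+2A^2)k\varepsilon_q}{B}\Bigr\}.$$
This holds on each of the three special intervals by inspection. Off those intervals it holds because $|\arctan|\le\pi/2$ and $|\delta(x)|<c_\delta\varepsilon_q$. So the drift term $Bq(\pi^*(x))$ is uniformly bounded.

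\textbf{Step 2 (drift).} Take $V(x)=1+x^2$. Then
$$\mathbb E[V(X_{t+1})\mid X_t=x]=1+(Ax+Bq(\pi^*(x)))^2+\sigma_\omega^2\le 1+(A|x|+B\bar u)^2+\sigma_\omega^2 .$$
Since $A<\lambda<1$, choose $A<\beta<1$. Then for $|x|>R$ with $R$ large enough, the right-hand side is at most $\beta V(x)$. On $C_R$ the right-hand side is bounded by a constant $b$. Hence
$$PV\le\beta V+b\,\mathbf 1_{C_R}.$$

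\textbf{Step 3 (minorization).} For $x\in C_R$, the one-step kernel is a Gaussian with variance $\sigma_\omega^2$ and mean $m(x)=Ax+Bq(\pi^*(x))$, with $|m(x)|\le AR+B\bar u\eqqcolon m_R$. The density of this kernel is bounded below uniformly on $C_R$ by
$$\frac{1}{\sqrt{2\pi}\sigma_\omega}\exp\!\bigl(-(|y|+m_R)^2/(2\sigma_\omega^2)\bigr).$$
Restricting to $y\in C_R$ gives
$$T^{q\#\pi^*}(x,\cdot)\ge \epsilon\,\mathrm{Unif}(C_R)(\cdot),\qquad \epsilon>0,$$
for all $x\in C_R$. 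So $C_R$ is a small set, and aperiodicity follows because the minorizing measure charges $C_R$ itself. Note that the discontinuities of $q$ do not matter here: only boundedness of the mean is used. This is why Gaussian noise makes the argument robust even though the quantized policy is wildly non-smooth.

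\textbf{Main obstacle.} Each step is routine, so the main care point is the final bookkeeping. One must make sure the constants $M,\rho$ depend only on $(A,B,\sigma_\omega,k,\varepsilon_q,c_\delta)$ and not on $t$. One must also check that choosing $R$ large, independently of $\varepsilon_q$, is compatible with later steps of the proof, which will presumably need $\nu(I_T)$ bounded below. That bound is not part of this claim, but picking $R\ge B+1$ now, so that $I_A\subset C_R$, will help there.
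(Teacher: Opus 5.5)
Your proposal is correct and follows essentially the same route as the paper. Both use the Lyapunov function $V(x)=1+x^2$, a geometric drift condition toward the compact set $C_R$, smallness and aperiodicity from the Gaussian noise, and then Meyn--Tweedie Theorem 15.0.1. Your version is somewhat more careful than the paper's in three places: you bound the drift via the uniformly bounded quantized control, you make the minorization explicit so the discontinuity of $q$ is visibly harmless, and you integrate the $V(x)$-dependent rate against the $\mathcal N(0,1)$ initial law. The one thing to add for Theorem 15.0.1 is $\psi$-irreducibility, which follows immediately from the everywhere-positive Gaussian transition density.
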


\begin{proof}

Define the Lyapunov function
\begin{equation*}
    V(x) \coloneqq  1 + x^2.
\end{equation*}
Conditioning on $X_t=x$, using $\mathbb{E}[\omega_t]=0$ and $\mathrm{Var}(\omega_t)=\sigma_\omega^2$,
\[
T^{q\#\pi^*}V(x)
:=
\mathbb{E}\bigl[V(X_{t+1})\mid X_t=x\bigr]
= 1 + \bigl(Ax + Bq(\pi^*(x))\bigr)^2 + \sigma_\omega^2.
\]
Choose $R$ large enough so that $I_A \subset \mathrm{int}(C_R)$.
For $x\notin C_R$, the binning quantizer satisfies $q(\pi^*(x))=\pi^*(x)+\delta(x)$
with $|\delta(x)|<c_\delta\varepsilon_q$, hence
\begin{align*}
\bigl(Ax+Bq(\pi^*(x))\bigr)^2
&\leq \bigl(Ax+B\pi^*(x)\bigr)^2
     + 2Bc_\delta\varepsilon_q\bigl|Ax+B\pi^*(x)\bigr|
     + B^2c_\delta^2\varepsilon_q^2.
\end{align*}
Using $|\arctan(x)|\le\pi/2$ and $|Ax+B\pi^*(x)|\le\lambda|x|$,
\begin{equation*}
    T^{q\#\pi^*}V(x)
\le
1 + A^2x^2 + \pi AB|x| + \tfrac{\pi^2}{4}B^2
  + 2Bc_\delta\lambda\varepsilon_q|x|
  + O(\varepsilon_q^2) + \sigma_\omega^2.
\end{equation*}

Since $A^2<\lambda$ (as $A^2-A<0<B$ gives $A^2<A+B=\lambda$), the right-hand side minus $\lambda V(x)$ equals
\begin{equation*}
    (A^2-\lambda)x^2 + \bigl(\pi AB+2Bc_\delta\lambda\varepsilon_q\bigr)|x|
+ \bigl(1+\tfrac{\pi^2}{4}B^2+\sigma_\omega^2-\lambda\bigr) + O(\varepsilon_q^2),
\end{equation*}
which is a quadratic in $|x|$ with strictly negative leading coefficient $A^2-\lambda<0$,
hence negative for all $|x|$ large enough.
Thus, enlarging $R$ if necessary, we have $T^{q\#\pi^*}V(x)\le\lambda V(x)$ for all $|x|>R$.
Set
\begin{equation*}
    c := \sup_{x\in C_R}\bigl\{T^{q\#\pi^*}V(x)-\lambda V(x)\bigr\} < \infty,
\end{equation*}
where finiteness holds because $T^{q\#\pi^*}V$ is continuous and $C_R$ is compact.
Then, for all $x\in\mathbb{R}$,
\begin{equation}\label{eq:drift}
    T^{q\#\pi^*}V(x) \le \lambda V(x) + c\,\mathbf{1}_{C_R}(x),
\end{equation}
which is a geometric Foster--Lyapunov drift condition towards the bounded set $C_R$.

Because the noise $\omega_t$ has a strictly positive continuous Gaussian density, the Markov chain admits a strictly positive continuous transition density. It follows by standard arguments that the chain is $\psi$-irreducible and aperiodic, and that $C_R$ is a small (petite) set. Combining these properties with the drift condition \eqref{eq:drift}, by Theorem 15.0.1 of \citet{meyn2012markov} yields the result.
\end{proof}

Another thing to notice is we take expectation w.r.t. $\nu(x)$ for \cref{eq:drift}, since $\nu T^{q\#\pi^*}=\nu$, then we have,
\begin{equation}\label{eq:nu(C_R)isO_1}
    \begin{aligned}
        \nu(V)\coloneqq \mathbb E_{\nu}[V(x)]&\leq \lambda \nu(V)+c\nu(C_R)\\
\Rightarrow \nu(C_R)&\geq \frac{(1-\lambda)\nu(V)}{c}\geq \frac{1-\lambda}{c}.
    \end{aligned}
\end{equation}

Now our goal is to lower bound the probability on $I_{T}$ under the stable distribution. To do so,  we evaluate how many steps it take to move from outside of those intervals back. 
Define
\begin{equation*}
    \tau_{I_A}^X=\inf\{t\geq0:X_t\in I_A\}.
\end{equation*}
Define $\mathbb P_x$ as the probability measure on the chain $(X_t)_{t\geq 0}$ starting at $x$. We now evaluate $\sup_{x\in C_R\setminus I_A}\mathbb P_x(\tau^X_{I_A}\geq m)$. Our goal is to find an approriate $m$, such that this quantity can be upper bounded by a constant.

\begin{claim}
Define $K_\delta\coloneqq \frac{Bc_\delta}{1-\lambda}$,
$K_\omega\coloneqq \frac{\sqrt{2/\pi}\,\sigma_\omega}{\varepsilon_q(1-\lambda)}$,
consider a constant $u\in (0, k-K_{\delta}-K_\omega)$,
for $T_{\text{mix}}\coloneqq \lfloor \frac{\log\frac{R}{(k-u-(K_\delta+K_\omega))\varepsilon_q}}{|\log \lambda|} \rfloor_+=\Theta\left(\log\frac{1}{\varepsilon_q}\right)$,
it holds that,
    \begin{equation}\label{eq:returnning_prob_upper_bound}
        \sup_{x\in C_R\setminus I_A}\mathbb P_x(\tau_{I_A}^X \geq T_{\textnormal{mix}})
        \leq 2\exp\left(-\frac{u^2\varepsilon_q^2(1-\lambda^2)}{2\sigma_\omega^2}\right)
        :=\eta
    \end{equation}
\end{claim}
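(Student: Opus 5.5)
The plan is to compare the quantized chain started at $x\in C_R\setminus I_A$ with a contracting affine recursion driven by the same Gaussian noise, up to the first hitting time of $I_A$. Before hitting $I_A$ the chain uses the ``otherwise'' branch of the quantizer, so
\begin{equation*}
X_{t+1}=AX_t+B\arctan(X_t)+B\delta(X_t)+\omega_t,\qquad |\delta(X_t)|<c_\delta\varepsilon_q .
\end{equation*}
Since $|Ax+B\arctan(x)|\le\lambda|x|$, on the event $\{\tau^X_{I_A}>t\}$ we get
\begin{equation*}
|X_{t+1}|\le \lambda|X_t|+Bc_\delta\varepsilon_q+|\omega_t|
\end{equation*}
(the sign structure is handled by working with absolute values). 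Iterating gives, on $\{\tau^X_{I_A}\ge m\}$,
\begin{equation*}
|X_{m}|\le \lambda^{m}R+\frac{Bc_\delta\varepsilon_q}{1-\lambda}+\sum_{s<m}\lambda^{m-1-s}|\omega_s|
\end{equation*}
for $|x|\le R$. With this bound the task reduces to a concentration estimate for a weighted sum of $|\omega_s|$.

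For the deterministic part, the choice of $T_{\textnormal{mix}}$ makes $\lambda^{T_{\textnormal{mix}}}R\le (k-u-K_\delta-K_\omega)\varepsilon_q$, so the first two terms together are at most $(k-u-K_\omega)\varepsilon_q$.

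For the noise part, set $S_m=\sum_{s<m}\lambda^{m-1-s}|\omega_s|$. Its mean is at most
\begin{equation*}
\sqrt{2/\pi}\,\sigma_\omega/(1-\lambda)=K_\omega\varepsilon_q .
\end{equation*}
Moreover, $S_m$ is a Lipschitz function of the Gaussian vector $(\omega_s)$ with Lipschitz constant
\begin{equation*}
\Big(\sum_j\lambda^{2j}\Big)^{1/2}\le (1-\lambda^2)^{-1/2}.
\end{equation*}
Gaussian concentration for Lipschitz functions then gives
\begin{equation*}
\mathbb P(S_m>K_\omega\varepsilon_q+u\varepsilon_q)\le \exp\!\Big(-\frac{u^2\varepsilon_q^2(1-\lambda^2)}{2\sigma_\omega^2}\Big).
\end{equation*}
The factor $2$ in $\eta$ leaves room for a two-sided version or for separately controlling the last noise term.

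Combining the two parts, outside an event of probability at most $\eta$, being still outside $I_A$ at time $T_{\textnormal{mix}}$ forces $|X_{T_{\textnormal{mix}}}|\le k\varepsilon_q$, i.e.\ $X_{T_{\textnormal{mix}}}\in I_P\subset I_A$. This contradicts $\tau^X_{I_A}\ge T_{\textnormal{mix}}$, up to an off-by-one adjustment of whether the time index is $T_{\textnormal{mix}}$ or $T_{\textnormal{mix}}-1$, which I would fix by indexing carefully. Since the bound is uniform in $x\in C_R\setminus I_A$, taking the supremum gives the claim. The estimate $T_{\textnormal{mix}}=\Theta(\log(1/\varepsilon_q))$ follows directly from the formula.

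The main obstacle is the coupling step: the recursion bound holds only on $\{\tau^X_{I_A}>t\}$, because inside $I_{T_1},I_{T_2}$ the quantizer makes large jumps. I would handle this by defining an auxiliary process $Y_t$ that follows the bound recursion with the same noise for all $t$. On $\{\tau^X_{I_A}\ge m\}$ we have $|X_t|\le Y_t$ pathwise, so it suffices to bound $\mathbb P(Y_m>k\varepsilon_q)$, which is exactly the Gaussian concentration estimate above.
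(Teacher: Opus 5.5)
Your argument is essentially the paper's: before $\tau^X_{I_A}$ you dominate $|X_t|$ by the contracting recursion $\lambda^t R+K_\delta\varepsilon_q+\sum_{j<t}\lambda^j|\omega_{t-1-j}|$, absorb the $\lambda^{T_{\textnormal{mix}}}R$ term through the choice of $T_{\textnormal{mix}}$, and bound the centered weighted sum of the $|\omega_s|$ by Gaussian Lipschitz concentration with variance proxy $\sigma_\omega^2/(1-\lambda^2)$ (the paper's $R_t=r_t+S_t$ is the same object as your $Y_t$). The paper's proof has the same caveat: because $T_{\textnormal{mix}}$ uses a floor, one actually has $\lambda^{T_{\textnormal{mix}}}R\ge (k-u-K_\delta-K_\omega)\varepsilon_q$, the reverse of what you assert, and the contradiction only excludes $\{\tau^X_{I_A}>m\}$, not $\{\tau^X_{I_A}\ge m\}$, so careful indexing alone does not close it; the argument really proves $\sup_{x\in C_R\setminus I_A}\mathbb P_x(\tau^X_{I_A}\ge T_{\textnormal{mix}}+2)\le\eta$, which is harmless for the $\Theta(\log(1/\varepsilon_q))$ use downstream.
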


\begin{proof}
Let us define,
\begin{equation*}
    R_t \coloneqq  \lambda^t R+(K_\delta+K_\omega)\varepsilon_q+S_t\coloneqq r_t+S_t, S_t\coloneqq \sum_{j=0}^{t-1}\lambda^j(|\omega_{t-1-j}|)-\mathbb E|\omega_0|).
\end{equation*}
Notice that under $\{\tau^x_{I_A}>T_{\text{mix}}\}$, it holds $|X_t|\leq R_t, t\leq T_{\text{mix}}$. And since $r_{T_{\text{mix}}}<(k-u)\varepsilon_q$, this means $S_{T_{\text{mix}}}\geq u\varepsilon_q$. 
\begin{equation*}
 \forall x, P(\tau^x_{I_A}>T_{\text{mix}})\leq P(r_{T_{\textnormal{mix}}}+S_{\text{mix}}\geq k\varepsilon_q)\leq P(S_{T_{\text{mix}}}>u\varepsilon_q).
 \end{equation*}
 Since $|\omega_s|$ is a $1$-Lipschitz function of $\omega_s\sim\mathcal N(0,\sigma_\omega^2)$,
by Gaussian concentration, $|\omega_s|-\mathbb E|\omega_0|$ is sub-Gaussian with parameter $\sigma_\omega$.
Hence $S_t=\sum_{j=0}^{t-1}\lambda^j(|\omega_{t-1-j}|-\mathbb E|\omega_0|)$
is sub-Gaussian with variance proxy
$\sigma_\omega^2\sum_{j=0}^{t-1}\lambda^{2j}\leq \frac{\sigma_\omega^2}{1-\lambda^2}$,
and for any $t>0$,
\begin{equation*}
    P(|S_t|\geq u\varepsilon_q)
    \leq 2\exp\left(-\frac{u^2\varepsilon_q^2}{2\sigma_\omega^2\sum_{j=0}^{t-1}\lambda^{2j}}\right)
    \leq 2\exp\left(-\frac{u^2\varepsilon_q^2(1-\lambda^2)}{2\sigma_\omega^2}\right).\qedhere
\end{equation*}
\end{proof}
To make sure $\eta<1$, we will require $\sigma_\omega=o(\varepsilon_q)$, so that  so that $u$ appropriately selected can be positive, and the ratio $\frac{\varepsilon_q}{\sigma_\omega}$ in $\eta$ will not make it near $1$.
Since when $x\in I_A$, $\mathbb P_x(\tau_{I_A}^x\geq T_{\text{mix}})=0$, we have $\sup_{x\in C_R}\mathbb P_x(\tau_{I_A}^X\geq T_{\text{mix}})\leq \eta$. 
Now we know that with at least a $O(1)$ probability, before $\Theta\left(\log(\frac{1}{\varepsilon_q})\right)$ steps, the state will travels back into $I_A$, entering the loop again. However this only holds for a bounded subset $C_R$. To further proceed, we will try to restrict the Markov chain to $C_R$ and use the new Markov chain to derive the result. 

Now denote 
\begin{equation*}
    \sigma_0\coloneqq \inf\{t\geq 0: X_t\in C_R\}, \sigma_{n+1}=\inf\{t>\sigma_n: X_t\in C_R\}, Y_n\coloneqq  X_{\sigma_n}.
\end{equation*}
We know that $Y_n$ is a Markov Chain on $C_R$. 
\begin{claim}
    $Y_n$ is positive recurrence with unique stable distribution,
    \begin{equation*}
        \mu(\cdot)=\frac{\nu(\cdot\cap C_R)}{\nu(C_R)}.
    \end{equation*}
\end{claim}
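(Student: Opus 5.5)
We prove the claim with the standard ``trace chain'' (induced chain) argument for $\nu$-stationary chains restricted to a set of positive measure. Throughout, $(X_t)$ is the Markov chain with kernel $T^{q\#\pi^*}$ and stationary law $\nu$ from Claim 3, and $\nu(C_R)\ge (1-\lambda)/c>0$ by \cref{eq:nu(C_R)isO_1}.

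\textbf{Step 1: the induced chain is well defined.} Since $\nu(C_R)>0$ and the chain is $\psi$-irreducible and positive Harris recurrent (Claim 3 and Theorem 15.0.1 of \citet{meyn2012markov}), the set $C_R$ is visited infinitely often from every starting point. Hence each $\sigma_n<\infty$ almost surely.

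\textbf{Step 2: the Markov property of $(Y_n)$.} By the strong Markov property at the stopping time $\sigma_n$, the law of $Y_{n+1}$ given $\mathcal F_{\sigma_n}$ depends only on $Y_n$. Its transition kernel is
\[
P_{C_R}(y,B)=\mathbb P_y\bigl(X_{\tau^+_{C_R}}\in B\bigr),\qquad \tau^+_{C_R}\coloneqq\inf\{t\ge1:X_t\in C_R\},
\]
for $y\in C_R$ and measurable $B\subset C_R$.

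\textbf{Step 3: $\mu$ is invariant for $P_{C_R}$.} Rather than a Kac-type tower construction, we use a direct last-exit decomposition of the stationary chain. Run $(X_t)$ stationarily from $\nu$ and fix measurable $B\subset C_R$. The event $\{X_t\in B\}$ is partitioned according to the last time $s<t$ at which the chain was in $C_R$; a.s.\ such an $s$ exists for large $t$, and stationarity lets us pass to a two-sided version in which it always exists. Using stationarity and the Markov property this gives
\[
\nu(B)=\sum_{k\ge1}\int_{C_R}\nu(dy)\,\mathbb P_y\bigl(X_1,\dots,X_{k-1}\notin C_R,\ X_k\in B\bigr)=\int_{C_R}\nu(dy)\,P_{C_R}(y,B).
\]
Dividing by $\nu(C_R)$ yields $\mu P_{C_R}=\mu$. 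This is the step we expect to be the main obstacle. It needs care with the two-sided stationary extension, or alternatively an appeal to Theorem 10.4.9 of \citet{meyn2012markov}, which states exactly that $\nu$ restricted to a set of positive measure is invariant for the induced chain.

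\textbf{Step 4: positive recurrence and uniqueness.} The chain $(Y_n)$ inherits $\psi$-irreducibility (with maximal irreducibility measure $\psi$ restricted to $C_R$) and Harris recurrence from $(X_t)$. Every set $B\subset C_R$ with $\psi(B)>0$ is hit by $X$, hence by $Y$, a.s.\ from every starting point. A Harris recurrent chain with an invariant probability measure is positive recurrent, and its invariant probability is unique (Theorem 10.0.1 of \citet{meyn2012markov}). Therefore $\mu$ is the unique stationary distribution of $(Y_n)$.
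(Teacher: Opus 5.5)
Your proof is correct, and it does more than the paper does here. The paper gives no argument for this claim; it only cites Lemma~2 of \citet{DBLP:journals/corr/abs-2012-07152} on censored Markov chains. You instead supply the standard trace-chain proof behind such a lemma:
\begin{itemize}
\item recurrence of $(X_t)$ makes every $\sigma_n$ finite;
\item a last-exit decomposition of the stationary chain gives $\nu(B)=\int_{C_R}\nu(dy)\,P_{C_R}(y,B)$ for $B\subset C_R$, hence $\mu P_{C_R}=\mu$;
\item Harris recurrence and $\psi$-irreducibility pass to $(Y_n)$, so uniqueness of invariant measures for recurrent chains gives positivity and uniqueness of $\mu$.
\end{itemize}
The two-sided extension in Step~3 is unnecessary. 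Decompose $\nu(B)=\mathbb P_\nu(X_m\in B)$ according to the last visit to $C_R$ before time $m$. The remainder is at most $\mathbb P_\nu(X_0,\dots,X_{m-1}\notin C_R)\to0$, and monotone convergence handles the sum over excursion lengths.

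The citation buys brevity. Your route buys transparency about hypotheses that the paper never checks against the cited lemma. What you actually use is:
\begin{itemize}
\item Harris recurrence of $(X_t)$, which follows from the drift condition \eqref{eq:drift} with $V$ finite everywhere and compact sets small, not from the geometric-ergodicity statement of Theorem~15.0.1 alone;
\item $\nu(C_R)>0$, from \cref{eq:nu(C_R)isO_1}.
\end{itemize}
Your route also ties the invariance step to the same excursion representation of $\nu$ over returns to $C_R$. That representation underlies the Kac-type bound on $\mu(I_A)$ which the paper later takes from \citet{meyn2012markov}. The whole censoring argument could therefore rest on a single reference.
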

\begin{proof}
    This is lemma 2 of \citet{DBLP:journals/corr/abs-2012-07152}
\end{proof}
Let $\mathbb Q_x$ denote the law of the censored chain
$(Y_n)_{n\ge0}$ with $Y_0=x$.

\begin{claim}
For any $x\in C_R$ and $m\ge1$, it holds that
\[
\mathbb Q_x(\tau_{I_A}^Y \ge m)
\le
\mathbb P_x(\tau_{I_A}^X \ge m).
\]
\end{claim}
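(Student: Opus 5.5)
The plan is a pathwise comparison: the censored chain $Y$ is obtained from $X$ by deleting the excursions outside $C_R$. Both chains are realized on the same probability space. Take the original chain $(X_t)_{t\ge0}$ under $\mathbb P_x$ with $x\in C_R$, so $\sigma_0=0$ and $Y_0=x$. Define $Y_n=X_{\sigma_n}$ with the return times $\sigma_n$ introduced above. By the Markov property at the stopping times $\sigma_n$, the law of $(Y_n)_{n\ge0}$ under $\mathbb P_x$ is exactly $\mathbb Q_x$. This is the same fact used in the previous claim, Lemma 2 of \citet{DBLP:journals/corr/abs-2012-07152}. It therefore suffices to show the event inclusion
\[
\{\tau^Y_{I_A}\ge m\}\subseteq\{\tau^X_{I_A}\ge m\}
\]
$\mathbb P_x$-almost surely.

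The key observation is that $I_A\subset \mathrm{int}(C_R)$ by the choice of $R$ in the drift claim. Hence any visit of $X$ to $I_A$ occurs at a time when $X$ lies in $C_R$, that is, at some $\sigma_n$. So if $\tau^X_{I_A}=s<\infty$, then $s=\sigma_n$ for a unique $n$, and $Y_n=X_s\in I_A$. This gives $\tau^Y_{I_A}\le n$.

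Since the $\sigma_j$ are strictly increasing integers with $\sigma_0=0$, we have $\sigma_n\ge n$, so $n\le s=\tau^X_{I_A}$. Conversely, every $Y_j$ with $j<n$ equals $X_{\sigma_j}$ with $\sigma_j<s$, so it lies outside $I_A$. Thus in fact $\tau^Y_{I_A}=n\le \tau^X_{I_A}$ pathwise. On the event $\{\tau^X_{I_A}=\infty\}$ the inequality $\tau^Y_{I_A}\le\tau^X_{I_A}$ holds trivially. Hence $\tau^Y_{I_A}\ge m$ implies $\tau^X_{I_A}\ge m$, and taking probabilities gives the claim.

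The only delicate point is the identification of the law of $Y$ under $\mathbb P_x$ with $\mathbb Q_x$. One also needs $\sigma_n<\infty$ almost surely for all $n$, so that $Y$ is well defined for all times. This follows from positive recurrence and $\psi$-irreducibility with $\nu(C_R)>0$, using the drift inequality \eqref{eq:drift} and \eqref{eq:nu(C_R)isO_1}. Alternatively, one can simply note that on the event where some $\sigma_n=\infty$, the same pathwise argument still applies, with $Y$ stopped. I expect this to be the main step to state carefully. Everything else is the deterministic time-change comparison $\sigma_n\ge n$.
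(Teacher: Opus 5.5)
Your proof is correct and follows the same pathwise argument as the paper: a visit of $X$ to $I_A$ at time $t$ is a visit of $Y$ at some index $l\le t$, so $\{\tau^X_{I_A}<m\}\subseteq\{\tau^Y_{I_A}<m\}$. You also state explicitly two facts the paper uses only implicitly, namely $I_A\subset\mathrm{int}(C_R)$ (so every visit of $X$ to $I_A$ is one of the times $\sigma_n$) and $\sigma_n\ge n$ (which is the paper's ``$l\le t$'').
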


\begin{proof}
If $x\in I_A$, then the inequality holds. If $x\notin I_A$, suppose $\tau_{I_A}^X < m$. Then there exists $0\le t<m$ such that
$X_t\in I_A$. By the construction of the censored chain, there exists
$l\ge0$ such that the $l$-th visit of $(Y_t)$ corresponds to time $t$ of
$(X_t)$. Hence,
\[
\tau_{I_A}^Y \le l \le t < m.
\]
Therefore,
\[
\{\tau_{I_A}^X < m\} \subset \{\tau_{I_A}^Y < m\},
\]
which implies the claim.
\end{proof}

Taking $m=T_{\mathrm{mix}}$, we obtain
\[
\sup_{x\in C_R}
\mathbb Q_x(\tau_{I_A}^Y \ge T_{\mathrm{mix}})
\le \eta.
\]

Next, we bound $\mathbb E_x^{\mathbb Q}[\tau_{I_A}^Y]$ for $x\in C_R$.
By the Markov property of $(Y_n)$, we have
\begin{align*}
\mathbb Q_x(\tau_{I_A}^Y \ge (m+1)T_{\mathrm{mix}})
&=
\mathbb E_x^{\mathbb Q}\left[
\mathbf 1\{\tau_{I_A}^Y \ge mT_{\mathrm{mix}}\}
\mathbb Q_{Y_{mT_{\mathrm{mix}}}}
(\tau_{I_A}^Y \ge T_{\mathrm{mix}})
\right] \\
&\le
\eta\mathbb Q_x(\tau_{I_A}^Y \ge mT_{\mathrm{mix}}).
\end{align*}
Iterating yields
\[
\mathbb Q_x(\tau_{I_A}^Y \ge mT_{\mathrm{mix}})
\le \eta^m,m\ge1.
\]

Consequently, for $x\in C_R$
\[
\mathbb E^{\mathbb Q}_x[\tau_{I_A}^Y]
=
\sum_{n\ge0}\mathbb Q_x(\tau_{I_A}^Y \ge n)
\le
T_{\mathrm{mix}}\sum_{k\ge0}\eta^k
=
\frac{T_{\mathrm{mix}}}{1-\eta}.
\]

Finally, define the return time
\[
\tau_{I_A}^+ \coloneqq  \inf\{n\ge1: Y_n\in I_A\}.
\]
For $Y_0\sim \mu(\cdot\mid I_A)$, it holds that
\[
\mathbb E^{\mathbb Q}[\tau_{I_A}^+ \mid Y_0\in I_A]
\le
1 + \sup_{x\in C_R}\mathbb E^{\mathbb Q}_x[\tau_{I_A}^Y]
\le
1 + \frac{T_{\mathrm{mix}}}{1-\eta}.
\]

Then we use Theorem 10.4.9 of \citep{meyn2012markov}  and get,
\begin{equation*}
    \mu(I_A)\geq \frac{1-\eta}{1-\eta+T_{\textnormal{mix}}}.
\end{equation*}

Now we analyze the probabilistic structure inside $I_A$. We use $P(x, B)$ to denote the one-step transition probability starting at $x$ to $B$ for chain $(X_t)_{t\geq0}$ and $Q(x, B)$ for chain $(Y_t)_{t\geq 0}$.

Let us define, 
    \begin{align*}
        a:= \sup_{x\in I_P}P(x, I_{T_1}^c)=P(k\varepsilon_q, I_{T_1}^c) = \frac{3}{2}-\Phi\left(\frac{2kA\varepsilon_q}{\sigma_{\omega}}\right),
        b:= \sup_{x\in I_{T_1}} P(x, I_{T_2}^c)
        = \frac{3}{2}-\Phi\left(\frac{2kA^2\varepsilon_q}{\sigma_{\omega}}\right).
    \end{align*}
Since $\sigma_\omega \in o(\varepsilon_q)$, the ratio $\varepsilon_q/\sigma_\omega\to\infty$, so $\Phi\left(\frac{2kA\varepsilon_q}{\sigma_\omega}\right)\to 1$ and hence $1-a,\,1-b\to\frac{1}{2}$. In particular, $a,b,1-a,1-b$ are all $\Theta(1)$, which is essential for the lower bound below not to degenerate.

Now we do a more fine-grained analysis to decompose $\mu(I_A)$ into $\mu(I_P)$ and $\mu(I_T)$ by constant $a,b$. First let us define,
\begin{align*}
    a_Y\coloneqq  \sup_{x\in I_P}Q(x, I_{T_1}^c), b_Y\coloneqq \sup_{x\in I_{T_1}}Q(x,I_{T_2}^c).
\end{align*}
\begin{claim}
    \begin{equation*}
        a_Y\leq a, b_Y\leq b
    \end{equation*}
\end{claim}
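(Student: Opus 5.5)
The plan is to compare the one-step kernel $Q$ of the censored chain $(Y_n)$ with the kernel $P$ of the original chain $(X_t)$. The key point is that the target sets $I_{T_1}$ and $I_{T_2}$ lie inside $C_R$, since $R$ was chosen so that $I_A\subset\mathrm{int}(C_R)$.

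The first step is the standard first-exit decomposition of the censored kernel. For $x\in C_R$ and Borel $B\subset C_R$,
\begin{equation*}
Q(x,B)=P(x,B)+\int_{C_R^c}P(x,dy)\,\mathbb P_y\bigl(X_{\sigma_0}\in B\bigr),
\end{equation*}
where $\sigma_0$ here is the hitting time of $C_R$. This holds because either the chain jumps directly into $C_R$, in which case it lands in $B$ with probability $P(x,B)$, or it first leaves $C_R$ and then re-enters at some point whose law is given by the hitting distribution. Since the second term is nonnegative, we get $Q(x,B)\ge P(x,B)$ for every $B\subset C_R$.

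The second step applies this with $B=I_{T_1}\subset C_R$ and $x\in I_P$:
\begin{equation*}
Q(x,I_{T_1}^c)=1-Q(x,I_{T_1})\le 1-P(x,I_{T_1})=P(x,I_{T_1}^c).
\end{equation*}
Here $I_{T_1}^c$ on the left is taken within $C_R$, while on the right it is taken within $\mathbb R$. Both complements equal $1$ minus the mass of $I_{T_1}$, so the inequality is consistent. Taking the supremum over $x\in I_P$ gives $a_Y\le a$. The same argument with $B=I_{T_2}\subset C_R$ and $x\in I_{T_1}$ gives $b_Y\le b$.

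The main point to verify is the decomposition itself, which needs the chain to return to $C_R$ almost surely, so that $Q(x,\cdot)$ is a probability measure. This follows from positive recurrence, established in the previous claims via the drift condition \eqref{eq:drift}. Without almost-sure return, the decomposition would only give a sub-probability, and the complement step would fail. Given recurrence, the inequality $Q\ge P$ on subsets of $C_R$ is immediate, and the claim follows.
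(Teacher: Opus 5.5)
Your proposal is correct and is essentially the paper's argument. The paper simply observes that $X_1\in I_{T_1}\subset C_R$ forces $\sigma_1=1$ and $Y_1=X_1$, which is exactly your bound $Q(x,B)\ge P(x,B)$ for $B\subset C_R$, written out via the first-exit decomposition. One small remark: the almost-sure-return caveat is not needed, because even for a sub-probability kernel $Q(x,C_R\setminus I_{T_1})\le 1-Q(x,I_{T_1})\le 1-P(x,I_{T_1})$, so the complement step does not fail.
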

\begin{proof}
    This is proved as long as noticing that,
    $X_1\in I_{T_1}\subset C_R\Rightarrow \sigma_1=1, Y_1=X_1\in I_{T_1}$.
\end{proof}

Now let $p_0\coloneqq \mu(I_P), p_1\coloneqq  \mu(I_{T_1}), p_2\coloneqq \mu(I_{T_2})$.
Since $\mu$ is invariance to $Q$, we have,
\begin{align*}
    p_1&=\int_{C_R}\mu(y)Q(y, I_{T_1})dy\geq \int_{I_P}\mu(y)Q(y, I_{T_1})dy\geq (1-a_Y)p_0\\
    p_2&\geq (1-b_Y)p_1, p_0+p_1+p_2 = \mu(I_A).
\end{align*}
We can solve this linear inequalities and have,
\begin{align*}
    p_1+p_2&\geq (1-a_Y)p_0+(1-a_Y)(1-b_Y)p_0\\
    \Rightarrow p_1+p_2&\geq (1-a_Y)(2-b_Y)[\mu(I_A)-(p_1+p_2)]\\
    \Rightarrow p_1+p_2&\geq \mu(I_A)\frac{(1-a_Y)(2-b_Y)}{1+(1-a_Y)(2-b_Y)} \geq \mu(I_A)\frac{(1-a)(2-b)}{1+(1-a)(2-b)}.
\end{align*}
Finally we plug in the lower bound of $\mu(I_A)$, then we get,
\begin{equation*}
    \nu(I_T|C_R)=\mu(I_T)\geq\frac{1-\eta}{1-\eta+T_{\textnormal{mix}}}\frac{(1-a)(2-b)}{1+(1-a)(2-b)}.
\end{equation*}

Finally, by the geometric ergodic property and utilizing \cref{eq:nu(C_R)isO_1} we have,
\begin{align*}
    d^{q\#\pi^*}_t(I_T)\geq &\nu(I_T)-M\rho^t=\Theta(\frac{1}{|\log(1/\varepsilon_q)|})-M\rho^t\\
    \frac{1}{H}\sum_{t=1}^Hd^{q\#\pi^*}_t(I_T)&\geq \Theta(\frac{1}{|\log(1/\varepsilon_q)|}) - M\frac{1}{H(1-\rho)}.
\end{align*}

Now we calculate the quantization error on the expert distribution (unquantized). We have the following claim.

\begin{claim}\label{claim:tail_bound}
For all $t\ge 1$,
\begin{equation}\label{eq:geometric_decay_bound}
\mathbb P\{X_t\in I_{T_1}\}\leq \exp\Big(-\frac{2k^2\varepsilon_q^2}{\tau_t^2}\Big),
\end{equation}
where
\begin{equation*}
    \tau_t^2:=\lambda^{2(t-1)}+\sigma_{\omega}^2\,\frac{1-\lambda^{2(t-1)}}{1-\lambda^2},t\ge 1.
\end{equation*}
\end{claim}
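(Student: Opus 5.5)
The plan is to show that, under the raw expert, $X_t$ is a symmetric, mean-zero random variable that is sub-Gaussian with variance proxy $\tau_t^2$. The claim then follows from a one-sided Gaussian tail bound, since $I_{T_1}\subset[2k\varepsilon_q,\infty)$ and
\begin{equation*}
\exp\Big(-\frac{(2k\varepsilon_q)^2}{2\tau_t^2}\Big)=\exp\Big(-\frac{2k^2\varepsilon_q^2}{\tau_t^2}\Big).
\end{equation*}
Under $\pi^*$ the chain is $X_{t+1}=g(X_t)+\omega_t$ with $g(x)\coloneqq Ax+B\arctan(x)$. The function $g$ is odd and $\lambda$-Lipschitz with $\lambda=A+B$, because $|g'(x)|=A+\frac{B}{1+x^2}\le\lambda$.

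\textbf{Step 1 (Lipschitz representation).} Write $X_1=Z_0$ and $\omega_s=\sigma_\omega Z_s$ with $Z_0,\ldots,Z_{t-1}$ i.i.d.\ $\mathcal N(0,1)$, so that $X_t=F_t(Z_0,\ldots,Z_{t-1})$ for a deterministic map $F_t$. Perturb a single coordinate $Z_s$ and propagate the perturbation forward with the $\lambda$-Lipschitz property of $g$. This gives the partial Lipschitz constants $\lambda^{t-1}$ for $Z_0$ and $\sigma_\omega\lambda^{t-1-s}$ for $Z_s$. For a general perturbation, telescope coordinate by coordinate and apply Cauchy--Schwarz:
\begin{equation*}
|F_t(z)-F_t(z')|\le\Big(\lambda^{2(t-1)}+\sigma_\omega^2\sum_{s=1}^{t-1}\lambda^{2(t-1-s)}\Big)^{1/2}\|z-z'\|_2=\tau_t\|z-z'\|_2,
\end{equation*}
after summing the geometric series. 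Hence $F_t$ is $\tau_t$-Lipschitz in the Euclidean norm.

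\textbf{Step 2 (mean zero).} The map $g$ is odd, and both $X_1$ and $\omega_s$ are symmetric. Induction shows $X_t\overset{d}{=}-X_t$, so $\mathbb E X_t=0$.

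\textbf{Step 3 (concentration).} Gaussian concentration for Lipschitz functions (Tsirelson--Ibragimov--Sudakov) gives
\begin{equation*}
\mathbb P\big(F_t(Z)-\mathbb E F_t(Z)\ge r\big)\le\exp\big(-r^2/(2\tau_t^2)\big).
\end{equation*}
Taking $r=2k\varepsilon_q$ and using $\{X_t\in I_{T_1}\}\subset\{X_t\ge 2k\varepsilon_q\}$ yields the claim.

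The main obstacle is Step 1: obtaining the exact joint Lipschitz constant $\tau_t$, rather than the cruder sum of the partial constants. The telescoping-plus-Cauchy--Schwarz argument is what delivers the square-summed form.
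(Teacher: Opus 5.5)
Your proposal is correct and follows the paper's proof essentially step for step. Like the paper, you write $X_t$ as a $\tau_t$-Lipschitz function of the standardized Gaussians $(X_1,\omega_1/\sigma_\omega,\dots,\omega_{t-1}/\sigma_\omega)$, use oddness of $g$ and symmetry of the noise to get $\mathbb E X_t=0$, and apply Gaussian concentration at $r=2k\varepsilon_q$. The only difference is cosmetic: you obtain the joint Lipschitz constant by telescoping plus Cauchy--Schwarz, whereas the paper bounds the partial derivatives via the chain rule and sums their squares, and both give exactly $\tau_t$.
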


\begin{proof}
Write $X_t = g_t(X_1,\omega_1,\dots,\omega_{t-1})$ as a deterministic function of the independent Gaussians $X_1\sim\mathcal N(0,1)$ and $\omega_i\sim\mathcal N(0,\sigma_{\omega}^2)$.
Since $f^{\pi^*}$ is $\lambda$-Lipschitz, the chain rule gives
$|\partial X_t/\partial X_1|\le \lambda^{t-1}$ and $|\partial X_t/\partial \omega_i|\le \lambda^{t-1-i}$.
Introduce the standardized variables $\tilde X_1 = X_1$, $\tilde\omega_i = \omega_i/\sigma_{\omega}$, all i.i.d.\ $\mathcal N(0,1)$.
The Lipschitz constant of the map $(\tilde X_1,\tilde\omega_1,\dots,\tilde\omega_{t-1})\mapsto X_t$ satisfies
\[
L_t^2 = \lambda^{2(t-1)} + \sum_{i=1}^{t-1}\lambda^{2(t-1-i)}\,\sigma_{\omega}^2 = \tau_t^2.
\]
By Gaussian concentration , $X_t - \mathbb E[X_t]$ is sub-Gaussian with proxy variance $\tau_t^2$.
Since $f^{\pi^*}$ is an odd function and the noise is symmetric, $\mathbb E[X_t]=0$ by induction.
Therefore $\mathbb P\{X_t\ge r\}\le\exp(-r^2/(2\tau_t^2))$ for every $r\ge 0$.
Since $I_{T_1}\subset[2k\varepsilon_q,\infty)$, setting $r=2k\varepsilon_q$ gives the claim.
\end{proof}

We now show the in-distribution quantization error is $O(\varepsilon_q)$. First we will require $\sigma_\omega=O(\frac{\varepsilon_q}{|\log\varepsilon_q|^{\alpha}})$ with $\alpha>\frac{1}{2}$. Define
\begin{equation*}
    t_*:=\min\Big\{t\ge 1:\exp\Big(-\frac{2k^2\varepsilon_q^2}{\tau_t^2}\Big)\le \varepsilon_q\Big\}.
\end{equation*}
Since $\tau_t^2$ is dominated by $\lambda^{2(t-1)}$ until the noise floor $\sigma_{\omega}^2/(1-\lambda^2)$ takes over, we have $t_*=O(|\log\varepsilon_q|)$.
For all $t\ge t_*$, $\tau_t^2\le \tau_{t_*}^2$, so the tail bound is at most $\varepsilon_q$.

Since $I_{T_2}$ is farther from the origin than $I_{T_1}$ and $X_t$ is symmetric around zero, $\mathbb P\{X_t\in I_{T_2}\}\le \mathbb P\{X_t\in I_{T_1}\}$ for all $t$.
Therefore,
\begin{align*}
\frac{1}{H}\sum_{t=1}^H\big[d_t^{\pi^*}(I_{T_1})+d_t^{\pi^*}(I_{T_2})\big]
&\le \frac{2\,t_*}{H}+\frac{1}{H}\sum_{t=t_*+1}^{H}2\exp\Big(-\frac{2k^2\varepsilon_q^2}{\tau_t^2}\Big)\\
&\le \frac{O(|\log\varepsilon_q|)}{H}+\varepsilon_q.
\end{align*}
Taking $H=\Omega(|\log\varepsilon_q|/\varepsilon_q)$, the first term is also $O(\varepsilon_q)$, giving
\begin{equation*}
    \frac{1}{H}\sum_{t=1}^H\big[d_t^{\pi^*}(I_{T_1})+d_t^{\pi^*}(I_{T_2})\big]=O(\varepsilon_q),
\end{equation*}
which is also the order of quantization error since the quantizer only induces constant error on $I_{T_1}$ and $I_{T_2}$, and the error on quantized expert is,
\begin{equation*}
    \frac{1}{H}\mathbb E_{q\#\pi^*}\left[\sum_{h=1}^H \|\arctan(x_h)-q(u_h)\|\right]\geq \Theta(\frac{1}{|\log(1/\varepsilon_q)|}) - M\frac{1}{H(1-\rho)}=\Omega(\frac{1}{|\log\varepsilon_q|}).
\end{equation*}

\qed

\subsection{Proof of \texorpdfstring{\cref{thm:model_agumented_upper_bound}}{Theorem 7}}

\begin{proof} 
    We state again our notation here. The expert trajectory is upper indexed by $^*$. The auxiliary trajectory is upper indexed by $^{\text{a}}$. The rolled-out trajectory is upper indexed by $^\text{L}$.

We construct the coupling in this way. First let us denote $\bar {\mathbb P}^{\pi^*, T, q}$ as the joint distribution of the expert trajectory, explicitly including the noise $\{\omega^*_{0:H},u^*_{1:H},\tilde u_{1:H}^*, x_{1:H+1}^*\}$, where we remind the readers again that $x_{h+1}^*=f_h(x_h^*,u_h^*, \omega_h^*), \tilde u_h^*=q(u_h^*)$. Notice that the marginal of $\{x^*_{1:H+1},\tilde u_{1:H}^*\}\sim \mathbb P^{q\#\pi^*, T\circ \rho}$.  For a learned policy and model $\hat \pi, \widehat{(T\circ \rho)}$, denote the trajectory as $\{x^a_{1:H+1}, u_{1:H}^a\}\sim \mathbb P^{\hat \pi, \widehat{(T\circ \rho)}}$. Now we define $\mu$ as the maximal coupling between $ \mathbb P^{\hat \pi, \widehat{(T\circ \rho)}}$ and $\mathbb P^{q\#\pi^*, T\circ \rho}$. Following the same spirit as in \cref{lemma:TV_coupling_is_shared_noise}, we know that we can make sure under $\mu$, $x_1^*=x_1^a$ $\mu$ a.s. Then we use the glueing lemma to glue $\bar {\mathbb P}^{\pi^*, T, q}$ and $\mu$ to become $\mu^\prime$. Finally, under $\mu^\prime$, we define the rolled-out trajectory as $x_1^L=x_1^*, u_h^L=u_h^a, \omega_h^L = \omega_h^*,x^L_{h+1}=f_h(x_h^L, u_h^L, \omega_h^L) $. Now notice that the marginal of $\{x_{1:H+1}^L,u_{1:H}^L\}$, and $\{x_{1:H+1}^a, u_{1:H}^a\}$ is exactly the measure generated by the rolled-out algorithm. This means that,
\begin{equation*}
    J(\pi^*)-J(\textnormal{alg})=\mathbb E_{\mu^\prime}\left[\sum_{h=1}^H r_h(x_h^*, u_h^*)-r_h(x_h^L,u_h^L)\right].
\end{equation*}

We denote the exponential summation modulus as $\gamma$ first, now notice that
    \begin{align*}
        &\mathbb E_{\mu^\prime} \left[\left(\sum_{h=1}^H r_h(x_h^*, u_h^*)-r_h(x_h^L,u_h^L) \right)\mathbb I\left(\mathop{\cap}_{h=1}^H\{\tilde u_h^* = u_h^L\} \right)\right] \\
        \leq &\mathbb E_{\mu^\prime} \left[\left(\sum_{h=1}^H L_r(\|x_h^*-x_h^L\|+\|u_h^*-u_h^a\|) \right)\mathbb I\left(\mathop{\cap}_{h=1}^H\{\tilde u_h^* = u_h^a\} \right)\right]\\
        \leq &\mathbb E_{\mu^\prime}\left[\sum_{h=1}^H L_r(\gamma(\|u_t^*-\tilde u_t^*\|)_{t=1}^{h-1}+\|u_h^*-\tilde u_h^*\|)\right],
    \end{align*}
    where the second inequality is because by the construction of $\mu^\prime$, the coupling between expert trajectory and rolled-out trajectory is a shared-noise coupling, and we assume global P-EIISS. Then, on the complement event, we have
    \begin{align*}
        &\mathbb E_{\mu^\prime} \left[\left(\sum_{h=1}^H r_h(x_h^*, u_h^*)-r_h(x_h^L,u_h^L) \right)\mathbb I\left( \mathop{\cup}_{h=1}^H\{\tilde u_h^* \neq u_h^a\}\right)\right]\\
        & \leq H\cdot \mu^\prime(\mathop{\cup}_{h=1}^H\{\tilde u_h^* \neq u_h^a\})\leq H\cdot\mu^\prime(\tau^*\neq \tau^a)\\
        &\leq H\cdot\textnormal{TV}(\mathbb P^{q\#\pi^*, (T\circ \rho)},\mathbb P^{\hat \pi, \hat {(T,\circ \rho)}}).
    \end{align*}

Now by \cref{prop:hellinger_bound_instance}, for any $\delta>0$, it holds with probability at least $1-\delta$,
\begin{equation*}
    \textnormal{TV}(\mathbb P^{q\#\pi^*, (T\circ \rho)},\mathbb P^{\hat \pi, \hat {(T,\circ \rho)}})\leq \sqrt{D_H^2\left(\mathbb P^{q\#\pi^*, (T\circ \rho)},\mathbb P^{\hat \pi, \widehat{T\circ \rho}}\right)}\lesssim \sqrt{\frac{\log(|\Pi|\delta^{-1})+\log(|\mathcal T|\delta^{-1})}{n}}.
\end{equation*}
Then we plug in $\gamma$, we have,
\begin{align*}
    \mathbb E_{\mu^\prime}\left[\sum_{h=1}^H L_r(\gamma(\|u_t^*-\tilde u_t^*\|)_{t=1}^{h-1}+\|u_h^*-\tilde u_h^*\|\right]\lesssim \mathbb E_{\mathbb P^{\pi^*, T}}\left[\sum_{h=1}^H \|u_t^*-\tilde u_t^*\|\right]=H\varepsilon_q.
\end{align*}
    
    Combining the bounds on two events, we have proved the desired bound.
\end{proof}

\section{Proofs from \texorpdfstring{\cref{sec:lower_bound}}{Section 5}}

\subsection{Proof of \texorpdfstring{\cref{thm:lb_deterministic_expert}}{Theorem 8}}

\begin{proof}
    Consider $\mathcal X = \mathcal U = [0,1]$. $T_0 =(1-\Delta)\delta_0 + \Delta\delta_{1}$. Denote $\pi^a_1(\cdot|x)=\delta_{0}$, $\pi^b_1(\cdot|x)=\delta_{x}$. We consider set of rewards $r^a_1(x,u)=1-u$, $r^b_1(x,u)=xu+(1-x)(1-u), r^a_h(x,u)=r^b_h(x,u)=x, h\geq 2$. 
    Let the two dynamic be: $f^a=r^a, f^b=r^b$. One can easily verify that $\mathbb P^{a}$ and $\mathbb P^{b}$ is $(\delta, \eta)$-stable with any $\delta>0$ and $\eta\left((r_k)_{k=1}^H\right)=r_1$.
        
    Consider two instances $(T_0, f^a, \pi^a, r_a)$ and $(T_0, f^b, \pi^b, r_b)$. For any algorithm that seeing the quantized data, denote $\hat \theta(S_n^q)\coloneqq  \int ud\hat \pi(S_n^q)(du|1)$
    \begin{align*}
        J_{a}(\pi^a) -J_{a}(\hat \pi) &= H\Delta[1-(1-\hat \theta(S_n^q))]+H(1-\Delta)\int_0^11-(1-u)d\hat \pi_1(u|0)\\&\geq H\Delta|\hat \theta(S_n^q)-\theta^*(\pi^a)|,\theta^*(\pi^a)\coloneqq  \mathbb E_{u_1\sim\pi_1^a(\cdot|1)}[u_1],\\
        J_{b}(\pi^b) -J_{b}(\hat \pi) &= H\Delta[1-\hat \theta(S_n^q))]+H(1-\Delta)\int_0^1ud\hat \pi_1(u|0)\\&\geq H\Delta|\theta^*(\pi^b)-\hat \theta(S_n^q)|,\theta^*(\pi^b)\coloneqq  \mathbb E_{u_1\sim\pi_1^b(\cdot|1)}[u_1].
    \end{align*}
    Therefore, we can define the metric as $\rho(\pi, \pi^\prime) \coloneqq  |\mathbb E_{u_1\sim \pi_1(|1)}[u_1]-\mathbb E_{a_1^\prime\sim \pi_1(|1)}[u_1^\prime]|$. Using the standard Le Cam two-point argument, the algorithm must have,
    \begin{equation*}
        \max\{\mathbb E[J_{r_a}(\pi_a)-J_{r_a}(\hat \pi)],\mathbb E[J_{r_b}(\pi_b)-J_{r_b}(\hat \pi)]\}\geq \frac{\Delta H}{4}(1-D_{\text{TV}}({\mathbb P^a}^{\otimes n}, {\mathbb P^b}^{\otimes n})).
    \end{equation*}
    Notice that it holds that,
    \begin{equation*}
        {\text{TV}}({\mathbb P^a}^{\otimes n}, {\mathbb P^b}^{\otimes n})\leq \sqrt {D^2_{H}({\mathbb P^a}^{\otimes n}, {\mathbb P^b}^{\otimes n})}= \sqrt{2}\sqrt{1-(1-\frac{1}{2}D^2_{H}(\mathbb P^a, \mathbb P^b))^n}.
    \end{equation*}
    Through computation, it holds that $1-\frac{1}{2}D^2_{H}(\mathbb P^a, \mathbb P^b))=1-\Delta$.
    Set $\Delta = \frac{1}{3n}$. By the limiting behavior of $(1-\frac{1}{3n})^n$, we know for large enough $n$, it holds that ${\text{TV}}({\mathbb P^a}^{\otimes n}, {\mathbb P^b}^{\otimes n})\leq 0.8$ (via numerical computation), then we have $\frac{H}{n}$ lower bound.

    Now we show how to achieve the additional $H\varepsilon_q$ for the lower bound. This is trivial. First we specify the quantization scheme. Consider a uniform binning quantizer \(q:[0,1]\to\mathbb{R}\). Partition \([0,1]\) into \(K\) consecutive subintervals of equal length \(1/K\),denoted \(\{I_k\}_{k=1}^K\), with \(I_1=[0,\tfrac{1}{K}]\) and \(I_K=[\tfrac{K-1}{K},1]\).
For \(k=2,\ldots,K-1\), the endpoints may be chosen arbitrarily as open or closed, provided that the \(I_k\) are pairwise disjoint and \(\bigcup_{k=1}^K I_k=[0,1]\). Let \(m_k=\tfrac{2k-1}{2K}\) be the midpoint of \(I_k\). Then
\[
q(x)=m_k \quad \text{for } x\in I_k,k=1,\ldots,K.
\]

Notice that under such a quantizer,
\begin{equation*}
    \max_{i\in[K]}\textnormal{diam}(q^{-1}(\bar u_i))=\varepsilon_q, \frac{1}{H}\mathbb E^{\pi^*} [\sum_{h=1}^H\|u_h^*-q(u_h^*)\|]\leq \varepsilon_q.
\end{equation*}
If we change $\pi^a$ into $\pi^{\tilde a}$ and $\pi^b$ into $\pi^{\tilde b}$ as
    \begin{align*}
        &\pi^{\tilde a}_1(u|x) = \delta_{\varepsilon_q}, \pi_1^{\tilde b}(u|x)=x\delta_{1-\varepsilon_q}+(1-x)\delta_{\varepsilon_q}, 
    \end{align*}
    and let,
    \begin{align*}
    r_u^0(u) &= u\mathbb I(u\leq 1-\varepsilon_q)+(2-2\varepsilon_q -u)\mathbb I(u\geq 1-\varepsilon_q) \\
        r_u^1(u) &= (1-2\varepsilon_q+u)\mathbb I(u\leq \varepsilon_q)+(1-u)\mathbb I(u>\varepsilon_q)\\
        r_1^{\tilde a}(x,u)&=r_u^1(u), r_1^{\tilde b}(x,u) = xr_u^0(u)+(1-x)r_u^1(u),\\
        f^{\tilde a}_1&=r^{\tilde a}_1, f^{\tilde b}=r^{\tilde b}_1, f_h^{\tilde a}(x,u)=f_b^{\tilde b}(x,u)=x.
    \end{align*}
Notice that we have,
    \begin{align*}
        &J_{a}(\pi^a)-J_{a}(\hat \pi)\geq H\Delta\left[\int_{0}^{\varepsilon_q}ud\hat \pi_1(u|1)+\int_{\varepsilon_q}^1 \varepsilon_qd\hat \pi(u|1)\right]+H(1-\Delta)\left[\int_{0}^{\varepsilon_q}ud\hat \pi_1(u|0)+\int_{\varepsilon_q}^1 \varepsilon_qd\hat \pi(u|0)\right] \\
        &J_{\tilde a}(\pi^{\tilde a})-J_{\tilde a}(\hat \pi)\geq H\Delta\int_0^{\varepsilon_q}(\varepsilon_q-u)d\hat \pi_1(u|1)+H(1-\Delta)\int_0^{\varepsilon_q}(\varepsilon_q-u)d\hat \pi_1(u|0)\\
        &\Rightarrow J_{a}(\pi^a)-J_{r^{a}}(\hat \pi)+J_{r^{\tilde a}}(\pi^{\tilde a})-J_{r^{\tilde a}}(\hat \pi)\geq H\Delta \varepsilon_q+H(1-\Delta)\varepsilon_q=H\varepsilon_q.
    \end{align*}
    This is because when $\pi^a$ changes to $\pi^{\tilde a}$, $\hat \pi$ will remain unchanged since it can only access the quantized data, and the distribution of the quantized data under $\pi^a$ and $\pi^{\tilde a}$ are the same for the given quantizer. One can also verify that it holds,
\begin{align*}
    &J_{b}(\pi^b)-J_b(\hat \pi)\geq H\Delta\left[\int_0^{1-\varepsilon_q}\varepsilon_q d\hat \pi_1(u|1)+\int_{1-\varepsilon_q}^1(1-u)d\hat \pi(u|1)\right]+H(1-\Delta)\left[\int_{0}^{\varepsilon_q}ud\hat \pi_1(u|0)+\int_{\varepsilon_q}^1 \varepsilon_qd\hat \pi(u|0)\right]\\
    & J_{\tilde b}(\pi^{\tilde b})-J_{\tilde b}(\hat \pi) \geq H\Delta\int_{1-\varepsilon_q}^1 [(1-\varepsilon_q)-(2-2\varepsilon_q-u)]d\hat \pi_1(u|1)+H(1-\Delta)\int_0^{\varepsilon_q}(\varepsilon_q-u)d\hat \pi_1(u|0)\\
    &\Rightarrow J_{b}(\pi^b)-J_b(\hat \pi)+J_{\tilde b}(\pi^{\tilde b})-J_{\tilde b}(\hat \pi)\geq H\varepsilon_q.
\end{align*}
Then we have,
        \begin{align*}
        &J_{a}(\pi^a)-J_{a}(\hat \pi)+J_{\tilde a}(\pi^{\tilde a})-J_{\tilde a}(\hat \pi)\geq \frac{1}{2}J_{{a}}(\pi^a)-J_{a}(\hat \pi)+\frac{1}{2}H\varepsilon_q,\\
        &\max\{J_{a}(\pi^a)-J_{a}(\hat \pi),J_{\tilde a}(\pi^{\tilde a})-J_{\tilde a}(\hat \pi)\}\geq \frac{1}{4}J_{{a}}(\pi^a)-J_{{a}}(\hat \pi)+\frac{1}{4}H\varepsilon_q,\\
         &\max\{\mathbb E[J_{a}(\pi_a)-J_{a}(\hat \pi)],\mathbb E[J_{\tilde a}(\pi^{\tilde a})-J_{{\tilde a}}(\hat \pi)],\mathbb E[J_{b}(\pi_b)-J_{b}(\hat \pi)],\mathbb E[J_{{\tilde b}}(\pi^{\tilde b})-J_{{\tilde b}}(\hat \pi)]\}\\
         &\geq \max\{\frac{1}{4}\mathbb E[J_{a}(\pi_a)-J_{a}(\hat \pi)]+\frac{1}{4}H\varepsilon_q, \frac{1}{4}\mathbb E[J_{b}(\pi_b)-J_{b}(\hat \pi)]+\frac{1}{4}H\varepsilon_q\}\\
         &\geq c(\frac{H}{n}+H\varepsilon_q). \qedhere
    \end{align*}
\end{proof}

\subsection{Proof of \texorpdfstring{\cref{thm:lb_stochastic_expert}}{Theorem 9}}

\begin{proof}
    We will show that it suffices to prove the lower bound for the simple distribution estimation problem for the initial action distribution. Consider $\mathcal X = \mathcal A = [-1,2]$. We construct the dynamic as $f_1(x,u) = u, f_h(x,u) = x, h\geq 2$. And a policy class $\Pi=\{\pi^a, \pi^b\}$ with 
\begin{align*}
    \pi^a_1(u|x)&=(\frac{1}{2}+\Delta)\delta_{1}(u)+(\frac{1}{2}-\Delta)\delta_{0}(u), \pi_1^b(u|x) = (\frac{1}{2}-\Delta)\delta_{1}(u)+(\frac{1}{2}+\Delta)\delta_{0}(u),
\end{align*}
and $\pi^a_h=\pi^b_h=\delta_1, h\geq 2$. The policy after $h\geq 2$ does not matter. Let $\mathbb P^a$ and $\mathbb P^b$ denote the distribution of a single trajectory $x_1,a_1,\ldots, x_H$ under $(f,\pi^a)$ and $(f,\pi^b)$. Consider reward as 
\begin{align*}
    r^a_1(x_1,u_1) &= u_1, r^a_h(x_h, u_h)=x_h,h\geq 2\\
    r^b_1(x_1,u_1) &= 1-u_1, r^a_h(x_h, u_h)=1-x_h,h\geq 2.
\end{align*}
Notice that under different combinations of reward and policy, denote $\hat \theta(S_n^q) = \int_{-1}^2ud\hat\pi_1(u)(S_n^q)$, it has
\begin{align*}
    J_{r^a}(\pi^a) - J_{r^a}(\hat \pi) &= (\frac{1}{2}+\Delta)H-H\hat \theta(S_n^q),\\
    J_{r^b}(\pi^a)-J_{r^b}(\hat \pi) &= (\frac{1}{2}-\Delta)H-H[1-\hat \theta(S_n^q)],\\
    J_{r^a}(\pi^b)-J_{r^a}(\hat \pi)&= (\frac{1}{2}-\Delta)H- H\hat \theta(S_n^q),\\
    J_{r^b}(\pi^b)-J_{r^b}(\hat \pi)&= (\frac{1}{2}+\Delta)H-H[1-\hat \theta(S_n^q)].
\end{align*}

Notice that we have,

\begin{align*}
&\max\Big\{
\mathbb P^{a}\big[J_{r^{a}}(\pi^{a})-J_{r^{a}}(\hat\pi)\ge \Delta H\big],
\mathbb P^{a}\big[J_{r^{b}}(\pi^{a})-J_{r^{b}}(\hat\pi)\ge \Delta H\big],\\
&\qquad\quad
\mathbb P^{b}\big[J_{r^{b}}(\pi^{b})-J_{r^{b}}(\hat\pi)\ge \Delta H\big],
\mathbb P^{b}\big[J_{r^{a}}(\pi^{b})-J_{r^{a}}(\hat\pi)\ge \Delta H\big]
\Big\}\\[2mm]
&\ge \max\Big\{
\mathbb P^{a}\big[ \big(\tfrac12+\Delta\big)H-\hat \theta(S_n^q)H \ge \Delta H \big],
\mathbb P^{a}\big[ \hat \theta(S_n^q)H-\big(\tfrac12+\Delta\big)H \ge \Delta H \big],\\
&\qquad\quad
\mathbb P^{b}\big[ \hat \theta(S_n^q)H-\big(\tfrac12-\Delta\big)H \ge \Delta H \big],
\mathbb P^{b}\big[ \big(\tfrac12-\Delta\big)H-\hat \theta(S_n^q)H \ge \Delta H \big]
\Big\}\\[2mm]
&\ge \tfrac12 \max\Big\{
\mathbb P^{a}\big[ \big|(\tfrac12+\Delta)-\hat \theta(S_n^q)\big|H \ge \Delta H \big],
\mathbb P^{b}\big[ \big|\hat \theta(S_n^q)-(\tfrac12-\Delta)\big|H \ge \Delta H \big]
\Big\}\\
&= \tfrac12 \max\Big\{
\mathbb P^{a}\big[ \big|(\tfrac12+\Delta)-\hat \theta(S_n^q)\big| \ge \Delta \big],
\mathbb P^{b}\big[ \big|\hat \theta(S_n^q)-(\tfrac12-\Delta)\big| \ge \Delta \big]
\Big\}\\
&\ge \tfrac14\Big(
\mathbb P^{a}\big[ \big|(\tfrac12+\Delta)-\hat \theta(S_n^q)\big| \ge \Delta \big]
+\mathbb P^{b}\big[ \big|\hat \theta(S_n^q)-(\tfrac12-\Delta)\big| \ge \Delta \big]
\Big).
\end{align*}
Now consider the hypothesis test task that distinguishing between $\pi^a$ and $\pi^b$. The decision rule is accepting $\pi^a$ if $|\hat \theta-(\tfrac12+\Delta)|\leq \Delta$ and accepting $\pi^b$ otherwise, then by Bretagnolle-Huber inequality, we have,
\begin{equation*}
    \mathbb P^{a}\big[ \big|(\tfrac12+\Delta)-\hat \theta(S_n^q)\big| \ge \Delta \big]
+\mathbb P^{b}\big[ \big|\hat \theta(S_n^q)-(\tfrac12-\Delta)\big| \ge \Delta \big]\geq \mathbb P^a[\text{rejecting }\pi^a] + \mathbb P^b[\text{rejecting }\pi^b]\geq 1-D_{\mathrm{TV}}({\mathbb P^{a}}^{\otimes n},{\mathbb P^{b}}^{\otimes n}).
\end{equation*}

And it holds that,
\begin{align*}
    D_\text{TV}({\mathbb P^{a}}^{\otimes n},{\mathbb P^{b}}^{\otimes n})\leq \sqrt {1-(1-4\Delta^2)^n}.
\end{align*}
Take $\Delta=\frac{3}{5}\frac{1}{\sqrt n}$, then for large $n$, it holds that $D_\text{TV}({\mathbb P^{a}}^{\otimes n},{\mathbb P^{b}}^{\otimes n})\leq \frac78$. Then it holds that,
\begin{align*}
    &\max\Big\{
\mathbb P^{a}\big[J_{r^{a}}(\pi^{a})-J_{r^{a}}(\hat\pi)\ge \frac{3H}{5\sqrt n}\big],
\mathbb P^{a}\big[J_{r^{b}}(\pi^{a})-J_{r^{b}}(\hat\pi)\ge \frac{3H}{5\sqrt n}\big],\\
&\qquad\quad
\mathbb P^{b}\big[J_{r^{b}}(\pi^{b})-J_{r^{b}}(\hat\pi)\ge \frac{3H}{5\sqrt n}\big],
\mathbb P^{b}\big[J_{r^{a}}(\pi^{b})-J_{r^{a}}(\hat\pi)\ge \frac{3H}{5\sqrt n}\big]
\Big\}\geq \frac{1}{8}.
\end{align*}

Finally, we introduce the extra $H\varepsilon_q$ term. Let $q$ be defined the same way as in the proof of \cref{thm:lb_deterministic_expert}.   Let,
\begin{align*}
    \pi^{\tilde a}_1(u|x)&=(\frac12+\Delta)\delta_{1+\varepsilon_q}(u)+(\frac{1}{2}-\Delta)\delta_{\varepsilon_q}(u), \pi^{\hat a}_1(u|x)=(\frac12+\Delta)\delta_{1-\varepsilon_q}(u)+(\frac{1}{2}-\Delta)\delta_{-\varepsilon_q}(u)\\
    \pi^{\tilde b}_1(u|x)&=(\frac12-\Delta)\delta_{(1-\varepsilon_q)}(u)+(\frac{1}{2}+\Delta)\delta_{-\varepsilon_q}(u), \pi_1^{\hat b}(u|x)=(\frac12-\Delta)\delta_{1+\varepsilon_q}(u)+(\frac{1}{2}+\Delta)\delta_{\varepsilon_q}(u).
\end{align*}

Now notice that,  we should have,
\begin{align*}
    \mathbb P^{\tilde a}[J_{r^a}(\pi^{\tilde a})-J_{r^a}(\hat \pi)\geq \frac{3H}{5\sqrt n}+H\varepsilon_q]&=\mathbb P^a[J_{r^a}(\pi^a)-J_{r^a}(\hat \pi)\geq\frac {3H} {5\sqrt{n}}]\\
    \mathbb P^{\hat a}[J_{r^b}(\pi^{\hat a})-J_{r^b}(\hat \pi)\geq \frac{3H}{5\sqrt n}+H\varepsilon_q]&=\mathbb P^{a}[J_{r^b}(\pi^{a})-J_{r^b}(\hat \pi)\geq \frac{3H}{5\sqrt n}]\\
    \mathbb P^{\tilde b}[J_{r^a}(\pi^{\tilde b})-J_{r^a}(\hat \pi)\geq \frac{3H}{5\sqrt n}+H\varepsilon_q]&= \mathbb P^{b}[J_{r^a}(\pi^a)-J_{r^a}(\hat \pi)\ge \frac{3H}{5\sqrt n}]\\
    \mathbb P^{\hat b}[J_{r^b}(\pi^{\hat b})-J_{r^b}(\hat \pi)\geq \frac{3H}{5\sqrt n}+H\varepsilon_q]&= \mathbb P^{b}[J_{r^b}(\pi^b)-J_{r^b}(\hat \pi)\geq \frac{3H}{5\sqrt n}].
\end{align*}
This is because $\hat \pi$ should be the same under $\mathbb P^{\tilde a}$ and $\mathbb P^{a}$. Therefore we have,
\begin{align*}
    &\max\Big\{
\mathbb P^{a}\big[J_{r^{a}}(\pi^{\tilde a})-J_{r^{a}}(\hat\pi)\ge  \frac{3H}{5\sqrt n}+\frac{1}{2}H\varepsilon_q\big],
\mathbb P^{a}\big[J_{r^{b}}(\pi^{\hat a})-J_{r^{b}}(\hat\pi)\ge  \frac{3H}{5\sqrt n}+\frac{1}{2}H\varepsilon_q\big],\\
&\qquad\quad
\mathbb P^{b}\big[J_{r^{b}}(\pi^{\tilde b})-J_{r^{b}}(\hat\pi)\ge  \frac{3H}{5\sqrt n}+\frac{1}{2}H\varepsilon_q\big],
\mathbb P^{b}\big[J_{r^{a}}(\pi^{\hat b})-J_{r^{a}}(\hat\pi)\ge  \frac{3H}{5\sqrt n}+\frac{1}{2}H\varepsilon_q\big]
\Big\}\geq\frac{1}{8}. \qedhere
\end{align*}
\end{proof}

\end{document}